\newtheorem{theorem}{Theorem}
\newtheorem{lemma}{Lemma}
\newtheorem{definition}{Definition}
\newcommand{\RR}{\mathbb{R}}
\newcommand{\cA}{\mathcal{A}}
\newcommand{\cC}{\mathcal{C}}
\newcommand{\cE}{\mathcal{E}}
\newcommand{\cF}{\mathcal{F}}
\newcommand{\cS}{\mathcal{S}}
\newcommand{\cN}{\mathcal{N}}
\newcommand{\cU}{\mathcal{U}}
\newcommand{\argmax}{\mathrm{argmax}}
\newcommand{\opt}{\mathrm{Opt}}
\newcommand{\popt}{\mathrm{POpt}}
\newcommand{\trace}{\mathrm{trace}}
\newcommand{\abs}[1]{\left| #1 \right|}
\newcommand{\bOne}[1]{\mathds{1} \! \left\{#1\right\}}
\newcommand{\bracket}[1]{\left(#1\right)}
\newcommand{\norm}[1]{\left\| #1 \right\|}
\newcommand{\set}[1]{\left\{ #1 \right\}}
\newcommand{\EE}[1]{\mathbb{E} \left[#1\right]}
\newcommand{\PP}[1]{\mathbb{P} \left(#1\right)}
\newcommand{\etal}{\emph{et al.}}
\newcommand{\bw}{\mathbf{w}}
\mathchardef\mhyphen="2D
\newcommand{\oimetc}{{\tt OIM\mhyphen ETC}}
\newcommand{\ltlinucb}{{\tt LT\mhyphen LinUCB}}
\newcommand{\oracle}{{\tt Oracle}}
\newcommand{\pairoracle}{{\tt PairOracle}}
\newif\ifsup\supfalse
\title{Online Influence Maximization under Linear Threshold Model}
\author{
    Shuai Li$^1$\thanks{Corresponding author} \qquad Fang Kong$^1$ \qquad Kejie Tang$^1$ \qquad Qizhi Li$^1$ \qquad Wei Chen$^2$ \\ 
    $^1$Shanghai Jiao Tong University \qquad $^2$Microsoft Research  \\
    {\small \texttt{\{shuaili8,fangkong,tangkj00,qizhili\}@sjtu.edu.cn}\hskip1.8em
    \texttt{weic@microsoft.com}
    }
}
\begin{document}

\maketitle

\begin{abstract}
Online influence maximization (OIM) is a popular problem in social networks to learn influence propagation model parameters and maximize the influence spread at the same time. Most previous studies focus on the independent cascade (IC) model under the edge-level feedback. In this paper, we address OIM in the linear threshold (LT) model. Because node activations in the LT model are due to the aggregated effect of all active neighbors, it is more natural to model OIM with the node-level feedback. And this brings new challenge in online learning since we only observe aggregated effect from groups of nodes and the groups are also random. 
Based on the linear structure in node activations, we incorporate ideas from linear bandits and design an algorithm $\ltlinucb$ that is consistent with the observed feedback. 
By proving group observation modulated (GOM) bounded smoothness property, a novel result of the influence difference in terms of the random observations, we provide a regret of order $\tilde{O}(\mathrm{poly}(m)\sqrt{T})$, where $m$ is the number of edges and $T$ is the number of rounds. 
This is the first theoretical result in such order for OIM under the LT model. 
In the end, we also provide an algorithm $\oimetc$ with regret bound $O(\mathrm{poly}(m)\ T^{2/3})$, which is model-independent, simple and has less requirement on online feedback and offline computation.
\end{abstract}
  
%!TEX root =  main.tex

% \vspace{-0.1cm}
\section{Introduction}
\label{sec:introduction}
% \vspace{-0.1cm}

Social networks play an important role in spreading information in people's life. In viral marketing, companies wish to broadcast their products by making use of the network structure and characteristics of influence propagation. Specifically, they want to provide free products to the selected users (seed nodes), let them advertise through the network and maximize the purchase. There is a budget of the free products and the goal of the companies is to select the optimal seed set to maximize the influence spread. This problem is called influence maximization (IM) \cite{Kempe2003} and has a wide range of applications including recommendation systems, link prediction and information diffusion. 

In the IM problem, the social network is usually modeled as a directed graph with nodes representing users and directed edges representing influence relationship between users. IM studies how to select a seed set under a given influence propagation model to maximize the influence spread when the weights are known. Independent cascade (IC) model and linear threshold (LT) model \cite{Kempe2003} are two most widely used models to characterize the influence propagation in a social network, and both models use weights on edges as model parameters.

In many real applications, however, the weights are usually unknown in advance. 
For example, in viral marketing, it is unrealistic to assume that the companies know the influence abilities beforehand. A possible solution is to learn those parameters from the diffusion data collected in the past \cite{LearnFromPast1,LearnFromPast3}.
But this method lacks the ability of adaptive learning based on the need of influence maximization. 
This motivates the studies on the online influence maximization (OIM) problem \cite{SiyuLei2015,WeiChen2013,WeiChen2016,WeiChen2017,zhengwen2017nips,IMFB2019,Sharan2015-nodelevel,Model-Independent2017}, where the learner tries to estimate model parameters and maximize influence in an iterative manner.

The studies on OIM are based on the multi-armed bandit (MAB) problem, which is a classical online learning framework and has been well studied in the literature \cite{lattimorebandit}. 
MAB problem is formulated as a $T$-round game between a learner and the environment. 
In each round, the learner needs to decide which action to play and the environment will then reveal a reward according to the chosen action. The objective of the learner is to accumulate as many rewards as possible. An MAB algorithm needs to deal with the tradeoff between exploration and exploitation: whether the learner should try actions that has not been explored well yet (exploration) or focus on the action with the best performance so far (exploitation). 
Two algorithms, the explore-then-commit (ETC) \cite{garivier2016explore} and the upper confidence bound (UCB) \cite{auer2002finite}, are widely followed in the stochastic MAB setting, where the reward of each action follows an unknown but fixed distribution.

Most existing works in OIM focus on IC model under edge-level feedback \cite{WeiChen2013,WeiChen2016,WeiChen2017,zhengwen2017nips,IMFB2019}, where the information propagates independently between pairs of users and the learner can observe the liveness of individual edges as long as its source node is influenced. The independence assumption makes the formulation simple but a bit unrealistic. 
Often in the real scenarios, the influence propagations are correlated with each other. 
The LT model is usually used to model the herd behavior that a person is more likely to be influenced if many of her friends are influenced \cite{ThresholdModel2007,ThresholdModel1978,khalil2014scalable}. 
Thus for the LT model, it is more natural to use the node-level feedback where we only observe the node activations, since it is hard to pinpoint which neighbor or neighbors actually contribute to an activation in a herd behavior.

In this paper, we first formulate the OIM problem under the LT model with the node-level feedback and distill effective information based on the feedback. 
The main challenge is that only the aggregated group effect on node activations can be observed and the aggregated groups are also random. 
Based on the linear structure of the LT model, we incorporate the idea of linear bandits and propose the $\ltlinucb$ algorithm, whose update mechanism is consistent with the distilled information.
By proving group observation modulated (GOM) bounded smoothness, a key property on the influence spread under two different weight vectors, we can bound the regret. 
Such a property is similar to the triggering probability modulated (TPM) bounded smoothness condition under the IC model with edge-level feedback \cite{WeiChen2017}, but the derivation in our case under the node-level feedback is more difficult. 
The regret is of order $O(\mathrm{poly}(m) \sqrt{T} \log T)$, where $m$ is the number of edges and $T$ is the number of rounds. 
Our $\ltlinucb$ is the first OIM algorithm under the LT model that achieves the regret in this order. 
Finally we give $\oimetc$ algorithm, applying to both IC and LT with node-level feedback. Though simple, it has less requirement on the observed feedback and the offline computation, and it achieves the regret bound $O(\mathrm{poly}(m) T^{2/3}), O(\mathrm{poly}(m) \log(T) / \Delta^2)$.

%!TEX root =  main.tex

% \vspace{-0.3cm}
\paragraph{Related Work}
%\vspace{-0.3cm}

The problem of IM was first proposed as a discrete optimization problem by Kempe \etal~\cite{Kempe2003}.
Since then, various aspects of IM have been extensively studied (see~\cite{Wei2013Information,LiFWT18} for surveys in this area).
Two most popular models in this field are the IC and LT models. 
The former assumes that the influence between pairs of users are independent and the latter characterizes the herd behavior. Some works \cite{wang2012scalableIC,jung2012irieIC,Kempe2003,IMM2015} study the IC model and some  \cite{chen2010scalableLT,goyal2011simpathLT,Kempe2003,IMM2015} study the LT model. They all assume the weights on the edges are known and focus on the model properties and approximated solutions. We treat them as the \textit{offline} setting.

When the weight vectors are unknown, Chen \etal~\cite{WeiChen2016,WeiChen2017} study the problem in the \textit{online} setting, selecting seed sets as well as learning the parameters.
They study the IC model with edge-level feedback, propose CUCB algorithm and show that CUCB achieves the distribution-dependent and distribution-independent regret bounds of $O(\mathrm{poly}(m)\log(T))$ and $O(\mathrm{poly}(m)\sqrt{T})$ respectively. 
Later Wen \etal~\cite{zhengwen2017nips} consider the large-scale setting and assume the edge probability is a linear function of the edge's feature vector. 
They provide a LinUCB-based algorithm with $O(dmn\sqrt{T}\ln(T))$ worst-case regret, where $d$ is the feature dimension and $n$ is the number of nodes.
Wu \etal~\cite{IMFB2019} assume that each edge probability can be decomposed as the product of the influence probability of the start node and the susceptibility probability of the end node motivated by network assortativity.
All these works study the IC model with edge-level feedback.

Vaswani \etal~\cite{Model-Independent2017} uses a heuristic objective function for OIM and brings up a model-independent algorithm under the pairwise feedback, where a node is influenced by a seed node or not. This applies to both IC and LT and the feedback scheme is relaxed than the edge-level feedback. Unfortunately, however, the heuristic objective has no theoretical approximation guarantee. 
Also, Vaswani \etal~\cite{Sharan2015-nodelevel} study the IC model with node-level feedback about the estimation gap to that under the edge-level feedback but has no regret analysis. A report \cite{vaswanilearningLT} studies the LT model with node-level feedback by optimization approaches but without theoretical guarantees.
There is another work \cite{lattimore2015linear} studying the problem of linear multi-resource allocation, which can be formulated as a bipartite LT model. But they assume every node in the left partition (resources) is selected and the algorithm needs to assign allocations for each pair of left node and right node (tasks) representing the corresponding allocation of resources on tasks. Thus the problem is different from our OIM.
% \fang{There is also a work \cite{lattimore2015linear} studying the problem of linear multi-resource allocation, where the effect of all resources on a particular task in this setting can be equivalent to the group activation effect of in-neighbors on a node in the LT model. However this problem can be modeled only on the bipartite graph with linear objective, and their action is to allocate the quantity of each left node (resource) on each right node (task) but not to select nodes as OIM.}
The OIM problem under LT has been open for several years. We are the first to provide a reasonable formulation with an algorithm of regret $\tilde{O}(\sqrt{T})$.

OIM is a variant of combinatorial MAB (CMAB) \cite{WeiChen2013,kong2020survey}, where in each round the learner selects a combination of (base) arms. Most works \cite{kveton2015tightcombinatorial, kveton2014matroid} study stochastic setting with the linear objective and semi-bandit feedback where the learner can observe the selected base arm's reward and the reward of the action is a linear function of these base arms' rewards. 
CMAB in the stochastic setting with the linear objective and bandit feedback, where only the linear reward of the selected combination can be observed, is a special case of linear bandits. In the linear bandit setting, the learner selects a vector each round and the reward is a linear function of the selected vector action \cite{auer2002using}. The most popular method to solve it is to construct confidence ellipsoids \cite{dani2008stochastic,Linearbandits,rusmevichientong2010linearly}. 
There are also works \cite{cesa2012combinatorialbanditfeedback,combes2015combinatorialfeedback} for CMAB in the adversarial setting and bandit feedback. 
But OIM is different: its objective function is non-linear and is dependent on unchosen and probabilistically triggered base arms.

OIM is related to the problem of online learning with graph feedback \cite{Alon15} where the learner can observe the feedback of unchosen arms based on the graph structure.
Though some of them study random graphs \cite{liu2018information,li2020stochastic,kocak2016online} where the set of observed arms is random, the settings are different. 
Under the graph feedback, the observations of unchosen arms are additional and the reward only depends on the chosen arms, while under the OIM, the additional observations also contribute to the reward. Cascading bandits \cite{kveton2015cascading,li2016contextual} also consider triggering on any selected list of arms and the triggering is in the order of the lists. Compared with graph feedback and OIM setting, its triggering graph is determined by the learning agent, not the adversary.

As a generalization of graph feedback, partial monitoring \cite{bartok2014partial} is also related to OIM. Most works in this direction, if applied directly to the OIM setting, are inefficient due to the exponentially large action space. 
Lin \etal~\cite{lin2014combinatorialpm} study a combinatorial version of partial monitoring and their algorithm provides a regret of order $O(\mathrm{poly}(m)T^{2/3}\log T)$ for OIM with LT. Our $\oimetc$, however, has regret bounds of $O(\mathrm{poly}(m)T^{2/3})$ (better in the order of $T$) as well as a problem-dependent bound $O(\mathrm{poly}(m)\log T)$.

%!TEX root =  main.tex

% \vspace{-0.3cm}
\section{Setting}
\label{sec:setting}
% \vspace{-0.1cm}

This section describes the setting of online influence maximization (OIM) under linear threshold (LT) model. The IM problem characterizes how to choose the seed nodes to maximize the influence spread on a social network. The network is usually represented by a directed graph $G=(V,E)$ where $V$ is the set of users and $E$ is the set of relationships between users. Each edge $e$ is associated with a weight $w(e) \in [0,1]$. For example, an edge $e=(u,v)=:e_{u,v}$ could represent user $v$ follows user $u$ on Twitter and $w(e)$ represents the `influence ability' of user $u$ on user $v$. Denote $w=(w(e))_{e\in E}$ to be the weight vector and $n=\abs{V}, m=\abs{E}, D$ to be node number, edge number and the propagation diameter respectively, where the propagation diameter is defined as the length of the longest simple path in the graph. Let $N(v)=N^{\mathrm{in}}(v)$ be the set of all incoming neighbors of $v$, shortened as in-neighbors. 

Recall that under IC model, each edge is alive with probability equal to the associated weight independently and a node is influenced if there is a directed path connecting from a seed node in the realized graph. Compared to the IC model, the LT model does not require the strong assumption of independence and describes the joint influence of the active in-neighbors on a user, reflecting the herd behavior that often occurs in real life \cite{ThresholdModel2007,ThresholdModel1978,khalil2014scalable}.

Now we describe in detail the diffusion process under the LT model. Suppose the seed set is $S$. In the beginning, each node is assigned with a threshold $\theta_v$, which is independently uniformly drawn from $[0,1]$ and characterizes the susceptibility level of node $v$. Denote $\theta=(\theta_v)_{v\in V}$ to be the threshold vector. Let $S_\tau$ be the set of activated nodes by the end of time $\tau$. At time $\tau=0$, only nodes in the seed set are activated: $S_0=S$. At time $\tau+1$ with $\tau \ge 0$, for any node $v \notin S_{\tau}$ that has not been activated yet, it will be activated if the aggregated influence of its active in-neighbors exceeds its threshold:
$
\sum\limits_{u \in N(v)\cap S_{\tau}} w(e_{u,v}) \geq \theta_v
$.
Such diffusion process will last at most $D$ time steps. 
The size of the influenced nodes is denoted as $r(S,w,\theta) = \abs{S_{D}}$. Let $r(S, w) = \EE{r(S, w, \theta)}$ be the \textit{influence spread} of seed set $S$ where the expectation is taken over all random variables $\theta_v$'s. The IM problem is to find the seed set $S$ with the size at most $K$ under weight vector $w$ to maximize the influence spread,
$
\max_{S \in \cA} r(S,w)
$,
where $\cA = \set{S \subset V: \abs{S} \le K}$ is the \textit{action set} for the seed nodes. 
We also adopt the usual assumption that $\sum_{u \in N(v)}w(e_{u,v}) \le 1$ for any $v \in V$. This assumption makes LT have an equivalent live-edge graph formulation like IC model \cite{Kempe2003,Wei2013Information}.
The term of graph $G$ and seed size $K$ will be omitted when the context is clear. Here we emphasize that the model parameters are the weights $w$ while the threshold vector $\theta$ is not model parameter (which follows uniform distribution).

The (offline) IM is NP-hard under the LT model but it can be approximately solved \cite{Kempe2003,IMM2015}. For a fixed weight vector $w$, let $S_w^{\opt}$ be an optimal seed set and $\opt_w$ be its corresponding influence spread: $S_w^{\opt} \in \argmax_{S \in \cA} r(S,w)$ and $\opt_w = r(S_w^{\opt},w)$. Let \oracle\ be an (offline) oracle that outputs a solution given the weight vector as input. Then for $\alpha,\beta\in [0,1]$, the \oracle \ is an $(\alpha,\beta)$-approximation if $\PP{r(S',w) \geq \alpha \cdot \opt_w} \geq \beta$ where $S'=\oracle(w)$ is a solution returned by the \oracle \ for the weight vector $w$. Note when $\alpha=\beta=1$ the oracle is \textit{exact}.

The online version is to maximize the influence spread when the weight vector (or the model parameter) $w=(w(e))_{e\in E}$ is unknown. In each round $t$, the learner selects a seed set $S_t$, receives the observations and then updates itself accordingly. 
For the type of observations, previous works on IC mostly assume the edge-level feedback: the learner can observe the outgoing edges of each active node \cite{WeiChen2016,zhengwen2017nips,IMFB2019}. 
But for the LT model, it is not very realistic to assume the learner can observe which in-neighbor influences the target user since the LT model characterizes the aggregate influence of a crowd. So we consider a more realistic \textit{node-level feedback}\footnote{One may think of the node-level feedback as knowing only the set of nodes activated by the end of the diffusion process. We refer to this as (partial) node-level feedback and ours as (full) node-level feedback. This naming comes from \cite{narasimhan2015learnability}.} in this paper: the learner can only observe the influence diffusion process on node sets as $S_{t,0},\ldots,S_{t,\tau},\ldots$ in round $t$.
%\wei{The previous version call these sets spread or diffusion spread, but this may be confusing with influence spread, which is a commonly used term referring to the expected number of nodes at the end of the diffusion process. So I am trying to avoid using the spread for $S_{t,\tau}$. I do not use a fixed term yet, just call this sequence the diffusion process. If needed, we can call $S_{t,\tau}$ diffusion coverage at step $\tau$.}

The objective of the OIM is to minimize the cumulative $\eta$-scaled regret \cite{WeiChen2013,zhengwen2017nips} over total $T$ rounds:
\begin{align}
\label{eq:DefinitionOfRegret}
R(T) = \EE{\sum_{t=1}^{T}R_t }= \EE{\eta \cdot T \cdot \opt_w - \sum_{t=1}^{T} r(S_t, w)}\,,
\end{align}
where the expectation is over the randomness on the threshold vector and the output of the adopted offline oracle in each round .

Throughout this paper, we will use `round' $t$ to denote a step in online learning and use `time' $\tau$ of round $t$ to denote an influence diffusion step of seed set $S_t$ in round $t$.

%!TEX root =  main.tex

\section{$\ltlinucb$ Algorithm}\label{sec:linucb}

In this section, we show how to distill effective information based on the feedback and propose a LinUCB-type algorithm, $\ltlinucb$, for OIM under LT. 
For each node $v \in V$, denote $w_v = (w(e_{u,v}))_{u \in N(v)}$ to be the weight vector of its incoming edges. Let $\chi(e_{u,v}) \in \set{0,1}^{\abs{N(v)}}$ be the one-hot representation of the edge $e_{u,v}$ over all of $v$'s incoming edges $\set{e_{u,v}: u \in N(v)}$, that is its $e'$-entry is $1$ if and only if $e'=e_{u,v}$. Then $w(e_{u,v}) = \chi(e_{u,v})^\top w_v$. 
For a subset of edges $E' \subseteq \set{e_{u,v}: u \in N(v)}$, we define $\chi(E'):=\sum_{e\in E'} \chi(e) \in \set{0,1}^{\abs{N(v)}}$ to be the vector whose $e$-entry is $1$ if and only if $e \in E'$. Here we abuse the notation that $\chi(\set{e}) = \chi(e)$. By this notation, the weight sum of the edges in $E'$ is simply written as $\chi(E')^\top w_v$. A subset $V' \subset N(v)$ of $v$'s in-neighbors can activate $v$ if the weight sum of associated edges exceeds the threshold, that is $\chi(E')^\top w_v \ge \theta_v$ with $E'= \set{e_{u,v}: u \in V'}$.

Fix a diffusion process $S_0, S_1, \ldots, S_\tau, \ldots$, where the seed set is $S_0$. For each node $v$, define 
\begin{align}
  \tau_1(v) := \min_{\tau}\set{\tau=0,\ldots,D: N(v) \cap S_{\tau} \neq \emptyset }
\end{align}
as the earliest time step when node $v$ has active in-neighbors. Particularly we set $\tau_1(v) = D+1$ if node $v$ has no active in-neighbor until the diffusion ends. For any $\tau \ge \tau_1(v)$, further define 
\begin{align}
  E_{\tau}(v) :=\set{e_{u,v} : u \in N(v)\cap S_{\tau}}
\end{align}
as the set of incoming edges associated with active in-neighbors of $v$ at time step $\tau$. 

Recall that the learner can only observe the aggregated influence ability of a node's active in-neighbors. 
Let $\tau_2(v)$ represent the time step that node $v$ is influenced ($\tau_2(v)> \tau_1(v)$), which is equivalent to mean that $v$'s active in-neighbors of time $\tau_2(v)-1$ succeed to influence it but those in time $\tau_2(v)-2$ fail ($E_{-1} = \emptyset$). Thus the defintion of $\tau_2(v)$ can be written as
\begin{align}
    \tau_2(v):= \set{\tau=0,\ldots,D: \chi(E_{\tau-2}(v))^\top w_v < \theta_v \le \chi(E_{\tau-1}(v))^\top w_v } \,.
\end{align}
For consistency, we set $\tau_2(v) = D+1$ if node $v$ is finally not influenced after the information diffusion ends. 
Then based on the definition of $\tau_1(v)$ and $\tau_2(v)$, we can obtain that node $v$ is not influenced at time $\tau \in (\tau_1(v), \tau_2(v))$, which means that the set of active in-neighbors of $v$ at time step $\tau-1$ fails to activate it.  

According to the rule of information diffusion under the LT model, an event that $E' \subseteq \set{e_{u,v}: u \in N(v)}$ fails to activate $v$ is equivalent to $\chi(E')^\top w_v < \theta_v$, which happens with probability $1 - \chi(E')^\top w_v$ since $\theta_v$ is uniformly drawn from $[0,1]$. Similarly an event that $E' \subseteq \set{e_{u,v}: u \in N(v)}$ succeeds to activate $v$ is equivalent to $\chi(E')^\top w_v \ge \theta_v$, which happens with probability $\chi(E')^\top w_v$. 
% since $\theta_v$ is uniformly drawn from $[0,1]$ 
So for node $v$ who has active in-neighbors, $v$ is not influenced at time step $\tau$ ($\tau_1(v)<\tau<\tau_2(v)$) means that the set of $v$'s active in-neighbors by $\tau-1$ fails to activate it, thus we can use $(\chi(E_{\tau-1}(v)), 0)$ to update our belief on the unknown weight vector $w_v$; $v$ is influenced at time step $\tau_2(v)$ means that the set of $v$'s active in-neighbors by $\tau_2(v)-1$ succeeds to activate it, we can thus use $(\chi(E_{\tau_2(v)-1}(v)), 1)$ to update our belief on the unknown weight vector $w_v$; 
$v$ is finally not influenced means that all of its active in-neighbors (by time step $D$) fail to activate it, we can use $(\chi(E_{\tau_2(v)-1}(v)), 0)$ to update $w_v$ since $\tau_2(v)$ is defined as $D+1$ in this case. 
Note all of these events are correlated (based on a same $\theta_v$), thus we can only choose at most one of them to update $w_v$ for node $v$ who has active in-neighbors. 
If $v$ has no active in-neighbors, we have no observation on $w_v$ and could update nothing. 

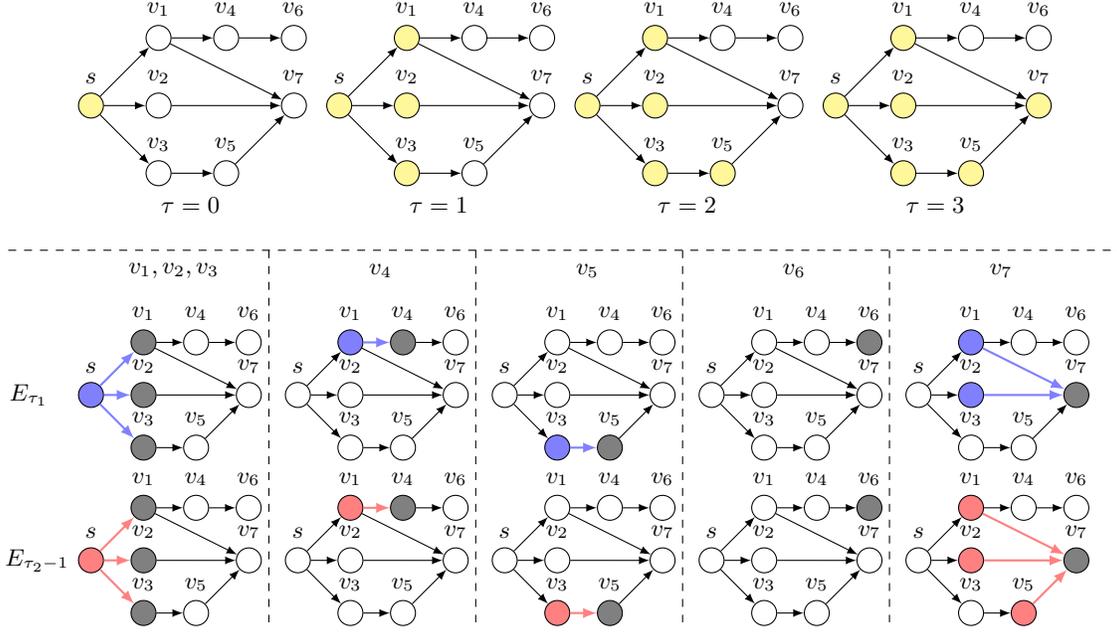
\begin{figure}[tbh!]
\hspace{-0.6cm}
    \begin{tikzpicture}[scale=0.55,font=\small,every node/.style={node distance = 0.9cm}]

    \node (s)  [circle,draw=black,fill=yellow!50,label=$s$] at (-24,-1) {} ;
    \node (v2) [circle,draw=black,right of=s,,label=$v_2$] {};
  \node (v1) [circle,draw=black,above of=v2,label=$v_1$]{};
  \node (v3) [circle,draw=black,below of=v2,label=$v_3$] {};
  \node (v4) [circle,draw=black,right of=v1,label=$v_4$] {};
  \node (v5) [circle,draw=black,right of=v3,label=$v_5$] {};
  \node (v6) [circle,draw=black,right of=v4,label=$v_6$] {};
  \node (v7) [circle,draw=black,right of=v2,node distance = 1.8cm,label=$v_7$] {};

  \node [below right of=v3,node distance = 0.6cm] {$\tau=0$};
  
  \draw [-latex](s)--(v1);
  \draw [-latex](s)--(v2);
  \draw [-latex](s)--(v3);
  \draw [-latex](v1)--(v4);
  \draw [-latex](v4)--(v6);
  \draw [-latex](v1)--(v7);
  \draw [-latex](v2)--(v7);
  \draw [-latex](v3)--(v5);
  \draw [-latex](v5)--(v7);

  %  above is the $\tau=0$

  \node (1s)  [circle,draw=black,fill=yellow!50,label=$s$] at (-18,-1) {} ;
    \node (1v2) [circle,draw=black,right of=1s,fill=yellow!50,label=$v_2$] {};
  \node (1v1) [circle,draw=black,above of=1v2,fill=yellow!50,label=$v_1$] {};
  \node (1v3) [circle,draw=black,below of=1v2,fill=yellow!50,label=$v_3$] {};
  \node (1v4) [circle,draw=black,right of=1v1,label=$v_4$] {};
  \node (1v5) [circle,draw=black,right of=1v3,label=$v_5$] {};
  \node (1v6) [circle,draw=black,right of=1v4,label=$v_6$] {};
  \node (1v7) [circle,draw=black,right of=1v2,node distance = 1.8cm,label=$v_7$] {};

  \node [below right of=1v3,node distance = 0.6cm] {$\tau=1$};
  
  \draw [-latex](1s)--(1v1);
  \draw [-latex](1s)--(1v2);
  \draw [-latex](1s)--(1v3);
  \draw [-latex](1v1)--(1v4);
  \draw [-latex](1v4)--(1v6);
  \draw [-latex](1v1)--(1v7);
  \draw [-latex](1v2)--(1v7);
  \draw [-latex](1v3)--(1v5);
  \draw [-latex](1v5)--(1v7);

  %  above is the $\tau=1$

  \node (2s)  [circle,draw=black,fill=yellow!50,label=$s$] at (-12,-1) {} ;
    \node (2v2) [circle,draw=black,right of=2s,fill=yellow!50,label=$v_2$] {};
  \node (2v1) [circle,draw=black,above of=2v2,fill=yellow!50,label=$v_1$] {};
  \node (2v3) [circle,draw=black,below of=2v2,fill=yellow!50,label=$v_3$] {};
  \node (2v4) [circle,draw=black,right of=2v1,label=$v_4$] {};
  \node (2v5) [circle,draw=black,right of=2v3,fill=yellow!50,label=$v_5$] {};
  \node (2v6) [circle,draw=black,right of=2v4,label=$v_6$] {};
  \node (2v7) [circle,draw=black,right of=2v2,node distance = 1.8cm,label=$v_7$] {};

  \node [below right of=2v3,node distance = 0.6cm] {$\tau=2$};
  
  \draw [-latex](2s)--(2v1);
  \draw [-latex](2s)--(2v2);
  \draw [-latex](2s)--(2v3);
  \draw [-latex](2v1)--(2v4);
  \draw [-latex](2v4)--(2v6);
  \draw [-latex](2v1)--(2v7);
  \draw [-latex](2v2)--(2v7);
  \draw [-latex](2v3)--(2v5);
  \draw [-latex](2v5)--(2v7);

  %  above is the $\tau=2$

  \node (3s)  [circle,draw=black,fill=yellow!50,label=$s$] at (-6,-1) {} ;
    \node (3v2) [circle,draw=black,right of=3s,fill=yellow!50,label=$v_2$] {};
  \node (3v1) [circle,draw=black,above of=3v2,fill=yellow!50,label=$v_1$] {};
  \node (3v3) [circle,draw=black,below of=3v2,fill=yellow!50,label=$v_3$] {};
  \node (3v4) [circle,draw=black,right of=3v1,label=$v_4$] {};
  \node (3v5) [circle,draw=black,right of=3v3,fill=yellow!50,label=$v_5$] {};
  \node (3v6) [circle,draw=black,right of=3v4,label=$v_6$] {};
  \node (3v7) [circle,draw=black,right of=3v2,node distance = 1.8cm,fill=yellow!50,label=$v_7$] {};

  \node [below right of=3v3,node distance = 0.6cm] {$\tau=3$};
  
  \draw [-latex](3s)--(3v1);
  \draw [-latex](3s)--(3v2);
  \draw [-latex](3s)--(3v3);
  \draw [-latex](3v1)--(3v4);
  \draw [-latex](3v4)--(3v6);
  \draw [-latex](3v1)--(3v7);
  \draw [-latex](3v2)--(3v7);
  \draw [-latex](3v3)--(3v5);
  \draw [-latex](3v5)--(3v7);

  %  above is the $\tau=3$

  \draw[dashed] (-26,-4.5) -- (1,-4.5);

  \draw[dashed] (-19.7,-4.5) -- (-19.7,-13.5);
  \draw[dashed] (-14.7,-4.5) -- (-14.7,-13.5);
  \draw[dashed] (-9.7,-4.5) -- (-9.7,-13.5);
  \draw[dashed] (-4.7,-4.5) -- (-4.7,-13.5);

  % above is the dashed line

  \node (e1) at (-25.5,-8) {$E_{\tau_1}$} ;
  \node (e2) at (-25.3,-12) {$E_{\tau_2-1}$} ;

  \node (ev1) at (-22,-5) {$v_1,v_2,v_3$} ;

  \node (se1)  [circle,draw=black,fill=blue!50,label=$s$] at (-24,-8) {} ;
    \node (v2e1) [circle,draw=black,right of=se1,node distance = 0.7cm,fill=black!50,label=$v_2$] {};
  \node (v1e1) [circle,draw=black,above of=v2e1,node distance = 0.7cm,fill=black!50,label=$v_1$]{};
  \node (v3e1) [circle,draw=black,below of=v2e1,node distance = 0.7cm,fill=black!50,label=$v_3$] {};
  \node (v4e1) [circle,draw=black,right of=v1e1,node distance = 0.7cm,label=$v_4$] {};
  \node (v5e1) [circle,draw=black,right of=v3e1,node distance = 0.7cm,label=$v_5$] {};
  \node (v6e1) [circle,draw=black,right of=v4e1,node distance = 0.7cm,label=$v_6$] {};
  \node (v7e1) [circle,draw=black,right of=v2e1,node distance = 1.4cm,label=$v_7$] {};

  \draw [-latex,thick,color=blue!50](se1)--(v1e1);
  \draw [-latex,thick,color=blue!50](se1)--(v2e1);
  \draw [-latex,thick,color=blue!50](se1)--(v3e1);
  \draw [-latex](v1e1)--(v4e1);
  \draw [-latex](v4e1)--(v6e1);
  \draw [-latex](v1e1)--(v7e1);
  \draw [-latex](v2e1)--(v7e1);
  \draw [-latex](v3e1)--(v5e1);
  \draw [-latex](v5e1)--(v7e1);

  \node (se2)  [circle,draw=black,fill=red!50,label=$s$] at (-24,-12) {} ;
    \node (v2e2) [circle,draw=black,right of=se2,node distance = 0.7cm,fill=black!50,label=$v_2$] {};
  \node (v1e2) [circle,draw=black,above of=v2e2,node distance = 0.7cm,fill=black!50,label=$v_1$]{};
  \node (v3e2) [circle,draw=black,below of=v2e2,node distance = 0.7cm,fill=black!50,label=$v_3$] {};
  \node (v4e2) [circle,draw=black,right of=v1e2,node distance = 0.7cm,label=$v_4$] {};
  \node (v5e2) [circle,draw=black,right of=v3e2,node distance = 0.7cm,label=$v_5$] {};
  \node (v6e2) [circle,draw=black,right of=v4e2,node distance = 0.7cm,label=$v_6$] {};
  \node (v7e2) [circle,draw=black,right of=v2e2,node distance = 1.4cm,label=$v_7$] {};

  \draw [-latex,thick,color=red!50](se2)--(v1e2);
  \draw [-latex,thick,color=red!50](se2)--(v2e2);
  \draw [-latex,thick,color=red!50](se2)--(v3e2);
  \draw [-latex](v1e2)--(v4e2);
  \draw [-latex](v4e2)--(v6e2);
  \draw [-latex](v1e2)--(v7e2);
  \draw [-latex](v2e2)--(v7e2);
  \draw [-latex](v3e2)--(v5e2);
  \draw [-latex](v5e2)--(v7e2);

  % above is v1 v2 v3

  \node (ev4) at (-17,-5) {$v_4$} ;

  \node (se3)  [circle,draw=black,label=$s$] at (-19,-8) {} ;
    \node (v2e3) [circle,draw=black,right of=se3,node distance = 0.7cm,label=$v_2$] {};
  \node (v1e3) [circle,draw=black,above of=v2e3,fill=blue!50,node distance = 0.7cm,label=$v_1$]{};
  \node (v3e3) [circle,draw=black,below of=v2e3,node distance = 0.7cm,label=$v_3$] {};
  \node (v4e3) [circle,draw=black,right of=v1e3,node distance = 0.7cm,fill=black!50,label=$v_4$] {};
  \node (v5e3) [circle,draw=black,right of=v3e3,node distance = 0.7cm,label=$v_5$] {};
  \node (v6e3) [circle,draw=black,right of=v4e3,node distance = 0.7cm,label=$v_6$] {};
  \node (v7e3) [circle,draw=black,right of=v2e3,node distance = 1.4cm,label=$v_7$] {};

  \draw [-latex](se3)--(v1e3);
  \draw [-latex](se3)--(v2e3);
  \draw [-latex](se3)--(v3e3);
  \draw [-latex,thick,color=blue!50](v1e3)--(v4e3);
  \draw [-latex](v4e3)--(v6e3);
  \draw [-latex](v1e3)--(v7e3);
  \draw [-latex](v2e3)--(v7e3);
  \draw [-latex](v3e3)--(v5e3);
  \draw [-latex](v5e3)--(v7e3);

  \node (se4)  [circle,draw=black,label=$s$] at (-19,-12) {} ;
    \node (v2e4) [circle,draw=black,right of=se4,node distance = 0.7cm,label=$v_2$] {};
  \node (v1e4) [circle,fill=red!50,draw=black,above of=v2e4,node distance = 0.7cm,label=$v_1$]{};
  \node (v3e4) [circle,draw=black,below of=v2e4,node distance = 0.7cm,label=$v_3$] {};
  \node (v4e4) [circle,draw=black,right of=v1e4,node distance = 0.7cm,fill=black!50,label=$v_4$] {};
  \node (v5e4) [circle,draw=black,right of=v3e4,node distance = 0.7cm,label=$v_5$] {};
  \node (v6e4) [circle,draw=black,right of=v4e4,node distance = 0.7cm,label=$v_6$] {};
  \node (v7e4) [circle,draw=black,right of=v2e4,node distance = 1.4cm,label=$v_7$] {};

  \draw [-latex](se4)--(v1e4);
  \draw [-latex](se4)--(v2e4);
  \draw [-latex](se4)--(v3e4);
  \draw [-latex,thick,color=red!50](v1e4)--(v4e4);
  \draw [-latex](v4e4)--(v6e4);
  \draw [-latex](v1e4)--(v7e4);
  \draw [-latex](v2e4)--(v7e4);
  \draw [-latex](v3e4)--(v5e4);
  \draw [-latex](v5e4)--(v7e4);

  % above is v1 v2 v3

  \node (ev5) at (-12,-5) {$v_5$} ;

  \node (se5)  [circle,draw=black,label=$s$] at (-14,-8) {} ;
    \node (v2e5) [circle,draw=black,right of=se5,node distance = 0.7cm,label=$v_2$] {};
  \node (v1e5) [circle,draw=black,above of=v2e5,node distance = 0.7cm,label=$v_1$]{};
  \node (v3e5) [circle,draw=black,fill=blue!50,below of=v2e5,node distance = 0.7cm,label=$v_3$] {};
  \node (v4e5) [circle,draw=black,right of=v1e5,node distance = 0.7cm,label=$v_4$] {};
  \node (v5e5) [circle,draw=black,right of=v3e5,node distance = 0.7cm,fill=black!50,label=$v_5$] {};
  \node (v6e5) [circle,draw=black,right of=v4e5,node distance = 0.7cm,label=$v_6$] {};
  \node (v7e5) [circle,draw=black,right of=v2e5,node distance = 1.4cm,label=$v_7$] {};

  \draw [-latex](se5)--(v1e5);
  \draw [-latex](se5)--(v2e5);
  \draw [-latex](se5)--(v3e5);
  \draw [-latex](v1e5)--(v4e5);
  \draw [-latex](v4e5)--(v6e5);
  \draw [-latex](v1e5)--(v7e5);
  \draw [-latex](v2e5)--(v7e5);
  \draw [-latex,thick,color=blue!50](v3e5)--(v5e5);
  \draw [-latex](v5e5)--(v7e5);

  \node (se6)  [circle,draw=black,label=$s$] at (-14,-12) {} ;
    \node (v2e6) [circle,draw=black,right of=se6,node distance = 0.7cm,label=$v_2$] {};
  \node (v1e6) [circle,draw=black,above of=v2e6,node distance = 0.7cm,label=$v_1$]{};
  \node (v3e6) [circle,draw=black,below of=v2e6,fill=red!50,node distance = 0.7cm,label=$v_3$] {};
  \node (v4e6) [circle,draw=black,right of=v1e6,node distance = 0.7cm,label=$v_4$] {};
  \node (v5e6) [circle,draw=black,right of=v3e6,node distance = 0.7cm,fill=black!50,label=$v_5$] {};
  \node (v6e6) [circle,draw=black,right of=v4e6,node distance = 0.7cm,label=$v_6$] {};
  \node (v7e6) [circle,draw=black,right of=v2e6,node distance = 1.4cm,label=$v_7$] {};

  \draw [-latex](se6)--(v1e6);
  \draw [-latex](se6)--(v2e6);
  \draw [-latex](se6)--(v3e6);
  \draw [-latex](v1e6)--(v4e6);
  \draw [-latex](v4e6)--(v6e6);
  \draw [-latex](v1e6)--(v7e6);
  \draw [-latex](v2e6)--(v7e6);
  \draw [-latex,thick,color=red!50](v3e6)--(v5e6);
  \draw [-latex](v5e6)--(v7e6);

  % above is v5

  \node (ev6) at (-7,-5) {$v_6$} ;

  \node (se7)  [circle,draw=black,label=$s$] at (-9,-8) {} ;
    \node (v2e7) [circle,draw=black,right of=se7,node distance = 0.7cm,label=$v_2$] {};
  \node (v1e7) [circle,draw=black,above of=v2e7,node distance = 0.7cm,label=$v_1$]{};
  \node (v3e7) [circle,draw=black,below of=v2e7,node distance = 0.7cm,label=$v_3$] {};
  \node (v4e7) [circle,draw=black,right of=v1e7,node distance = 0.7cm,label=$v_4$] {};
  \node (v5e7) [circle,draw=black,right of=v3e7,node distance = 0.7cm,label=$v_5$] {};
  \node (v6e7) [circle,draw=black,right of=v4e7,node distance = 0.7cm,fill=black!50,label=$v_6$] {};
  \node (v7e7) [circle,draw=black,right of=v2e7,node distance = 1.4cm,label=$v_7$] {};

  \draw [-latex](se7)--(v1e7);
  \draw [-latex](se7)--(v2e7);
  \draw [-latex](se7)--(v3e7);
  \draw [-latex](v1e7)--(v4e7);
  \draw [-latex](v4e7)--(v6e7);
  \draw [-latex](v1e7)--(v7e7);
  \draw [-latex](v2e7)--(v7e7);
  \draw [-latex](v3e7)--(v5e7);
  \draw [-latex](v5e7)--(v7e7);

  \node (se8)  [circle,draw=black,label=$s$] at (-9,-12) {} ;
    \node (v2e8) [circle,draw=black,right of=se8,node distance = 0.7cm,label=$v_2$] {};
  \node (v1e8) [circle,draw=black,above of=v2e8,node distance = 0.7cm,label=$v_1$]{};
  \node (v3e8) [circle,draw=black,below of=v2e8,node distance = 0.7cm,label=$v_3$] {};
  \node (v4e8) [circle,draw=black,right of=v1e8,node distance = 0.7cm,label=$v_4$] {};
  \node (v5e8) [circle,draw=black,right of=v3e8,node distance = 0.7cm,label=$v_5$] {};
  \node (v6e8) [circle,draw=black,right of=v4e8,node distance = 0.7cm,fill=black!50,label=$v_6$] {};
  \node (v7e8) [circle,draw=black,right of=v2e8,node distance = 1.4cm,label=$v_7$] {};

  \draw [-latex](se8)--(v1e8);
  \draw [-latex](se8)--(v2e8);
  \draw [-latex](se8)--(v3e8);
  \draw [-latex](v1e8)--(v4e8);
  \draw [-latex](v4e8)--(v6e8);
  \draw [-latex](v1e8)--(v7e8);
  \draw [-latex](v2e8)--(v7e8);
  \draw [-latex](v3e8)--(v5e8);
  \draw [-latex](v5e8)--(v7e8);

    % above is v6

  \node (ev7) at (-2,-5) {$v_7$} ;

  \node (se9)  [circle,draw=black,label=$s$] at (-4,-8) {} ;
    \node (v2e9) [circle,draw=black,right of=se9,fill=blue!50,node distance = 0.7cm,label=$v_2$] {};
  \node (v1e9) [circle,draw=black,above of=v2e9,fill=blue!50,node distance = 0.7cm,label=$v_1$]{};
  \node (v3e9) [circle,draw=black,below of=v2e9,node distance = 0.7cm,label=$v_3$] {};
  \node (v4e9) [circle,draw=black,right of=v1e9,node distance = 0.7cm,label=$v_4$] {};
  \node (v5e9) [circle,draw=black,right of=v3e9,node distance = 0.7cm,label=$v_5$] {};
  \node (v6e9) [circle,draw=black,right of=v4e9,node distance = 0.7cm,label=$v_6$] {};
  \node (v7e9) [circle,draw=black,right of=v2e9,node distance = 1.4cm,fill=black!50,label=$v_7$] {};

  \draw [-latex](se9)--(v1e9);
  \draw [-latex](se9)--(v2e9);
  \draw [-latex](se9)--(v3e9);
  \draw [-latex](v1e9)--(v4e9);
  \draw [-latex](v4e9)--(v6e9);
  \draw [-latex,thick,color=blue!50](v1e9)--(v7e9);
  \draw [-latex,thick,color=blue!50](v2e9)--(v7e9);
  \draw [-latex](v3e9)--(v5e9);
  \draw [-latex](v5e9)--(v7e9);

  \node (se0)  [circle,draw=black,label=$s$] at (-4,-12) {} ;
    \node (v2e0) [circle,draw=black,fill=red!50,right of=se0,node distance = 0.7cm,label=$v_2$] {};
  \node (v1e0) [circle,draw=black,fill=red!50,above of=v2e0,node distance = 0.7cm,label=$v_1$]{};
  \node (v3e0) [circle,draw=black,below of=v2e0,node distance = 0.7cm,label=$v_3$] {};
  \node (v4e0) [circle,draw=black,right of=v1e0,node distance = 0.7cm,label=$v_4$] {};
  \node (v5e0) [circle,draw=black,fill=red!50,right of=v3e0,node distance = 0.7cm,label=$v_5$] {};
  \node (v6e0) [circle,draw=black,right of=v4e0,node distance = 0.7cm,label=$v_6$] {};
  \node (v7e0) [circle,draw=black,right of=v2e0,node distance = 1.4cm,fill=black!50,label=$v_7$] {};

  \draw [-latex](se0)--(v1e0);
  \draw [-latex](se0)--(v2e0);
  \draw [-latex](se0)--(v3e0);
  \draw [-latex](v1e0)--(v4e0);
  \draw [-latex](v4e0)--(v6e0);
  \draw [-latex,thick,color=red!50](v1e0)--(v7e0);
  \draw [-latex,thick,color=red!50](v2e0)--(v7e0);
  \draw [-latex](v3e0)--(v5e0);
  \draw [-latex,thick,color=red!50](v5e0)--(v7e0);

    % above is v7

    \end{tikzpicture}

    \caption{An example of diffusion process starting from $S = \set{s}$ under LT. The upper part describes an influence diffusion process where yellow nodes represent influenced nodes by the current time. The lower part describes what $E_{\tau_1}, E_{\tau_2-1}$ are where we use blue (red) color to represent the edges and the associated active in-neighbors in $E_{\tau_1}$ ($E_{\tau_2-1}$, respectively) for the objective black node. }
    \label{fig:illustration}

\end{figure}

Figure \ref{fig:illustration} gives an example of diffusion process and the definitions of edge-sets $E_{\tau_1}$ and $E_{\tau_2-1}$. The diffusion process is illustrated by the upper four figures, where the set $S_\tau$ of influenced nodes by time $\tau$ is yellow colored. The lower five columns represent the sets $E_{\tau_1}, E_{\tau_2-1}$ for different nodes. For example, node $v_7$ has active in-neighbors starting from $\tau=1$, thus $\tau_1(v_7) = 1$ and $E_{\tau_1(v_7)}(v_7) = \set{e_{u,v_7} : u \in N(v_7)\cap S_{1}} = \set{e_{v_1, v_7},e_{v_2, v_7}}$. And $v_7$ is influenced at $\tau = 3$ thus $\tau_2(v_7) = 3$ and $E_{\tau_2(v_7)-1}(v_7) = \set{e_{u,v_7} : u \in N(v_7)\cap S_{2}} = \set{e_{v_1, v_7}, e_{v_2, v_7}, e_{v_5, v_7}}$. Node $v_6$ has no active in-neighbors, thus $\tau_1(v_6) = \tau_2(v_6) = D+1$, both its $E_{\tau_1(v_6)}(v_6)$ and $E_{\tau_2(v_6)-1}(v_6)$ are empty sets.

The above describes how to distill key observations for a diffusion under the LT model and also explains the update rule in the design of the algorithm. Denote $\tau_1,\tau_2,E_{\tau}$ at round $t$ as $\tau_{t,1},\tau_{t,2},E_{t,\tau}$ and the diffusion process at round $t$ as $S_{t,0},\ldots, S_{t,\tau}, \ldots$. Here we abuse a bit the notation $S$ to represent both the seed set and the spread set in a round when the context is clear.

Our algorithm $\ltlinucb$ is given in Algorithm \ref{alg:LinUCB}. It maintains the Gramian matrix $M_v$ and the moment vector $b_v$ of regressand by regressors to store the information for $w_v$. At each round $t$, the learner first computes the confidence ellipsoid for $w_v$ based on the current information (line \ref{alg:linucb:compute ellipsoid}) (see the following lemma). 

\begin{lemma} 
\label{lem:ltlinucb:confidence ellipsoid}
Given $\set{(A_{t}, y_{t})}_{t=1}^{\infty}$ with $A_{t} \in \set{0,1}^{N}$ and $y_{t}\in \{0,1\}$ as a Bernoulli random variable with $\EE{y_{t} \mid A_{1}, y_{1}, \ldots, A_{t-1}, y_{t-1}, A_{t}} = A_{t}^\top w_v$, let $M_{t} = I + \sum_{s=1}^t A_{s} A_{s}^\top$ and $\hat{w}_{t} = M_{t}^{-1}\bracket{\sum_{s=1}^t A_{s} y_{s}}$ be the linear regression estimator. Then with probability at least $1-\delta$, for all $t \ge 1$, it holds that $w_v$ lies in the confidence set
\begin{align*}
  \tilde{\cC}_{t} := \set{w' \in [0,1]^{N} : \norm{w' - \hat{w}_{t}}_{M_{t}} \leq \sqrt{N \log(1+tN)+2\log\frac{1}{\delta}} + \sqrt{N}}\,.
\end{align*}
\end{lemma}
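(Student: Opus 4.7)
The plan is to follow the standard self-normalized concentration approach for ridge regression with sub-Gaussian noise, adapted to the regularization $M_{t} = I + \sum_{s=1}^{t} A_{s} A_{s}^\top$ used here. First, I would define the noise $\eta_{s} := y_{s} - A_{s}^\top w_v$; by the hypothesis on the conditional mean, the sequence $\set{\eta_{s}}$ forms a martingale difference with respect to the natural filtration, and since $y_{s} \in \set{0,1}$ while $A_{s}^\top w_v \in [0,1]$, each $\eta_{s}$ is bounded in $[-1,1]$ and hence conditionally $1$-sub-Gaussian by Hoeffding's lemma.

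Next I would derive a closed form for the estimation error. Starting from $M_{t} \hat{w}_{t} = \sum_{s=1}^{t} A_{s} y_{s}$ and substituting $y_{s} = A_{s}^\top w_v + \eta_{s}$, one obtains $M_{t}(\hat{w}_{t} - w_v) = \sum_{s=1}^{t} A_{s} \eta_{s} - w_v$, using $\sum_{s} A_{s} A_{s}^\top = M_{t} - I$. Taking the $M_{t}^{-1}$-norm of both sides and applying the triangle inequality gives
\begin{align*}
\norm{\hat{w}_{t} - w_v}_{M_{t}} \leq \norm{\sum_{s=1}^{t} A_{s} \eta_{s}}_{M_{t}^{-1}} + \norm{w_v}_{M_{t}^{-1}}.
\end{align*}
The regularization term is handled by noting $M_{t} \succeq I$, so $M_{t}^{-1} \preceq I$ and $\norm{w_v}_{M_{t}^{-1}} \leq \norm{w_v}_2 \leq \sqrt{N}$ since $w_v \in [0,1]^{N}$.

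For the stochastic term I would invoke the self-normalized vector martingale inequality of Abbasi-Yadkori, P\'al and Szepesv\'ari, which yields the uniform-in-$t$ bound: with probability at least $1-\delta$, for all $t \geq 1$,
\begin{align*}
\norm{\sum_{s=1}^{t} A_{s} \eta_{s}}_{M_{t}^{-1}}^2 \leq 2 \log\!\bracket{\det(M_{t})^{1/2}/\delta} = \log \det(M_{t}) + 2\log(1/\delta).
\end{align*}
Applying AM-GM to the eigenvalues of $M_{t}$ together with $\mathrm{trace}(M_{t}) = N + \sum_{s=1}^{t} \norm{A_{s}}_2^2 \leq N(1+t) \leq N(1+tN)$ (each $A_{s} \in \set{0,1}^{N}$ satisfies $\norm{A_{s}}_2^2 \leq N$) gives $\log \det(M_{t}) \leq N\log(1+tN)$. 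Combining these estimates yields the claimed confidence set $\tilde{\cC}_{t}$.

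The main technical obstacle is the self-normalized martingale inequality itself, which is established by constructing an exponential super-martingale via the method of mixtures with a Gaussian prior on the direction and then invoking Ville's maximal inequality to obtain an anytime guarantee that holds simultaneously for every $t$. Since this is a well-established black-box result, the remaining work reduces to the routine trace/determinant bound on $M_{t}$ and the sub-Gaussian verification of the Bernoulli residuals $\eta_{s}$.
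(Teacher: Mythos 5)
Your proof is correct and matches the paper's approach: the paper simply invokes Theorem 2 of Abbasi-Yadkori, P\'al and Szepesv\'ari as a black box, and your argument is exactly the derivation of that theorem specialized to this setting (ridge regularization $\lambda=1$, $\norm{w_v}_2 \le \sqrt{N}$, bounded Bernoulli residuals, determinant--trace bound giving $N\log(1+tN)$). All steps check out, so there is nothing to add beyond noting that your sub-Gaussian constant could be tightened to $1/2$, which is immaterial here.
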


\begin{algorithm}[thb!]
\caption{$\ltlinucb$}
\label{alg:LinUCB}
\begin{algorithmic}[1]
\STATE \textbf{Input:} Graph $G=(V,E)$; seed set cardinality $K$; exploration parameter $\rho_{t,v}>0$ for any $t,v$; offline oracle $\pairoracle$
\STATE \textbf{Initialize}: $M_{0,v} \gets I \in \RR^{\abs{N(v)}\times \abs{N(v)}}, b_{0,v} \gets 0 \in \RR^{\abs{N(v)} \times 1}, \hat{w}_{0,v} \gets 0 \in \RR^{\abs{N(v)} \times 1}$ for any node $v \in V$ \\
\FOR {$t = 1,2,3, \ldots$}
\STATE Compute the confidence ellipsoid $\cC_{t,v} = \set{w_v' \in [0,1]^{\abs{N(v)} \times 1} : \norm{w_v' - \hat{w}_{t-1,v}}_{M_{t-1,v}} \leq \rho_{t,v}}$ for any node $v \in V$
\label{alg:linucb:compute ellipsoid}
\STATE Compute the pair $(S_t,w_t)$ by $\pairoracle$ with confidence set $\cC_t=\set{\cC_{t,v}}_{v \in V}$ and seed set cardinality $K$
\label{alg:linucb:compute seedset}
\STATE Select the seed set $S_t$ and observe the feedback
\label{alg:linucb:choose seedset}
\STATE // Update  
\label{alg:linucb:update start} 
\FOR {node $v \in V$ }
  \STATE Initialize $A_{t,v} \gets 0 \in \RR^{\abs{N(v)} \times 1}$, $y_{t,v} \gets 0 \in \RR$
  \STATE Uniformly randomly choose $\tau \in \set{ \tau': \tau_{t,1}(v) \le \tau' \le \tau_{t,2}(v)-1 } $ \label{alg:linucb:choosetau}
  \IF {$v$ is influenced and $\tau= \tau_{t,2}(v)-1$} \label{alg:linkbeginupdateAu}
      \STATE $A_{t,v} = \chi\bracket{E_{t,\tau}(v)}$,  $~~y_{t,v} = 1$ \label{alg:linucb:update:succ}
  \ELSIF {$\tau = \tau_1(v) ,\ldots,\tau_2(v)-2 $ or $\tau=\tau_2(v)-1$ but $v$ is not influenced}
      \STATE $A_{t,v} = \chi\bracket{E_{t,\tau}(v)}$,  $~~y_{t,v} = 0$ \label{alg:linucb:update:fail}
  \ENDIF \label{alg:linefinishupdateAu}
  \STATE $M_{t,v} \gets M_{t-1,v} + A_{t,v}A_{t,v}^{\top}, ~~ b_{t,v} \gets b_{t-1,v} + y_{t,v} A_{t,v} , ~~\hat{w}_{t,v} = M_{t,v}^{-1}b_{t,v}$
\ENDFOR
\label{alg:linucb:update end}
\ENDFOR
\end{algorithmic}
\end{algorithm}
% \FOR {$\tau=\tau_{t,1}(v),\ldots,\tau_{t,2}(v)-1$ (if $v$ is not influenced, set $\tau_{t,2}=n+1$)}
% \STATE unformly randomly choose $A_{t,v} = \chi\bracket{E_{t,\tau}(v)},\ y_{t,v} = 0$
% \ENDFOR
% \label{alg:linucb:update start} 
%   \IF {$v$ is influenced}
%     \IF {$E_{t,\fail}(v) \neq \emptyset$}
%       \STATE with probability $1/2$: $A_{t,v} = \chi\bracket{E_{t,\fail}(v)},\ y_{t,v} = 0$; \\
%              with probability $1/2$: $A_{t,v} = \chi\bracket{E_{t,\suc}(v)},\ y_{t,v} = 1$\\
%       \label{alg:linucb:update influence nonempty}
%     \ELSE
%       \STATE  $A_{t,v} = \chi\bracket{E_{t,\suc}(v)},\ y_{t,v} = 1$\\
%       \label{alg:linucb:update influence empty}
%     \ENDIF
%   \label{alg:linucb:update influence end}
%   \ELSIF {$v$ has active in-neighbors}
%     \STATE $A_{t,v} = \chi\bracket{E_{t,\fail}(v)},\ y_{t,v} = 0$
%     \label{alg:linucb:update not influence}
%   \ENDIF
%   \STATE $M_{t,v} \gets M_{t-1,v} + A_{t,v}A_{t,v}^{\top}, ~~ b_{t,v} \gets b_{t-1,v} + y_{t,v} A_{t,v} , ~~\hat{w}_{t,v} = M_{t,v}^{-1}b_{t,v}$
% \ENDFOR

This lemma is a direct corollary of \cite[Theorem 2]{Linearbandits} for the concentration property of the weight vector $w_v$. Thus when $\rho_{t,v} \ge \sqrt{\abs{N(v)} \log(1+t|N(v)|)+2\log\frac{1}{\delta}} + \sqrt{\abs{N(v)}}$, the true weight vector $w_v$ lies in the confidence set $\cC_{t,v}$ (line \ref{alg:linucb:compute ellipsoid}) for any $t$ with probability at least $1-\delta$.

Given the confidence set $\cC_v$ for $w_v$, the algorithm expects to select the seed set by solving the \textit{weight-constrained influence maximization} (WCIM) problem
\begin{align}
	\argmax_{(S,w'): S \in \cA, w' \in \cC} \ r(S,w') \label{eq:WCIM}\,.
\end{align}
This (offline) optimization problem turns out to be highly nontrivial. 
Since we want to focus more on the online learning solution, we defer the full discussion on the offline optimization, including its general difficulty and our proposed approximate algorithms for certain graph classes such as directed acyclic graphs to 
\ifsup
Appendix \ref{sec:app_pairoracle}.
\else
supplementary materials.
\fi

Suppose its best solution is $(S^{\popt}_{\cC},w^{\popt}_{\cC})$ where `P' stands for `pair'. Let $\pairoracle$ be an offline oracle to solve the optimization problem. We say $\pairoracle$ is an $(\alpha, \beta)$-approximation oracle if $\PP{r(S',w') \ge \alpha \cdot r(S^{\popt}_{\cC},w^{\popt}_{\cC})} \ge \beta$ where $(S',w')$ is an output by the oracle when the confidence set is ${\cC}$. Then the algorithm runs with the seed set output by the $\pairoracle$ and the confidence set $\cC_t=\set{\cC_{t,v}}_{v \in V}$ (line \ref{alg:linucb:compute seedset}). 

After observing the diffusion process (line \ref{alg:linucb:choose seedset}), For each node $v$ who has active in-neighbors, we randomly choose its active in-neighbors at time step $\tau_1(v),\ldots,\tau_2(v)-1$ to update (line \ref{alg:linucb:choosetau}). Specifically, if $v$ is influenced and $\tau=\tau_2(v)-1$, then it means that the set of active in-neighbors at time step $\tau$ succeeds to activate $v$, thus we use $(\chi(E_{t,\tau}(v)), 1)$ to update (line \ref{alg:linucb:update:succ}); if $\tau = \tau_1(v), \ldots, \tau_2(v) - 2$ or $\tau=\tau_2(v)-1$ but node $v$ is not influenced, it means that the set of active in-neighbors at $\tau$ fail to activate node $v$, thus we use $(\chi(E_{t,\tau}(v)), 0)$ to update (line \ref{alg:linucb:update:fail}). These updates are consistent with the distilled observations we get for nodes who have active in-neighbors.
For node $v$ who has no active in-neighbors, we have no obervation on $w_v$ and not update on it since the set $\set{\tau': \tau_1(v) \le \tau' \le \tau_2(v)-1}$ is an empty set in this case. 

% for each influenced node $v$, if $E_{t,\fail}(v) \neq \emptyset$, we randomly choose one of $(\chi(E_{t,\fail}(v)), 0), (\chi(E_{t,\suc}(v)), 1)$ to update (line \ref{alg:linucb:update influence nonempty}); if $E_{t,\fail}(v) = \emptyset$, we use $(\chi(E_{t,\suc}(v)), 1)$ to update (line \ref{alg:linucb:update influence empty}). For each uninfluenced node $v$, if it has active in-neighbors, $E_{t,\fail}(v) \neq \emptyset$, then we use $(\chi(E_{t,\fail}(v)), 0)$ to update (line \ref{alg:linucb:update not influence}). 

For example in Figure \ref{fig:illustration}, node $v_7$ has active in-neighbors from $\tau_1(v_7)=1$ and is influenced at $\tau_2(v_7)=3$. The $\ltlinucb$ will uniformly randomly choose $\tau\in\set{1,2}$ (line \ref{alg:linucb:choosetau}). It updates $(A_{v_7} = \chi(E_{1}(v_7)), y_{v_7} = 0)$ if $\tau=1$ (line \ref{alg:linucb:update:fail}) and $(A_{v_7} = \chi(E_{2}(v_7)), y_{v_7} = 1)$ otherwise (line \ref{alg:linucb:update:succ}). For nodes $v_1,v_2,v_3$, they all have $\tau_1 = 0$ and $\tau_2 = 1$. Thus for these three nodes, the algorithm chooses $\tau=0$ (line \ref{alg:linucb:choosetau}) and updates $(A_{v} = \chi(E_{0}(v)), y_{v} = 1)$ (line \ref{alg:linucb:update:succ}). 
Node $v_4$ has active in-neighbors from $\tau_1(v_4) = 1$ but is not influenced finally, the algorithm will randomly choose $\tau\in \set{1,2\ldots,D}$ and update $(A_{v_4} = \chi(E_{\tau}(v_4)), y_{v_4} = 0)$ (line \ref{alg:linucb:update:fail}). Node $v_6$ has no active in-neighbors, so we have no observation for its weight vector and will not update on it.

% node $v_7$ is influenced and $E_{\fail}(v_7) \neq \emptyset$. So the $\ltlinucb$ can choose $(A_{v_7} = \chi(E_{\fail}(v_7)), y_{v_7} = 0)$ or $(A_{v_7} = \chi(E_{\suc}(v_7)), y_{v_7} = 1)$ to update, each with probability $1/2$ (line \ref{alg:linucb:update influence nonempty}). Nodes $v_1,v_2,v_3,v_5$ are all influenced but with $E_{\fail} = \emptyset$, thus $\ltlinucb$ will use $(A = \chi(E_{\suc}), y = 1)$ to update (line \ref{alg:linucb:update influence empty}). Node $v_4$ is not influenced but has in-neighbor $v_1$, so $\ltlinucb$ uses $(A_{v_4} = \chi(E_{\fail}(v_4)), y_{v_4} = 0)$ to update (line \ref{alg:linucb:update not influence}). Node $v_6$ is not influenced and has no active in-neighbors, so we have no observation for its weight vector and will not update on it. \fang{later}

\subsection{Regret Analysis}

We now provide the group observation modulated (GOM) bounded smoothness property for LT model, an important relationship of the influence spreads under two weight vectors. 
It plays a crucial role in the regret analysis and states that the difference of the influence spread $r(S, w)$ under two weight vectors can be bounded in terms of the weight differences of the distilled observed edge sets under one weight vector. 
It is conceptually similar to the triggering probability modulate (TPM) bounded smoothness condition under the IC model with edge-level feedback \cite{WeiChen2017}, but its derivation and usage are quite different.
For the seed set $S$, define the set of all nodes related to a node $v$, $V_{S,v}$, to be the set of nodes that are on any path from $S$ to $v$ in graph $G$.

\begin{theorem}\label{theorem:TPM}
(GOM bounded smoothness) For any two weight vectors $w,w' \in [0,1]^m$ with $\sum_{u\in N(v)}w(e_{u,v}) \le 1$, the difference of their influence spread for any seed set $S$ can be bounded as
\begin{align}
&\abs{r(S, w')-r(S,w)} \le  \mathbb{E}\Bigg[ \sum_{v \in V \setminus S}  \sum\limits_{u \in V_{S,v}} \sum_{\tau = \tau_1(u)}^{\tau_2(u)-1}  \abs{ \sum_{e \in E_{\tau}(u)} (w'(e)-w(e))}  \Bigg] ,
   % \notag \abs{r(S, w')-r(S,w)} \le  \mathbb{E}\Bigg[ \sum_{v \in V \setminus S}  \sum\limits_{u \in V_{S,v}} \Bigg\{ &\bOne{\text{$u$ has active in-neighbors}} \cdot \abs{ \sum_{e \in E_{\suc}(u)} (w'(e)-w(e))} \\ \notag
   % + &\bOne{\text{$u$ has active in-neighbors}} \cdot \abs{\sum\limits_{e \in E_{\fail}(u)} (w'(e)-w(e)) } \Bigg\}\Bigg] 
% &\Bigg[\sum_{v \in V \setminus S}  \sum\limits_{u \in V_{S,v}} \Bigg\{ \bOne{\text{u is influenced}} \cdot \abs{ \sum_{e \in E_{\suc}(u)} (w'(e)-w(e))} \\
%  + & \bOne{\text{u is influenced and } E_{\fail}(u) \neq \emptyset } \cdot  \abs{\sum\limits_{e \in E_{\fail}(u)} (w'(e)-w(e)) } \\
% + & \bOne{\text{u is not influenced but has active in-neighbors}}\cdot \abs{ \sum\limits_{e \in E_{\fail}(u)} (w'(e)-w(e))} \Bigg\}  \Bigg] \notag
\end{align} 
where the definitions of $\tau_1(u), \tau_2(u)$ and $E_\tau(u)$ are all under weight vector $w$, and the expectation is taken over the randomness of the thresholds on nodes.
\end{theorem}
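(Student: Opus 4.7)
The plan is to couple the two diffusion processes under $w$ and $w'$ by sharing the same threshold vector $\theta$, and to bound, for each $v\in V\setminus S$, the probability that $v$'s activation status differs between the coupled runs. Let $\mathds{1}_v^w(\theta)$ denote the indicator that $v$ is activated in the $w$-diffusion under thresholds $\theta$, and similarly for $w'$. The triangle inequality gives
\begin{align*}
\abs{r(S,w')-r(S,w)} \le \EE{\sum_{v\in V\setminus S}\abs{\mathds{1}_v^{w'}(\theta)-\mathds{1}_v^{w}(\theta)}}\,,
\end{align*}
so it suffices to show, for each fixed $v\notin S$, that the $\theta$-probability of a mismatch at $v$ is bounded by $\EE{\sum_{u\in V_{S,v}}\sum_{\tau=\tau_1(u)}^{\tau_2(u)-1}\Delta_\tau(u)}$, where $\Delta_\tau(u):=\abs{\chi(E_\tau(u))^\top(w_u'-w_u)}$ and all $w$-process quantities refer to the coupled run.

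The structural heart of the argument is a cascade decomposition: I want to attribute any mismatch at $v$ to an ``earliest divergence'' at some ancestor $u\in V_{S,v}$. At such a first divergence, all activations of strict predecessors of $u$ along any path from $S$ agree under $w$ and $w'$, so the set of $u$'s active in-neighbors at every time $\tau\le\tau_2(u)-1$ is identical under the two weight vectors; hence the mismatch at $u$ must be caused purely by the weight difference, which means $\theta_u$ lies strictly between $\chi(E_\tau(u))^\top w_u$ and $\chi(E_\tau(u))^\top w_u'$ for some $\tau\in[\tau_1(u),\tau_2(u)-1]$. This first-divergence claim is established by induction in topological order along paths from $S$ to $v$. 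Granted the decomposition, the probability bound per fixed $u$ is routine: for $\tau<\tau_2(u)$, both $\tau_1(u)$ and $E_\tau(u)$ are determined by the thresholds $\theta_{u'}$ of other nodes $u'\ne u$, so conditional on those thresholds $\theta_u$ is uniform on $[0,1]$ and the probability it lies in an interval of length $\Delta_\tau(u)$ is at most $\Delta_\tau(u)$; a union bound over $\tau$ followed by summation over $u\in V_{S,v}$ and $v\in V\setminus S$ finishes the argument.

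The main obstacle I anticipate is making the cascade decomposition airtight in the presence of aggregated feedback. When an ancestor activates at a different time step under $w'$, the downstream observation sets $E_\tau(\cdot)$ themselves shift, and a careless charging scheme can either miss cascading divergences or double-count them. The fix is to attribute every mismatch at $v$ to exactly one ancestor --- the earliest one in the joint causal order whose threshold $\theta_u$ separates the $w$- and $w'$-aggregated weights of the $w$-defined set $E_\tau(u)$. This parallels the TPM bounded smoothness argument used for edge-level IC in \cite{WeiChen2017}, but is genuinely more delicate here because the triggering event at each node is an aggregation over a random, process-dependent set of edges rather than a single edge event, so the sets $E_\tau(u)$ themselves carry nontrivial dependence on the upstream randomness that must be untangled when integrating over $\theta_u$.
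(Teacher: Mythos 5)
Your proposal follows essentially the same route as the paper's proof: couple the two diffusions through a shared threshold vector, reduce to per-node mismatch probabilities, attribute each mismatch to the earliest-diverging ancestor $u\in V_{S,v}$ (the paper's events $\cE_1(u)$, $\cE_{2,1}(u,\tau)$, $\cE_{2,2}(u,\tau)$), and then integrate out $\theta_u$ conditional on $\theta_{-u}$ to bound the divergence probability at $u$ by $\abs{\chi(E_\tau(u))^\top(w_u'-w_u)}$. The one step you describe informally --- that on the relevant event the sets $E_\tau(u)$ for $\tau<\tau_2(u)$ are determined by $\theta_{-u}$ alone, and that the conditioning event is a $\theta_u$-interval so the uniform-density calculation and the detachment from $w'$ go through --- is precisely where the paper invests its care (via the events $\cE_{2,0}$ and $\cE_{4,0}$), and your stated fix matches that treatment.
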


% \begin{theorem}\label{theorem:TPM}
% (GOM bounded smoothness) For any two weight vectors $w,w' \in [0,1]^m$ with $\sum_{u\in N(v)}w(e_{u,v}) \le 1$, the difference of their influence spread for any seed set $S$ can be bounded as
% \begin{align}
% &\abs{r(S, w')-r(S,w)} \notag\\
% &\qquad \qquad \le  \mathbb{E}\Bigg[ \sum_{v \in V \setminus S}  \sum\limits_{u \in V_{S,v}} \Bigg\{ \abs{ \sum_{e \in E_{\suc}(u)} (w'(e)-w(e))} + \abs{\sum\limits_{e \in E_{\fail}(u)} (w'(e)-w(e)) } \Bigg\}\Bigg] 
%    % \notag \abs{r(S, w')-r(S,w)} \le  \mathbb{E}\Bigg[ \sum_{v \in V \setminus S}  \sum\limits_{u \in V_{S,v}} \Bigg\{ &\bOne{\text{$u$ has active in-neighbors}} \cdot \abs{ \sum_{e \in E_{\suc}(u)} (w'(e)-w(e))} \\ \notag
%    % + &\bOne{\text{$u$ has active in-neighbors}} \cdot \abs{\sum\limits_{e \in E_{\fail}(u)} (w'(e)-w(e)) } \Bigg\}\Bigg] 
% % &\Bigg[\sum_{v \in V \setminus S}  \sum\limits_{u \in V_{S,v}} \Bigg\{ \bOne{\text{u is influenced}} \cdot \abs{ \sum_{e \in E_{\suc}(u)} (w'(e)-w(e))} \\
% %  + & \bOne{\text{u is influenced and } E_{\fail}(u) \neq \emptyset } \cdot  \abs{\sum\limits_{e \in E_{\fail}(u)} (w'(e)-w(e)) } \\
% % + & \bOne{\text{u is not influenced but has active in-neighbors}}\cdot \abs{ \sum\limits_{e \in E_{\fail}(u)} (w'(e)-w(e))} \Bigg\}  \Bigg] \notag
% \end{align} 
% where the influence status and the definitions of $E_\fail, E_\suc$ are all under weight vector $w$, and the expectation is taken over the randomness of the thresholds on nodes. Here we use the notation that $\sum_{e\in \emptyset} \cdot = 0$.
% \end{theorem}

This theorem connects the reward difference with weight differences on the distilled observations,
which are also the information used to update the algorithm (line \ref{alg:linucb:update start}-\ref{alg:linucb:update end}). It links the effective observations, updates of the algorithm and the regret analysis.
The proof needs to deal with intricate dependency among activation events, and is put in
\ifsup  
Appendix \ref{app:gom proof}.
\else 
the supplementary materials. 
\fi
due to the space constraint.

For seed set $S \in \cA$ and node $u \in V \setminus S$, define 
$
N_{S,u} := \sum_{v\in V\backslash S} \bOne{u\in V_{S, v}} \le n-K
$
to be the number of nodes that $u$ is relevant to. Then for the vector $N_{S} = (N_{S,u})_{u \in V}$, define the upper bound of its $L^2$-norm over all feasible seed sets
\begin{align*}
\gamma(G) := \max_{S \in \cA} \sqrt{\sum_{u\in V} N_{S,u}^2} \le  (n-K)\sqrt{n} = O(n^{3/2})\,,
\end{align*}
which is a constant related to the graph. Then we have the following regret bound.

\begin{theorem}\label{main theorem}
Suppose the $\ltlinucb$ runs with an $(\alpha,\beta)$-approximation $\pairoracle$ and parameter
$
\rho_{t,v} = \rho_t =   \sqrt{n \log(1+tn)+2\log\frac{1}{\delta}} + \sqrt{n}
$ for any node $v \in V$.
Then the $\alpha \beta$-scaled regret satisfies
\begin{align}
  R(T) &\le 2\rho_T \gamma(G) D \sqrt{mnT \log(1+T) / \log(1+n)} + n\delta \cdot T(n-k)\,. \label{eq: regret bound}
\end{align}

When $\delta = 1/ (n\sqrt{T})$, $R(T) \le C \cdot \gamma(G) \  D n\sqrt{mT} \log(T)$ for some universal constant $C$.
\end{theorem}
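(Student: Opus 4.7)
The overall strategy is the standard LinUCB-style regret analysis adapted to the non-linear, graph-structured reward via the GOM bounded smoothness property (Theorem~1). I would begin by defining a good event $\cE = \{w_v \in \cC_{t,v}\ \text{for all } t \le T \text{ and } v \in V\}$. By Lemma~\ref{lem:ltlinucb:confidence ellipsoid} applied at each node $v$ with the choice of $\rho_t$ in the theorem, and a union bound over the $n$ nodes, $\PP{\bar\cE} \le n\delta$. On $\bar\cE$, each round contributes at most $n-K$ to the regret, which yields the additive term $n\delta\, T(n-K)$ in \eqref{eq: regret bound}. The rest of the argument focuses on bounding $\EE{\sum_t R_t \cdot \mathds{1}\{\cE\}}$.

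Still on $\cE$, since $(S_w^{\opt}, w)$ is a feasible pair for the WCIM problem at round $t$, we have $r(S^{\popt}_{\cC_t}, w^{\popt}_{\cC_t}) \ge \opt_w$. The $(\alpha,\beta)$-approximation property of $\pairoracle$ then gives $\EE{r(S_t, \tilde w_t) \mid \cC_t} \ge \alpha\beta\, \opt_w$, where $\tilde w_t$ is the oracle's weight-vector output. Combining with the linearity of expectation, the per-round scaled regret is controlled by the \emph{proxy regret} $\EE{r(S_t,\tilde w_t) - r(S_t, w)}$, to which we can apply Theorem~1. Rewriting the double sum $\sum_{v \in V\setminus S_t}\sum_{u \in V_{S_t,v}}$ as $\sum_{u \in V} N_{S_t,u}$, the GOM inequality becomes
\begin{align*}
\EE{r(S_t,\tilde w_t) - r(S_t,w)} \le \EE{\sum_{u \in V} N_{S_t,u} \sum_{\tau=\tau_{t,1}(u)}^{\tau_{t,2}(u)-1} \bigl|\chi(E_{t,\tau}(u))^\top (\tilde w_{t,u} - w_u)\bigr|}.
\end{align*}
The next step, and in my view the main subtlety, is to connect this sum over $\tau$ to the single random observation $(A_{t,u}, y_{t,u})$ actually used by Algorithm~\ref{alg:LinUCB} (line~\ref{alg:linucb:choosetau}). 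Since $\tau$ is drawn uniformly from $\{\tau_{t,1}(u),\ldots,\tau_{t,2}(u)-1\}$ and this window has length at most $D$, the inner sum equals $(\tau_{t,2}(u)-\tau_{t,1}(u))\cdot \EE_\tau[|A_{t,u}^\top (\tilde w_{t,u} - w_u)|] \le D\,\EE_\tau[|A_{t,u}^\top(\tilde w_{t,u}-w_u)|]$. This is where the algorithm's random tie-breaking over $\tau$ is crucial: it turns a deterministic sum over unobserved time steps into an expected value over the single observation used for the Gramian update.

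From here the argument follows the standard linear-bandit template. Two applications of Cauchy--Schwarz, plus the confidence-ellipsoid bound $\|\tilde w_{t,u} - w_u\|_{M_{t-1,u}} \le 2\rho_t$ on $\cE$, give
\begin{align*}
\sum_u N_{S_t,u} |A_{t,u}^\top(\tilde w_{t,u}-w_u)| \le 2\rho_t \cdot \gamma(G) \cdot \sqrt{\sum_{u \in V} \|A_{t,u}\|_{M_{t-1,u}^{-1}}^2},
\end{align*}
where the definition of $\gamma(G)$ absorbs $\sqrt{\sum_u N_{S_t,u}^2}$. Summing over $t$ and applying Cauchy--Schwarz in time yields a factor $\sqrt{T\cdot \sum_{t,u} \|A_{t,u}\|_{M_{t-1,u}^{-1}}^2}$. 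Finally, the elliptical potential lemma (determinant-trace inequality) applied at each node $u$ with $\|A_{t,u}\|_2^2 \le |N(u)|$ gives $\sum_t \|A_{t,u}\|_{M_{t-1,u}^{-1}}^2 \le \tfrac{|N(u)|}{\log(1+|N(u)|)}\cdot |N(u)|\log(1+T)$; using $|N(u)| \le n$ and $\sum_u |N(u)| = m$ yields
\begin{align*}
\sum_{t,u} \|A_{t,u}\|_{M_{t-1,u}^{-1}}^2 \le \frac{mn\log(1+T)}{\log(1+n)}.
\end{align*}
Putting these pieces together produces the main term $2\rho_T \gamma(G) D\sqrt{mnT\log(1+T)/\log(1+n)}$ in \eqref{eq: regret bound}; adding the failure contribution from $\bar\cE$ completes the proof, and plugging $\delta = 1/(n\sqrt{T})$ with $\rho_T = O(\sqrt{n\log T})$ gives the final $\tilde O(\gamma(G)\,Dn\sqrt{mT})$ bound. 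The main conceptual difficulty, beyond the GOM smoothness proof itself which is assumed here, lies in step (the random-$\tau$ reduction above): every other step is a direct adaptation of linear-bandit analysis.
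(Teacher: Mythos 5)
Your proposal is correct and follows essentially the same route as the paper: the failure-event decomposition with union bound over nodes, the feasibility/approximation argument reducing the scaled regret to the proxy regret $r(S_t,\tilde w_t)-r(S_t,w)$, the GOM smoothness bound combined with the random-$\tau$ reduction (which is exactly the paper's Lemma~\ref{lem:combine TPM with alg}), two applications of Cauchy--Schwarz with $\gamma(G)$, and the determinant--trace elliptical potential argument giving $\frac{mn\log(1+T)}{\log(1+n)}$. The only (cosmetic) difference is that you index the Gramian as $M_{t-1,u}$ where the paper writes $M_{t,u}$; your indexing is the one that matches the confidence set actually constructed in line~\ref{alg:linucb:compute ellipsoid}.
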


% Our regret is $O(\sqrt{n})$ better in complete graph compared to the result of IC model with edge-level feedback, though these results are not directly comparable. 
% This might be due to the less information in the feedback. 
Due to space limits, the proof and the detailed discussions, as well as the values of $\gamma(G)$, are put in 
\ifsup
Appendix \ref{sec:app linucb}.
\else
supplementary materials.
\fi

%!TEX root =  main.tex

\section{The Explore-then-Commit Algorithm}\label{sec:IMETC}

This section presents the explore-then-commit (ETC) algorithm for OIM. Though simple, it is efficient and model independent, applying to both LT and IC model with less requirement on feedback and offline computation.

Recall that under LT model, a node $v$ is activated if the sum of weights from active in-neighbors exceeds the threshold $\theta_v$, which is uniformly drawn from $[0,1]$. Since the feedback is node-level, if the activated node $v$ has more than one active in-neighbors, then we can only observe the group influence effect of her active in-neighbors instead of each single in-neighbor. A simple way to overcome this limitation and manage to observe directly the single weight $w(e_{u,v})$ is to select a single seed $\set{u}$ and take only the first step influence as feedback, which formulates our $\oimetc$ algorithm (Algorithm \ref{alg:IMETC}), representing the ETC algorithm of the OIM problem.

Our $\oimetc$ takes the exploration budget $k$ as input parameter such that each node $u$ is selected as the (single) seed for $k$ rounds (line \ref{algo:etc:choose node}). 
For each round in which $u$ is the seed, each outgoing neighbor (shortened as out-neighbor) $v \in N^{\mathrm{out}}(u)$ will be activated in the first step with probability $\PP{w(e_{u,v}) > \theta_v} = w(e_{u,v})$ since the threshold $\theta_v$ is independently uniformly drawn from $[0,1]$. Thus the first-step node-level feedback is actually edge-level feedback and we can observe the independent edges from the first-step feedback (line \ref{algo:etc:get observation}). Since each node is selected $k$ times, we have $k$ observations of Bernoulli random variables with expectation $w(e_{u,v})$ in this \textit{exploration} phase. Then we take the empirical estimate $\hat{w}(e)$ for each $w(e)$ (line \ref{algo:etc:compute hat w}) after the exploration and run with the seed set output by the offline $\oracle$ (line \ref{algo:etc:compute hat S}) for the remaining $T-nk$ \textit{exploitation} rounds (line \ref{algo:etc:exploit}).
We assume the offline $\oracle$ is $(\alpha,\beta)$-approximation. 

Since it only needs the first step of the diffusion process and calls only once of the usual IM oracle, it is efficient and has less requirement.
By selecting reasonable $k$, we can derive good regret bounds. Before that we need two definitions.

\begin{algorithm}[t]{}
\caption{$\oimetc$}
\label{alg:IMETC}
\begin{algorithmic}[1]
\STATE \textbf{Input:} $G=(V,E)$, seed size $K$, exploration budget $k$, time horizon $T$, offline oracle $\oracle$
\FOR{$s \in [k], u \in V$}
  \STATE Choose $\set{u}$ as the seed set
  \label{algo:etc:choose node}
  \STATE $X_s(e_{u,v}) := \bOne{v \text{ is activated}}$ for any $v  \in N^{\mathrm{out}}(u)$
  \label{algo:etc:get observation}
\ENDFOR
\STATE Compute $\hat{w}(e) := \frac{1}{k}\sum_{s=1}^{k}X_s(e)$ for any $e \in E$
\label{algo:etc:compute hat w}
\STATE $\hat{S} = \oracle(\hat{w})$
\label{algo:etc:compute hat S}
\FOR{the remaining $T-nk$ rounds}
  \STATE Choose $\hat{S}$ as the seed set
  \label{algo:etc:exploit}
\ENDFOR
\end{algorithmic}
\end{algorithm}

\begin{definition}\label{def:BadSeedSet}
(Bad seed set) \  A seed set $S$ is bad if $ r(S, w) < \alpha \cdot \opt_w$. The set of bad seed sets is $\cS_B := \set{S \mid r(S, w) < \alpha \cdot \opt_w }$. 
\end{definition}

\begin{definition}\label{def:Gaps}
(Gaps of bad seed sets)\ For a bad seed set $S \in \mathcal{S}_B$, its gap is defined as $\Delta_S := \alpha \cdot \opt_w - r(S, w)$. The maximum and minimum gap are defined as
\begin{align}
\label{eq:Delta_max} \Delta_{\max} := \alpha \cdot \opt_w - \min \set{r(S, w) \mid S \in \mathcal{S}_B }  \,,\\
\label{eq:Delta_min} \Delta_{\min} := \alpha \cdot \opt_w - \max \set{r(S, w) \mid S \in \mathcal{S}_B }  \,.
\end{align}
\end{definition}

\begin{theorem}
\label{thm:etc}
When $k = \max\set{1, \frac{2m^2 n^2}{\Delta_{\min}^2}\ln\bracket{\frac{T \Delta_{\min}^2}{m n^3}}  }$, the $\alpha\beta$-scaled regret bound of our $\oimetc$ algorithm over $T$ rounds satisfies
\begin{align}\label{eq:ETC_dependent}
R(T) &\leq \min\set{T\Delta_{\max}, n\Delta_{\max} + \frac{2m^2 n^3\Delta_{\max}}{\Delta_{\min}^2}\bracket{1 + \max\set{0,\ln\bracket{\frac{T\Delta_{\min}^2}{mn^3}} }   }} \notag\\
&= O\bracket{\frac{m^2 n^3\Delta_{\max}}{\Delta_{\min}^2} \ln(T)}\,.
\end{align}
When $k = 3.9 (m^2T/n)^{2/3}$, the $\alpha\beta$-scaled regret bound of $\oimetc$ algorithm over $T$ rounds satisfies
\begin{align}\label{eq:ETC_independent}
R(T) &\le 3.9(mn)^{4/3} T^{2/3} + 1 =O\bracket{(mn)^{4/3} T^{2/3}} \,.
\end{align}
\end{theorem}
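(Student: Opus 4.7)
My plan is to decompose the total regret into an exploration phase (the first $nk$ rounds, in which each node is used as the single seed $k$ times) and an exploitation phase (the remaining $T-nk$ rounds, using $\hat{S}$), then analyze each separately. The exploration regret is trivially bounded by $nk\Delta_{\max}$ since every round contributes at most $\Delta_{\max}$ to the $\alpha\beta$-scaled regret. The exploitation regret requires more care, and the central ingredient is a Lipschitz-type property of the influence spread $r(S,\cdot)$ in the weight vector. Specifically, I aim to establish a constant $C = O(mn)$ such that
\[
|r(S, w') - r(S, w)| \le C \cdot \max_{e \in E} |w'(e) - w(e)|
\]
for every seed set $S$ and every pair of admissible weight vectors $w, w'$. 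For the LT model this follows from applying the GOM bounded smoothness of Theorem~\ref{theorem:TPM} and counting the summands over $v \in V\setminus S$, $u \in V_{S,v}$, time steps $\tau$ and edges $e \in E_\tau(u)$; for the IC model, the analogous bound is immediate from the TPM bounded smoothness of~\cite{WeiChen2017}.

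Next, I combine this Lipschitz bound with the $(\alpha,\beta)$-approximation guarantee of the offline oracle. Writing $S_w^{\opt}$ for an optimal seed set under $w$, the chain $r(\hat S, \hat w) \ge \alpha \cdot r(S_{\hat w}^{\opt}, \hat w) \ge \alpha \cdot r(S_w^{\opt}, \hat w)$, together with the Lipschitz bound applied to both $\hat S$ and $S_w^{\opt}$, yields
\[
\alpha \opt_w - r(\hat S, w) \le 2C \cdot \max_e |\hat w(e) - w(e)|
\]
on the event that the oracle succeeds. Since each $\hat w(e)$ is an empirical mean of $k$ i.i.d.\ Bernoulli$(w(e))$ observations harvested from the first diffusion step (as justified by the text before line~\ref{algo:etc:get observation}), Hoeffding's inequality together with a union bound over the $m$ edges gives the tail bound $\PP{\max_e |\hat w(e) - w(e)| > \epsilon} \le 2m \exp(-2k\epsilon^2)$ for every $\epsilon > 0$.

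For the problem-dependent bound I would choose $\epsilon = \Delta_{\min}/(2C)$, so that on the good event $\alpha\opt_w - r(\hat S, w) \le \Delta_{\min}$, which by Definition~\ref{def:Gaps} forces $\hat S \notin \cS_B$ (hence no regret in exploitation). On the bad event each exploitation round contributes at most $\Delta_{\max}$, giving an exploitation bound of $T\Delta_{\max} \cdot 2m\exp\bracket{-k\Delta_{\min}^2/(2m^2n^2)}$. Plugging in $k = (2m^2n^2/\Delta_{\min}^2)\ln(T\Delta_{\min}^2/(mn^3))$ makes this term of order $m n^3 \Delta_{\max}/\Delta_{\min}^2$, while the exploration contribution $nk\Delta_{\max} = O(m^2n^3\Delta_{\max}\ln T/\Delta_{\min}^2)$ dominates, matching~\eqref{eq:ETC_dependent}; the $\max\set{1,\cdot}$ and the trivial bound $T\Delta_{\max}$ handle the small-$T$ corner case. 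For the problem-independent bound I would use the in-expectation version $E[\alpha\opt_w - r(\hat S, w)] \le 2C\epsilon + \Delta_{\max} \cdot 2m\exp(-2k\epsilon^2)$, optimize over $\epsilon$, and set $k = 3.9(m^2T/n)^{2/3}$. Then the exploration term $nk\Delta_{\max} \le n^2 k = 3.9(mn)^{4/3}T^{2/3}$ dominates the exploitation contribution, yielding~\eqref{eq:ETC_independent}.

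The main obstacle is the Lipschitz lemma for the LT model: while the GOM smoothness is already available, one still needs a careful bound on the number of summand contributions (essentially showing that each edge is hit $O(n)$ times across the random nested sets $E_\tau(u)$ for $u \in V_{S,v}$) to arrive at a clean $O(mn)$ constant. Once the Lipschitz property is in hand, the rest is a standard ETC balancing exercise with Hoeffding concentration and with the oracle's approximation factor plugged in symmetrically on both sides.
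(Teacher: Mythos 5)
Your overall skeleton (exploration/exploitation split, the chain through the $(\alpha,\beta)$-oracle, Hoeffding with a union bound over $m$ edges, and the choice $\epsilon\approx\Delta_{\min}/(2mn)$ for the gap-dependent bound) is exactly the paper's argument, and the problem-dependent part would go through. However, there are two concrete gaps. First, the Lipschitz constant: you propose to extract $\abs{r(S,w')-r(S,w)}\le C\max_e\abs{w'(e)-w(e)}$ with $C=O(mn)$ from the GOM smoothness of Theorem~\ref{theorem:TPM}. Counting the GOM summands does not give $O(mn)$: for each fixed target $v$ the inner sum over $u\in V_{S,v}$ and $\tau$ hits each edge up to $\tau_2(u)-\tau_1(u)\le D$ times, so you get $O(mD)$ per $v$ and $O(mnD)=O(mn^2)$ in total, not $O(mn)$ — your claim that ``each edge is hit $O(n)$ times'' conflates the per-$v$ count with the total over all $n$ targets. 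With $C=O(mn^2)$ the exploration budget becomes $k=\Theta(m^2n^4D^2/\Delta_{\min}^2)$ and the stated constants $2m^2n^2/\Delta_{\min}^2$ and $2m^2n^3\Delta_{\max}/\Delta_{\min}^2$ in Theorem~\ref{thm:etc} are not recovered. The paper instead proves the $mn$-Lipschitz property directly and independently of GOM (Lemma~\ref{lem:monotone and infty norm}), by a hybrid argument that changes one edge weight at a time: a change of $\delta$ on a single edge alters the diffusion with probability at most $\delta$ and then affects at most $n$ nodes, and summing over the $m$ edges gives $mn\max_e\abs{w(e)-w'(e)}$. This also keeps the ETC analysis model-independent (it applies verbatim to IC), which your GOM-based route would not.

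Second, the problem-independent bound \eqref{eq:ETC_independent}. Your plan is to use $\EE{\alpha\cdot\opt_w-r(\hat S,w)}\le 2C\epsilon+\Delta_{\max}\cdot 2m\exp(-2k\epsilon^2)$ and optimize over $\epsilon$. A single threshold forces $\epsilon\approx\sqrt{\ln(\cdot)/k}$, which after balancing against the exploration cost $n^2k$ leaves a residual $\mathrm{poly}(\log T)$ factor (and a different $m$-dependence), so it does not yield the clean $3.9(mn)^{4/3}T^{2/3}+1$ of the statement; with the prescribed $k=3.9(m^2T/n)^{2/3}$ and the tiny $\epsilon=1/(2mnT)$ that the theorem's constant requires, the term $2m\exp(-2k\epsilon^2)$ is essentially $2m$ and the bound collapses. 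The paper's proof replaces this step by a peeling argument: it bounds $\EE{\max_e\abs{\hat w(e)-w(e)}}$ by $\epsilon+\sum_{s\ge 0}2^{s+1}\epsilon\cdot 2m\exp(-2k\,2^{2s}\epsilon^2)$ and controls the geometric sum by comparison with $\int_0^\infty xe^{-x^2}\,dx$, obtaining $2mn\epsilon+7.69\,m^2n/\sqrt{k}$ with no logarithmic factor; only then does balancing against $n^2k$ produce \eqref{eq:ETC_independent}. You would need to add this (or an equivalent log-free maximal-inequality bound) to complete the second half of the theorem.
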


The proof of the problem-dependent bound follows routine ideas of ETC algorithms but the proof of the problem-independent bound is new. The proofs and discussions are put in 
\ifsup 
Appendix \ref{sec:app etc}. 
\else supplementary materials. 
\fi

\section{Conclusion}
\label{sec:conclusion}
In this paper, we formulate the problem of OIM under LT model with node-level feedback and design how to distill effective information from observations. 
We prove a novel GOM bounded smoothness property for the spread function, which relates the limited observations, algorithm updates and the regret analysis. We propose $\ltlinucb$ algorithm, provide rigorous theoretical analysis and show a competitive regret bound of $O(\mathrm{poly}(m)\sqrt{T} \ln(T))$. 
Our $\ltlinucb$ is the first algorithm for LT model with such regret order. Besides, we design $\oimetc$ algorithm with theoretical analysis on its distribution-dependent and distribution-independent regret bounds. 
The algorithm is efficient, applies to both LT and IC models, and has less requirements on feedback and offline computation.

In studying the OIM with LT model, we encounter an optimization problem of weight-constrained influence maximization (WCIM). Reconsidering an (offline) optimization problem by relaxing some fixed parameter to elements of a convex set is expected to be common in online learning. So we believe this problem could have independent interest. Also the OIM problem under IC model with node-level feedback is an interesting future work. 
Our regret analysis goes through thanks to the linearity of the LT model. 
But the local triggering is nonlinear for IC model, and thus we expect more challenges in the design and analysis of IC model with node-level feedback.
Applying Thompson sampling to influence maximization is also an interesting future direction, but it could also be challenging, since it may not work well with offline approximation oracles as pointed out in~\cite{WangC18}.

\newpage
 \section*{Acknowledgement}
 % This work is sponsored by Shanghai Sailing Program. 
 We thank Chihao Zhang for valuable discussions.

\bibliographystyle{plain}
\bibliography{ref}

\ifsup
\appendix
%!TEX root =  main.tex

\section{Analysis and Discussions of $\ltlinucb$}\label{sec:app linucb}

% \begin{theorem}\label{retheorem:TPM}
% (GOM bounded smoothness,restated) For any two weight vectors $w,w' \in [0,1]^m$ with $\sum_{u\in N(v)}w(e_{u,v}) \le 1$, the difference of their influence spread for any seed set $S$ can be bounded as
% \begin{align}
% &\abs{r(S, w')-r(S,w)} \notag\\
% &\qquad \qquad \le  \mathbb{E}\Bigg[ \sum_{v \in V \setminus S}  \sum\limits_{u \in V_{S,v}} \sum_{\tau = \tau_1(u)}^{\tau_2(u)}  \abs{ \sum_{e \in E_{\tau}(u)} (w'(e)-w(e))}  \Bigg] 
%    % \notag \abs{r(S, w')-r(S,w)} \le  \mathbb{E}\Bigg[ \sum_{v \in V \setminus S}  \sum\limits_{u \in V_{S,v}} \Bigg\{ &\bOne{\text{$u$ has active in-neighbors}} \cdot \abs{ \sum_{e \in E_{\suc}(u)} (w'(e)-w(e))} \\ \notag
%    % + &\bOne{\text{$u$ has active in-neighbors}} \cdot \abs{\sum\limits_{e \in E_{\fail}(u)} (w'(e)-w(e)) } \Bigg\}\Bigg] 
% % &\Bigg[\sum_{v \in V \setminus S}  \sum\limits_{u \in V_{S,v}} \Bigg\{ \bOne{\text{u is influenced}} \cdot \abs{ \sum_{e \in E_{\suc}(u)} (w'(e)-w(e))} \\
% %  + & \bOne{\text{u is influenced and } E_{\fail}(u) \neq \emptyset } \cdot  \abs{\sum\limits_{e \in E_{\fail}(u)} (w'(e)-w(e)) } \\
% % + & \bOne{\text{u is not influenced but has active in-neighbors}}\cdot \abs{ \sum\limits_{e \in E_{\fail}(u)} (w'(e)-w(e))} \Bigg\}  \Bigg] \notag
% \end{align} 
% where the definitions of $E_\tau(u)$ are all under weight vector $w$, and the expectation is taken over the randomness of the thresholds on nodes.
% \end{theorem}

\subsection{Proof of Theorem \ref{theorem:TPM}}
\label{app:gom proof}

Let $r_S^v(w)$ be the probability that node $v$ will be influenced under the weight vector $w$ when the seed set is $S$. Then
\begin{align*}
&\abs{r(S, w') - r(S, w)} \\
&\qquad \le \sum_{v \in V \setminus S} \abs{r_S^v(w') - r_{S}^v(w)} \\
&\qquad = \sum_{v \in V \setminus S}\mathbb{E}_{\theta \sim (\cU[0,1])^n} \left[\bOne{ v\text{ is influenced under } {w', \theta} }\neq \bOne{ v\text{ is influenced under } w, \theta}\right]\,,
\end{align*}
where we use $\cU[0,1]$ to denote the uniform distribution on the interval $[0,1]$. The reason that the activation of $v$ is different under $w$ and $w'$ must be that during the propagation from $S$ to $v$, at some step $\tau$ and some node $u \in V_{S,v}$, the activation of $u$ is different.
We enumerate $u \in V_{S,v}$ and enumerate $\tau$ from $1$ to $D$ to bound the above probability. Recall that $D$ is the propagation diameter.
Henceforth in this section, parameters $w$, $w'$, $S$, and $v$ are all fixed.
All the randomness comes from $\theta \sim (\cU[0,1])^n$, and once $\theta$ is determined, the diffusion process is determined.
Thus, we could assume that every event is a subset of $[0,1]^n$.
Define the following event, given the seed set $S$ and target node $v$:
\begin{equation*}
\cE_0 = \{\theta \mid \bOne{ v\text{ is influenced under } {w', \theta} }\neq \bOne{ v\text{ is influenced under } w, \theta} \}.
\end{equation*}
Thus
\begin{align}
\abs{r(S, w') - r(S, w)} & \le \sum_{v \in V \setminus S} \Pr_{\theta \sim (\cU[0,1])^n}\{\cE_0\}. \label{eq:reward2individualtarget}
\end{align}
Let $\Phi(w,\theta) = (S_0=S, S_1, \ldots, S_D)$ be the sequence of activation sets given weight factor $w$ and threshold factor $\theta$.
Let $\Phi_i(w,\theta) = S_i$ be the set of nodes activated by time step $i$.
For every node $u\in V_{S,v}$, we define the event that $u$ is the first node that has different activation under $w$ and $w'$.
\begin{align*}
\cE_1(u) = \{ \theta \mid \exists \tau\in [D], \forall \tau' < \tau,\ &\Phi_{\tau'}(w,\theta) = \Phi_{\tau'}(w',\theta), \\
&u \in (\Phi_{\tau}(w,\theta)\setminus \Phi_{\tau}(w',\theta)) \cup (\Phi_{\tau}(w',\theta) \setminus \Phi_{\tau}(w,\theta)) \} \,.
\end{align*}
It is clear that 
\begin{equation}
\cE_0 \subseteq \bigcup_{u\in V_{S,v}} \cE_1(u)\,. \label{eq:eventu}
\end{equation}
Note that for each node $u\in V_{S,v}$, $u$ may be activated at different time steps from different paths, or not activated at all. 
Thus, the fact that $u$ is not activated at one time step may have implications on $u$'s activations at other time steps, and thus we need to carefully 
	classify the activation of $u$ in order to bound the probability of $\cE_1(u)$.
Define the following events for each $\tau \in [D]$:
\begin{align*}
%\cE_{2,0}(u, 0) & = [0,1]^n, \\
%\cE_{2,0}(u,\tau) & = \{ \theta \mid u \not\in \Phi_{\tau}(w,\theta) \cup \Phi_{\tau}(w',\theta) \}, \\
\cE_{2,0}(u,\tau) & = \{ \theta \mid \forall \tau' < \tau, \Phi_{\tau'}(w,\theta) = \Phi_{\tau'}(w',\theta), u \not\in \Phi_{\tau-1}(w,\theta) \}\,, \\
\cE_{2,1}(u,\tau) & = \{\theta \mid \forall \tau' < \tau, \Phi_{\tau'}(w,\theta) = \Phi_{\tau'}(w',\theta), u \in \Phi_{\tau}(w,\theta)\setminus \Phi_{\tau}(w',\theta) \}\,,\\
\cE_{2,2}(u,\tau) & = \{\theta \mid  \forall \tau' < \tau, \Phi_{\tau'}(w,\theta) = \Phi_{\tau'}(w',\theta), u \in \Phi_{\tau}(w',\theta) \setminus \Phi_{\tau}(w,\theta)  \}\,, \\
\cE_{3,1}(u,\tau) & = \{\theta \mid u \in \Phi_{\tau}(w,\theta)\setminus \Phi_{\tau}(w',\theta) \}\,, \\
\cE_{3,2}(u,\tau) & = \{\theta \mid u \in \Phi_{\tau}(w',\theta)\setminus \Phi_{\tau}(w,\theta) \}\,. 
\end{align*}
Note that all the events $\cE_{2,1}(u,\tau),\cE_{2,2}(u,\tau)$ for $\tau \in [D]$ are mutually exclusive.
Therefore,
\begin{equation} \label{eq:sumevents}
\Pr_{\theta \sim (\cU[0,1])^n}\{\cE_1(u)\} = \sum_{\tau=1}^D \Pr_{\theta \sim (\cU[0,1])^n}\{\cE_{2,1}(u,\tau) \} + 
	\sum_{\tau=1}^D \Pr_{\theta \sim (\cU[0,1])^n}\{\cE_{2,2}(u,\tau) \}\,.
\end{equation}

We first bound $\Pr_{\theta \sim (\cU[0,1])^n}\{\cE_{2,1}(u,\tau) \}$. Now fix all entries of $\theta$ vector except $\theta_u$, denoted as $\theta_{-u}$, and the corresponding subevent of
	$\cE_{2,1}(u,\tau)$ is defined as $\cE_{2,1}(u,\tau, \theta_{-u}) \subseteq \cE_{2,1}(u,\tau)$.
Similarly $\cE_{2,0}(u,\tau, \theta_{-u}) \subseteq \cE_{2,0}(u,\tau)$ and $\cE_{3,1}(u,\tau, \theta_{-u}) \subseteq \cE_{3,1}(u,\tau)$ are defined. Also $\cE_{2,1}(u,\tau, \theta_{-u}) = \cE_{2,0}(u,\tau, \theta_{-u}) \cap \cE_{3,1}(u,\tau, \theta_{-u})$.

% Consider the case that $\cE_{2,0}(u,\tau, \theta_{-u}) \neq \emptyset$.

Note that $\cE_{2,1}(u,\tau) = \cE_{2,0}(u,\tau) \cap \cE_{3,1}(u,\tau)$, and $\cE_{2,2}(u,\tau) = \cE_{2,0}(u,\tau) \cap \cE_{3,2}(u,\tau)$.
Thus
\begin{align}
\Pr_{\theta \sim (\cU[0,1])^n}\{\cE_{2,1}(u,\tau) \} = \Pr_{\theta \sim (\cU[0,1])^n}\{\cE_{2,0}(u,\tau) \} \cdot 
		\Pr_{\theta \sim (\cU[0,1])^n} \{ \cE_{3,1}(u,\tau) \mid \cE_{2,0}(u,\tau)\} \,.
\end{align}

%Note that event $\cE_{2,1}(u,\tau)$ implies that $u$ must be not be activated under $w$ by time $\tau-1$ under either $w$ or $w'$, since otherwise
%	$u\in \Phi_{\tau-1}(w,\theta) = \Phi_{\tau-1}(w',\theta) \subseteq \Phi_{\tau}(w',\theta)$, contradicting to the fact that $u \not\in \Phi_{\tau}(w',\theta)$.
%This means that $\cE_{2,1}(u,\tau) \subseteq \cE_{2,0}(u,\tau-1)$.
%Similarly, $\cE_{2,2}(u,\tau) \subseteq \cE_{2,0}(u,\tau-1)$.
%Let $\cE_{2,0}(u,\tau) = \{ \theta \mid u \not\in \Phi_{\tau}(w,\theta) \cup \Phi_{\tau}(w',\theta) \} $
%	be the event that $u$ is not activated by time $\tau$ under either $w$ or $w'$.
%Denote $\cE_{2,0}(u, 0) = [0,1]^n$.

Then 
\begin{equation} \label{eq:E21-u}
\Pr_{\theta_u \sim \cU[0,1]}\{\cE_{2,1}(u,\tau, \theta_{-u}) \} = \Pr_{\theta_u \sim \cU[0,1]}\{\cE_{2,0}(u,\tau, \theta_{-u}) \} \cdot 
\Pr_{\theta_u \sim \cU[0,1]} \{ \cE_{3,1}(u,\tau, \theta_{-u}) \mid \cE_{2,0}(u,\tau, \theta_{-u})\}.
\end{equation}
Symmetric equations also hold for $\cE_{2,2}(u,\tau, \theta_{-u})$.

In the event $\cE_{2,0}(u,\tau, \theta_{-u})$, all entries in $\theta$ vector is fixed except for $\theta_u$.
% the event $\cE_{2,0}(u,\tau, \theta_{-u})$, 
It is easy to check that if $(\theta_{-u}, \theta_u) \in \cE_{2,0}(u,\tau, \theta_{-u})$, then for all $\theta'_u \ge \theta_u$, 
	$(\theta_{-u}, \theta'_u) \in \cE_{2,0}(u,\tau, \theta_{-u})$.
This is because the $\theta_{-u}$ is fixed, so the activations of all nodes other than $u$ have the same conditions, while for $u$
	it is even harder to activate $u$ with larger $\theta_u$.
Therefore, in $\cE_{2,0}(u,\tau, \theta_{-u})$, the entry on $\theta_u$ must be an interval from some lowest value to $1$.
Let $\theta_{u,2,0}(\tau, \theta_{-u})$ be the left point of this interval.
That is $\cE_{2,0}(u,\tau, \theta_{-u}) = \{(\theta_{-u}, \theta_u) \mid \theta_u > \theta_{u,2,0}(\tau, \theta_{-u})\}$.
Then we have 
\begin{align}
	\Pr_{\theta_u \sim \cU[0,1]}\{\cE_{2,0}(u,\tau, \theta_{-u}) \} = 1 - \theta_{u,2,0}(\tau, \theta_{-u}) \label{eq:E20-theta-u}.
\end{align}
For now, let's first assume that $\cE_{2,0}(u,\tau, \theta_{-u}) \ne \emptyset$, that is, $\theta_{u,2,0}(\tau, \theta_{-u}) < 1$.
% \wei{Why $\theta_{u,2,0}(\tau, \theta_{-u}) < 1$? Is it because the probability of $\theta_{u,2,0}(\tau, \theta_{-u}) = 1$ is zero? }.
In the event $\cE_{2,0}(u,\tau, \theta_{-u})$, we know that the set of activated nodes until $\tau-1$ are the same under both $w$ and $w'$ and $u$ is not activated by
	time $\tau-1$, and since $\theta_{-u}$ is fixed, the set of activated nodes by time $\tau-1$ are all fixed.
We denote the set of nodes activated by time step $i$ under event $\cE_{2,0}(u,\tau, \theta_{-u})$ as $\Phi_i(\cE_{2,0}(u,\tau, \theta_{-u}))$.

Now conditioned on the event $\cE_{2,0}(u,\tau, \theta_{-u})$, we consider event $\cE_{3,1}(u,\tau, \theta_{-u}) \cup \cE_{3,2}(u,\tau, \theta_{-u})$.
This means that conditioned on $\theta_u > \theta_{u,2,0}(\tau, \theta_{-u})$ and a fixed activated set $\Phi_{\tau-1}(\cE_{2,0}(u,\tau, \theta_{-u}))$ by time $\tau-1$,
	$u$ is activated at step $\tau$ under one of $w$ and $w'$ but not both.
According to the information diffsuion under the LT model, this means either the following inequality holds,
\begin{align*}
	\sum_{u' \in \Phi_{\tau-1}(\cE_{2,0}(u,\tau, \theta_{-u})) \cap N(u)} w'(e_{u', u}) < \theta_u \le \sum_{u' \in \Phi_{\tau-1}(\cE_{2,0}(u,\tau, \theta_{-u})) \cap N(u)} w(e_{u', u})\,.
\end{align*}
or the following holds
\begin{align*}
\sum_{u' \in \Phi_{\tau-1}(\cE_{2,0}(u,\tau, \theta_{-u})) \cap N(u)} w(e_{u', u}) < \theta_u \le \sum_{u' \in \Phi_{\tau-1}(\cE_{2,0}(u,\tau, \theta_{-u})) \cap N(u)} w'(e_{u', u})\,.
\end{align*}

This in turn implies that 
\begin{align*}
& \Pr_{\theta_u \sim \cU[0,1]} \{ \cE_{3,1}(u,\tau, \theta_{-u}) \cup \cE_{3,2}(u,\tau, \theta_{-u}) \mid \cE_{2,0}(u,\tau, \theta_{-u})\} \\
& =	\frac{\left | \sum_{u' \in \Phi_{\tau-1}(\cE_{2,0}(u,\tau, \theta_{-u})) \cap N(u)} w(e_{u', u}) -
	\sum_{u' \in \Phi_{\tau-1}(\cE_{2,0}(u,\tau, \theta_{-u})) \cap N(u)} w'(e_{u', u}) \right |}{1-\theta_{u,2,0}(\tau, \theta_{-u})}\,.
\end{align*}
Plugging the above equality and Eq.\eqref{eq:E20-theta-u} into Eq.\eqref{eq:E21-u}, and use the fact that $\cE_{3,1}(u,\tau, \theta_{-u}) $ and $\cE_{3,2}(u,\tau, \theta_{-u}) $ are mutually exclusive,  we have
\begin{align}
& \Pr_{\theta_u \sim \cU[0,1]}\{\cE_{2,1}(u,\tau, \theta_{-u}) \cup \cE_{2,2}(u,\tau, \theta_{-u}) \} \nonumber \\
& =  \left | \sum_{u' \in \Phi_{\tau-1}(\cE_{2,0}(u,\tau, \theta_{-u})) \cap N(u)} ( w(e_{u', u}) -w'(e_{u', u})) \right |\,. \label{eq:inequalityE20}
\end{align}

Note that when $\cE_{2,0}(u,\tau, \theta_{-u}) = \emptyset$, both the LHS and the RHS of the above equality is zero, so this equality holds in general.

We now need to relax event $\cE_{2,0}(u,\tau, \theta_{-u})$, since it depends on both $w$ and $w'$.
We define a new event to detach it from $w'$, 
\begin{align*}
\cE_{4,0}(u,\tau, \theta_{-u}) & = \{ \theta =(\theta_{-u}, \theta_u) \mid u \not \in \Phi_{\tau-1}(w,\theta) \}\,.
\end{align*}
It is clear that $\cE_{2,0}(u,\tau, \theta_{-u}) \subseteq \cE_{4,0}(u,\tau, \theta_{-u})$.
Moreover, when $\cE_{2,0}(u,\tau, \theta_{-u}) \ne \emptyset$, we see that both events $\cE_{2,0}(u,\tau, \theta_{-u})$ and $\cE_{4,0}(u,\tau, \theta_{-u})$
	have fixed $\theta_{-u}$ and dictate that $u$ is not activated by time $\tau-1$ under $w$.
This implies that they have the same set of nodes activated by time step $i$ for $i\le \tau-1$.
Denote $\Phi_i(\cE_{4,0}(u,\tau, \theta_{-u}))$ be this set.
The above means that for all $i\le \tau-1$, 
% when $\cE_{2,0}(u,\tau, \theta_{-u}) \ne \emptyset$, 
	$\Phi_{i}(\cE_{2,0}(u,\tau, \theta_{-u})) = \Phi_i(\cE_{4,0}(u,\tau, \theta_{-u}))$.
Therefore, we can relax Eq.\eqref{eq:inequalityE20}  to get the following.
\begin{align*} 
& \Pr_{\theta_u \sim \cU[0,1]}\{\cE_{2,1}(u,\tau, \theta_{-u}) \cup \cE_{2,2}(u,\tau, \theta_{-u})  \} \\
& \le \left | \sum_{u' \in \Phi_{\tau-1}(\cE_{4,0}(u,\tau, \theta_{-u})) \cap N(u)} ( w(e_{u', u}) -w'(e_{u', u})) \right |\,.
\end{align*}
% \wei{The above relaxation is trying to address my point 3 above, to remove the dependency on $w'$.}
Note that when $\cE_{2,0}(u,\tau, \theta_{-u}) = \emptyset$, the LHS of above is zero, so the inequality still holds.

Combining the above with Eq.\eqref{eq:sumevents}, we have
\begin{align*}
&\Pr_{\theta \sim (\cU[0,1])^n}\{\cE_1(u)\} \\
 = & \int_{\theta_{-u}\in [0,1]^{n-1}} \sum_{\tau=1}^D \Pr_{\theta_u \sim \cU[0,1]}\{\cE_{2,1}(u,\tau, \theta_{-u}) \cup 
	\cE_{2,2}(u,\tau, \theta_{-u})\} \, {\rm d} \theta_{-u} \\
 = & \sum_{\tau=1}^D \int_{\theta_{-u}\in [0,1]^{n-1}} \Pr_{\theta_u \sim \cU[0,1]}\{\cE_{2,1}(u,\tau, \theta_{-u}) \cup 
\cE_{2,2}(u,\tau, \theta_{-u})\} \, {\rm d} \theta_{-u} \\
 \le &\sum_{\tau=1}^D \int_{\theta_{-u}\in [0,1]^{n-1}} \left | \sum_{u' \in \Phi_{\tau-1}(\cE_{4,0}(u,\tau, \theta_{-u})) \cap N(u)} ( w(e_{u', u}) -w'(e_{u', u})) \right |
	 \, {\rm d} \theta_{-u} \\
 = &\sum_{\tau=1}^D \mathbb{E}_{\theta_{-u} \sim (\cU[0,1])^{n-1}} 
	\left[ \left | \sum_{u' \in \Phi_{\tau-1}(\cE_{4,0}(u,\tau, \theta_{-u})) \cap N(u)} ( w(e_{u', u}) -w'(e_{u', u})) \right | \right]\,.
\end{align*}
Combining the above with Eq.\eqref{eq:reward2individualtarget} and Eq.\eqref{eq:eventu}, we have
\begin{align*}
&\abs{r(S, w') - r(S, w)} \\
 \le & \sum_{v \in V \setminus S} \sum_{u\in V_{S,v}} 
	\sum_{\tau=1}^D \mathbb{E}_{\theta_{-u} \sim (\cU[0,1])^{n-1}} 
	\left[ \left | \sum_{u' \in \Phi_{\tau-1}(\cE_{4,0}(u,\tau, \theta_{-u})) \cap N(u)} ( w(e_{u', u}) -w'(e_{u', u})) \right | \right] \\
	= &~\mathbb{E}\Bigg[ \sum_{v \in V \setminus S}  \sum\limits_{u \in V_{S,v}} \sum_{\tau = \tau_1(u)}^{\tau_2(u)-1}  \abs{ \sum_{e \in E_{\tau}(u)} (w(e)-w'(e))}  \Bigg] \,,
\end{align*}
where the last equality comes from the definition of $\tau_1(u)$,$\tau_2(u)$ and $E_\tau(u)$ under weight vector $w$ and the expectation is taken over the randomness of the thresholds on nodes, specifically the value of $\tau_1(u)$,$\tau_2(u)$ and $E_\tau(u)$ for each time step $\tau$. Thus we get the desired result.

\subsection{Proof of the Regret}
\label{app:ltlinucb proof}

The key Theorem \ref{theorem:TPM} describes the difference of the influence spread under two weight vectors in terms of the (expected) weight differences of some edge sets, which coincides with the possible observations under LT model. So this theorem justifies why we distill the information and design the updates of the algorithm $\ltlinucb$ in this way. Next lemma further states the rationality explicitly. Recall that $w$ is the (unknown) true weight vector.

% Define $D_G$ as the dimeter of the graph $G$, that is the longest path from any two nodes in $G$. 

% \begin{lemma}\label{lem:combine TPM with alg}
% Let $S, w'$ be the seed set and the weight vector output by line \ref{alg:linucb:compute seedset} of $\pairoracle$. Then for each fixed threshold $\theta \in [0,1]^n$,
% \begin{align*}
% &\bOne{\text{$u$ has active in-neighbors}} \cdot \abs{ \sum_{e \in E_{\suc}(u)} (w'(e)-w(e))} \\
%  &\qquad + \bOne{\text{$u$ has active in-neighbors}} \cdot  \abs{\sum\limits_{e \in E_{\fail}(u)} (w'(e)-w(e)) } \\
% % & \qquad+ \bOne{\text{u is not influenced but has active in-neighbors}}\cdot \abs{ \sum\limits_{e \in E_{\fail}(u)} (w'(e)-w(e))} 
% \\
%  &\leq 2 \cdot \bOne{u \text{ is updated under algorithm } \ltlinucb } \cdot \EE{\abs{A_u^\top(w_u'-w_u)}}
% \end{align*}
% where the influenced status and the definitions of $E_\fail, E_\suc$ are defined under weight vector $w$, $A_u$ is the indicator of the distilled edge set chosen by $\ltlinucb$ to update for node $u$ and the expectation is taken over the randomness of $A_u$ when there are two choices for the algorithm.
% \end{lemma}
\begin{lemma}\label{lem:combine TPM with alg}
% Fix any round $t$. Let $S, w'$ be the seed set and the weight vector output by line \ref{alg:linucb:compute seedset} of $\pairoracle$. Then for each fixed threshold $\theta \in [0,1]^n$,
% \begin{align*}
%  \sum_{\tau = \tau_1(u)}^{\tau_2(u)-1}\abs{ \sum_{e \in E_{\tau}(u)} (w'(e)-w(e))} \leq  D\cdot \EE{\abs{A_u^\top(w_u'-w_u)}},
% \end{align*}
% where the definitions of $\tau_1(u),\tau_2(u)$ and $E_\tau(u)$ are defined under weight vector $w$, $A_u$ is the indicator of the distilled edge set chosen by $\ltlinucb$ to update for node $u$ and the expectation is taken over the randomness of $\tau$ (line \ref{alg:linucb:choosetau}) in defining $A_u$ when $u$ has active in-neighbors.
% \wei{It is not very clear to me what $A_u$ is. Is it $A_{t,u}$ in the algorithm? Should it be $A_{t,u}$ or $A_{t-1,u}$? Perhaps the lemma should refer to $t$ to make the statement unambiguous.
% Also, if we already fixed $\theta$ and fix $w'$ and time $t$, is it that the only randomness is the our random choice of $\tau$? Perhaps we can be more clear when we say over which the expectation is taken. 
% I tried a version below. Please check if my revision is accurate.
% }

Let $S, w'$ be the seed set and the weight vector output at line \ref{alg:linucb:compute seedset} of $\pairoracle$ in a round $t$. Then for each fixed threshold $\theta \in [0,1]^n$,
\begin{align*}
\sum_{\tau = \tau_1(u)}^{\tau_2(u)-1}\abs{ \sum_{e \in E_{\tau}(u)} (w'(e)-w(e))} \leq  D\cdot \EE{\abs{A_u^\top(w_u'-w_u)}},
\end{align*}
where the definitions of $\tau_1(u),\tau_2(u)$ and $E_\tau(u)$ are defined under weight vector $w$, 
$A_u$ is the value of $A_{t,u}$ updated in lines~\ref{alg:linkbeginupdateAu}--\ref{alg:linefinishupdateAu} in round $t$, which is the distilled edge set chosen by $\ltlinucb$ to update for node $u$,
and the expectation is taken over the randomness of $\tau$ (line \ref{alg:linucb:choosetau}) in determining $A_u$ when $u$ has active in-neighbors.
\end{lemma}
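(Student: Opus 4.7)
}
This is essentially a bookkeeping identity rather than a new estimate: the plan is to unpack the expectation on the right-hand side, observe that it is literally a uniform average of the terms on the left-hand side, and then bound the resulting prefactor. The crucial point is that once the threshold $\theta$ is fixed, the diffusion under the true weight $w$ is deterministic, so $\tau_1(u)$, $\tau_2(u)$, and the sets $E_\tau(u)$ are all deterministic, and the only randomness remaining on the RHS is the uniform choice of $\tau$ made at line~\ref{alg:linucb:choosetau} of Algorithm~\ref{alg:LinUCB}.

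I would split into two cases. First, when $u$ has no active in-neighbors, by definition $\tau_1(u) = \tau_2(u) = D+1$, the set $\{\tau' : \tau_1(u) \le \tau' \le \tau_2(u)-1\}$ is empty, the LHS is an empty sum equal to $0$, and the algorithm keeps $A_u = 0$, so the RHS also vanishes. In the non-trivial case (where $u$ has at least one active in-neighbor), a direct inspection of lines~\ref{alg:linkbeginupdateAu}--\ref{alg:linefinishupdateAu} shows that in \emph{both} update branches the vector $A_u$ is set to $\chi(E_\tau(u))$; the two branches differ only in the binary label $y_{t,u}$. Hence for every $\tau \in \{\tau_1(u),\ldots,\tau_2(u)-1\}$,
\begin{align*}
A_u^\top (w_u' - w_u) \;=\; \chi(E_\tau(u))^\top (w_u' - w_u) \;=\; \sum_{e \in E_\tau(u)} (w'(e) - w(e))\,.
\end{align*}
Since $\tau$ is drawn uniformly from a set of cardinality $\tau_2(u) - \tau_1(u)$, taking expectations over the algorithm's internal randomness gives
\begin{align*}
\EE{\abs{A_u^\top(w_u' - w_u)}} \;=\; \frac{1}{\tau_2(u) - \tau_1(u)} \sum_{\tau = \tau_1(u)}^{\tau_2(u)-1} \abs{\sum_{e \in E_\tau(u)} (w'(e) - w(e))}\,.
\end{align*}
Multiplying through and using $\tau_2(u) - \tau_1(u) \le D$ (the LT diffusion takes at most $D$ time steps, $D$ being the propagation diameter) yields the claimed bound.

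The argument has no genuine obstacle; the only points requiring care are (i) confirming that the two seemingly different update branches in the algorithm produce identical $A_u$'s (so that the expectation collapses cleanly over $\tau$), and (ii) justifying the prefactor bound $\tau_2(u) - \tau_1(u) \le D$ from the propagation-diameter convention. Everything else is arithmetic.
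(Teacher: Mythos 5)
Your proposal is correct and follows essentially the same argument as the paper: split on whether $u$ has active in-neighbors, observe that both update branches set $A_u = \chi(E_\tau(u))$ so the expectation over the uniform choice of $\tau$ is exactly the average of the LHS terms over $D_u := \tau_2(u)-\tau_1(u)$ values, and conclude via $D_u \le D$. No gaps.
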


\begin{proof}
% Note that the event inside the indicator on the LHS implies the event on the RHS according to the update rule of the $\ltlinucb$ (Algorithm line  \ref{alg:linucb:update start} - \ref{alg:linucb:update end}). 

Let $D_u = \tau_2(u)-\tau_1(u)$. If $u$ has active in-neighbors, then according to line \ref{alg:linucb:update start}-\ref{alg:linucb:update end} of the Algorithm \ref{alg:LinUCB}, $A_u=\chi( E_{\tau}(u))$ where $\tau = \tau_1(u),\ldots,\tau_2(u)-1$ with probability $1/D_u$ respectively. Thus, 
\begin{align*}
	\EE{\abs{A_u^\top(w_u'-w_u)}} &= \frac{1}{D_u} \sum_{\tau = \tau_1(u)}^{\tau_2(u)-1} \abs{\chi( E_{\tau}(u))^\top (w_u'- w_u)}\\
	&= \frac{1}{D_u} \bracket{ \sum_{\tau = \tau_1(u)}^{\tau_2(u)-1} \abs{\sum\limits_{e \in E_{\tau}(u)} (w'(e)-w(e)) } } \,.
\end{align*}

Since the diffusion process lasts for at most $D$ steps, it is straightforward that $D_u\le D$, thus we get the inequality holds. 

If $u$ has no active in-neighbors, then by definition the values of both LHS and RHS are $0$, thus the inequality still holds. 
\end{proof}

Now we are ready to prove the regret bound.

\begin{proof}[Proof of Theorem \ref{main theorem}]
Define the failure event
\begin{equation}
\cF = \set{\exists t \le T,v \in V: \norm{w_v - \hat{w}_{t,v}}_{M_{t,v}} > \rho_{t,v}}
\end{equation}
to represent the true weight vector $w_v$ does not lie in the confidence ellipsoid $\cC_{t,v}$ for some round $t$ and node $v$. Then by Lemma \ref{lem:ltlinucb:confidence ellipsoid}, when $\rho_{t,v} = \rho_{t}= \sqrt{n \log(1+tn)+2\log\frac{1}{\delta}} + \sqrt{n}$, $\cF^c$ holds with probability at least $1-n\delta$. Next we bound the regret conditioned on the event $\cF^c$.

Recall that $\pairoracle$ is an $(\alpha,\beta)$-approximation oracle, adopted in $\ltlinucb$. Then the $(\alpha,\beta)$-scaled regret of round $t$ satisfies
\begin{align*}
	\EE{R_t} = \EE{\alpha\beta \cdot \opt_w - r(S_t, w)} &\le \EE{\alpha\beta \cdot r(S^{\popt}_{\cC_t}, w^{\popt}_{\cC_t}) - r(S_t, w)} \\
	&\le \EE{r(S_t, w_t) - r(S_t, w)},
\end{align*}
where the last inequality is by the property that $\pairoracle$ is $(\alpha,\beta)$-approximation, and the expectation is over the randomness of the oracle and the randomness in the influence spread.

% Let $O_t(u)$ be the event that node $u$ is updated in the $\ltlinucb$, or equivalently there exists active in-neighbors for node $u$.
% \bOne{O_t(u)} 
Then by Theorem \ref{theorem:TPM} and Lemma \ref{lem:combine TPM with alg},
\begin{align*}
\EE{r(S_t, w_t) - r(S_t, w)} &\le D\cdot \ \EE{\sum_{v \in V\setminus S_t}  \sum_{u \in V_{S_t,v}} \abs{A_{t,u}^\top (w_{t,u}-w_u)} } \\
&\le D\cdot \ \EE{\sum_{v \in V\setminus S_t}  \sum_{u \in V_{S_t,v}}  \norm{A_{t,u}}_{M_{t,u}^{-1}} \norm{w_{t,u}-w_u}_{M_{t,u}}} \\
&\leq D\cdot \ \EE{\sum_{v \in V\setminus S_t}  \sum_{u \in V_{S_t,v}}   2\rho_t \norm{A_{t,u}}_{M_{t,u}^{-1}} }, 
\end{align*}
since $w_{t,u}, w_u$ are both in the confidence set. Thus 
% \wei{Where is the $4$ below coming from? Should it be $2$?}
\begin{align*}
	R(T) &= \EE{\sum_{t=1}^T R_t} \le 2\rho_T D \cdot \ \EE{ \sum_{t=1}^T \sum_{v \in V\setminus S_t}  \sum_{u \in V_{S_t,v}}  \norm{A_{t,u}}_{M_{t,u}^{-1}} }\\
	&\le 2 \rho_T D \cdot  \EE{ \sqrt{\bracket{\sum_{t=1}^T  \sum\limits_{u \in V} N_{S_t,u}^2}\bracket{\sum_{t=1}^T  \sum\limits_{u \in V} \norm{A_{t,u}}^2_{M_{t,u}^{-1}}}} }\\
	&\le 2 \rho_T D \cdot \ \EE{ \sqrt{T} \gamma(G) \cdot \sqrt{\bracket{ \sum_{t=1}^T \sum_{u \in V} \norm{A_{t,u}}^2_{M_{t,u}^{-1}} } } }
\end{align*}
where the second line is by Cauchy-Schwartz inequality.

Note that $M_{t,u} = M_{t-1,u} +  A_{t,u} A_{t,u}^\top$ and 
\begin{align*}
\det(M_{t,u}) &=  \det\bracket{M_{t-1,u} +  A_{t,u} A_{t,u}^\top} \\
&= \det\bracket{M_{t-1,u}^{1/2}\bracket{I +   M_{t-1,u}^{-1/2} A_{t,u}  A_{t,u}^{\top} M_{t-1,u}^{-1/2} } M_{t-1,u}^{1/2}}\\
&=\det(M_{t-1,u}) \det\bracket{I +   M_{t-1,u}^{-1/2} A_{t,u}  A_{t,u}^{\top} M_{t-1,u}^{-1/2} }\\
&= \det(M_{t-1,u}) \bracket{1 +   \norm{A_{t,u}}_{M_{t-1,u}^{-1}}^2}
\end{align*}
where the last inequality holds because the determinant of a matrix is the product of its eigenvalues and the matrix $I+xx^\top$ has eigenvalues $1$ and $1+\norm{x}_2^2$. And here $\norm{M_{t-1}^{-1/2} A_{t,u}} = \norm{A_{t,u}}_{M_{t-1}^{-1}}$. 
% is by $\det\bracket{I + \sum_{i=1}^s x_i x_i^\top} \geq 1+\sum_{i=1}^s \norm{x_i}_2^2$ (\cite[Lemma A.3]{li2016contextual}) and $\norm{M_{t-1}^{-1/2} A_{t,u}} = \norm{A_{t,u}}_{M_{t-1}^{-1}}$. 
Then
\begin{align}
	\sum_{t=1}^T \sum_{u \in V} \norm{A_{t,u}}^2_{M_{t,u}^{-1}} &\le \sum_{t=1}^T\sum_{u \in V} \frac{n}{\log(1+n)} \cdot \log\bracket{1 +  \norm{A_{t,u}}_{M_{t,u}^{-1}}^2} \label{eq:main:tool} \\
	&\le \sum_{u \in V} \frac{n}{\log(1+n)} \log \frac{\det(M_{T,u})}{\det(I)} \notag \\
	&\le \sum_{u \in V} \frac{n\abs{N(u)}}{\log(1+n)} \log (\trace(M_{T,u})/ \abs{N(u)}) \label{eq:main:detandtrace} \\
	&\le \sum_{u \in V} \frac{n\abs{N(u)}}{\log(1+n)} \log\bracket{1 + \sum_{t=1}^T   \norm{A_{t,u}}_2^2/\abs{N(u)}} \notag\\
	&\le \sum_{u \in V}\frac{n\abs{N(u)}}{\log(1+n)} \log(1+T)\notag \\
	&= \frac{n}{\log (1+n)} \log (1+T) \cdot \sum_{u \in V}\abs{N(u)} \notag \\
	&=  \frac{nm}{\log (1+n)} \log (1+T), \label{eq:main:summ}
\end{align}
where \eqref{eq:main:tool} is by the inequality that $u \le \frac{a}{\log(1+a)} \log(1+u)$ for $u \in [0,a]$ and $ \norm{A_{t,u}}_2^2 \le \abs{N(u)}\le n$; \eqref{eq:main:detandtrace} is by the inequality that $\det(M_{T,u}) \le \bracket{\trace(M_{T,u})/\abs{N(u)}}^{\abs{N(u)}}$ and \eqref{eq:main:summ} holds obviously since the sum of the number of in-neighbors of all nodes is just the number $m$ of edges in the graph.
% \eqref{eq:main:increasex} is by the fact that function $\frac{x}{\log (1+x)}$ increases with $x$ when $x\ge 1$ and $1\le N(u) \le n$ for node $u$ who has active in-neighbors 

Therefore the $\alpha \beta$-scaled regret satisfies
\begin{align*}
	R(T) &\le 2\rho_T \gamma(G) D \sqrt{mnT \log(1+T) / \log(1+n)} + n\delta \cdot T(n-k)\\
	&\le C \cdot \gamma(G) D nm^{1/2} \sqrt{T} \log(T),
\end{align*}
for some universal constant $C$.
\end{proof}

\subsection{Discussions}
\label{sec:app:linucb discussion}

\paragraph{Comparisons of regret bounds} We compute the $\gamma(G)$ and $D$ for some special graphs and compare our regret bound with the IMLinUCB algorithm \cite{zhengwen2017nips} and CUCB algorithm \cite{WeiChen2017}, where these two are under IC model and edge-level feedback. The results are listed in Table \ref{table:different graph order} where we use the same examples as in \cite[Figure 1]{zhengwen2017nips}. For general graphs, our $\ltlinucb$ has regret bound $O(\gamma(G)Dnm^{1/2}\sqrt{T}\ln(T)) = O(n^{7/2}m^{1/2}\sqrt{T} \ln(T))$, the IMLinUCB algorithm has regret bound (in the tabular case) $O(C_G m \sqrt{T} \ln(T)) = O(nm^{3/2}\sqrt{T} \ln(T))$ and the CUCB algorithm has regret bound $O(B_G \sqrt{mK'T\ln(T)} ) =O(mn\sqrt{T\ln(T)})$. 
So ours is at most $O(n^{5/2}/m)$ worse than IMLinUCB and $O(n^{5/2}\sqrt{\ln(T)}/\sqrt{m})$ worse than CUCB. Note that the freedom degree of LT model is $O(n)$ as there are only $n$ random variable ($(\theta_v)_{v \in V}$) while the freedom degree of IC model is $O(m)$. Also we assume only node-level feedback is observed while edge-level feedback can be observed in their work on IC model.

\begin{table}[htbp]
\centering
\begin{tabular}{|c|l|l|l|l|l|}
%|p{3cm}<{\centering}|p{3cm}<{\centering}|p{3cm}<{\centering}|p{3cm}<{\centering}|
\toprule
\rule{0pt}{12pt} Graphs         & $\displaystyle D$  & $\displaystyle \gamma(G)$       	& $\ltlinucb$ (ours) & IMLinUCB & CUCB  \\ \hline
\rule{0pt}{12pt} bar graph      & \scriptsize $\displaystyle O(1)$  &\scriptsize$\displaystyle O(\sqrt{K})$     	& \scriptsize$\displaystyle O( n^{3/2}\sqrt{KT}\ln(T))$  & \scriptsize $\displaystyle O( n\sqrt{KT}\ln(T))$  & \scriptsize $\displaystyle O( \sqrt{nKT\ln(T)})$  \\ \hline
\rule{0pt}{12pt} star graph     & \scriptsize$\displaystyle O(1)$  & \scriptsize$\displaystyle O(n\sqrt{K})$    	&\scriptsize $\displaystyle O(n^{5/2} \sqrt{KT}\ln(T))$  &\scriptsize $\displaystyle O(n^{2} \sqrt{KT}\ln(T))$  & \scriptsize$\displaystyle O(n^{2} \sqrt{T\ln(T)})$  \\ \hline
\rule{0pt}{12pt} ray graph      &\scriptsize $\displaystyle O(\sqrt{n})$  &\scriptsize $\displaystyle O(n^{5/4}\sqrt{K})$ &\scriptsize $\displaystyle O(n^{13/4} \sqrt{KT}\ln(T))$ &\scriptsize$\displaystyle O(n^{9/4} \sqrt{KT}\ln(T))$ &\scriptsize $\displaystyle O(n^{2} \sqrt{T\ln(T)})$ \\ \hline
\rule{0pt}{12pt} tree graph     &\scriptsize $\displaystyle O(\log n)$  &\scriptsize $\displaystyle O(n^{3/2}) $   		&\scriptsize$\displaystyle O(n^{3}\log n\sqrt{T}\ln(T))$    	&\scriptsize $\displaystyle O(n^{5/2}\sqrt{T}\ln(T))$  &\scriptsize $\displaystyle O(n^{2} \sqrt{T\ln(T)})$ \\ \hline
\rule{0pt}{12pt} grid graph     &\scriptsize $\displaystyle O(n)$  &\scriptsize $\displaystyle O(n^{3/2})$  		&\scriptsize $\displaystyle O(n^{4}\sqrt{T}\ln(T))$  	&\scriptsize$\displaystyle O( n^{5/2}\sqrt{T}\ln(T))$   &\scriptsize $\displaystyle O(n^{2} \sqrt{T\ln(T)})$ \\ \hline
\rule{0pt}{12pt} complete graph &\scriptsize $\displaystyle O(n)$  &\scriptsize $\displaystyle O(n^{3/2})$    		&\scriptsize $\displaystyle O(n^{9/2}\sqrt{T}\ln(T))$ 	&\scriptsize $\displaystyle O(n^{4}\sqrt{T}\ln(T))$  &\scriptsize $\displaystyle O(n^{3} \sqrt{T\ln(T)})$ \\ 
\bottomrule
\end{tabular}
\caption{The values of $\gamma(G),D$ and regret bound comparisons of $\ltlinucb$, IMLinUCB \cite{zhengwen2017nips} and CUCB \cite{WeiChen2017} for special graphs.} 
\label{table:different graph order}
\end{table}
If we represent each edge by a $d$-dimensional feature vector, then we can generalize our $\ltlinucb$ for the large-scale case. The regret bound would become 
\begin{align*}
	O(\rho\gamma(G)D\sqrt{dmn T \log(1+Tn^2/d)}) &= O(\gamma(G)D d \sqrt{mnT} \ln(T)) \\
	&= O(dn^{3} \sqrt{mT} \ln(T))
\end{align*}
where $\rho=O(\sqrt{d\log(1+Tn^2/d) + 2\log(1/\delta)})$. We have used $\norm{A_{t,u}}^2 \le |N(u)|^2$ by assuming the feature vector all have L2-norm at most $1$. The regret bound of IMLinUCB \cite{zhengwen2017nips} under IC model with edge-level feedback is $O(C_G d \sqrt{mT}\ln(T)) = O(dmn\sqrt{T}\ln(T))$, which achieves $\sqrt{m}/n^2$ better order than ours.

\paragraph{GOM property} Our GOM property (Theorem \ref{theorem:TPM}) plays a key role to bound the regret, similar to the TPM condition \cite{WeiChen2017} in the IC model with edge-level feedback. Their proofs \cite{WeiChen2017,zhengwen2017nips} can be simplified by coupling the influence spread under weight $w$ and $w'$ to reduce the proof length significantly (see Appendix \ref{app:simplified proof}).
Under their setting, it is sufficient to prove the key property for monotone case $w \le w'$ since the confidence is estimated for each edge (base arm). The coupling technique can be designed so that the realized graph of $w$ is always a subgraph of $w'$. Then by comparing the connectivity difference in a subgraph, it is easy to derive the desired result.

Situations are different in our setting of node-level feedback. Since only group effect can be observed, we can not guarantee that the representative weight $w'$ is always larger than $w$ (see Section \ref{sec:app_pairoracle} for more discussions). Even though we can prove similar property for monotone $w \le w'$ and hope to generalize it to arbitrary $w,w'$ by leveraging $w \wedge w', w \vee w'$, it does not work. By leveraging $w \wedge w', w \vee w'$, the absolute function would be added to the edge-level (compared with the result formula of Theorem \ref{theorem:TPM}), while we can not observe single edges in group effect. Only the absolute functions on the differences of the weight sum are suitable for node-level feedback.

\section{The Optimization Problem of Weight-Constrained IM}
\label{sec:app_pairoracle}

Recall that we have a confidence ellipsoid $\cC = \set{\cC_v}_{v \in V}$ with
$\cC_v = \set{w_v' \in [0,1]^{|N(v)|}: \norm{w_v' - \hat{w}_v}_{M_v} \le \rho_v}$
and want to consider the optimization problem of weight-constrained influence maximization (WCIM):
\begin{align}
	\argmax_{(S, w'): S \in \cA, w' \in \cC} \ r(S, w')\,. \label{eq:wcim app}
\end{align}
Let $(S^{\popt}_{\cC},w^{\popt}_{\cC})$ be the best solution. We want to find an $(\alpha, \beta)$-approximation oracle that outputs $(S',w')$ with $\PP{r(S',w') \ge \alpha \cdot r(S^{\popt}_{\cC},w^{\popt}_{\cC})} \ge \beta$ for some $\alpha, \beta >0$.

In the following, we first discuss the general difficulty, then give a general solution and later provide efficient methods for some special graph classes.

\subsection{General Difficulties}
\label{sec:app_pairoracle:general difficulty}

\paragraph{The UCB-type method does not directly apply here}

Under the edge-level feedback of the IC model, the learner can update the information of each single edge if it is observed; then the confidence set of the unknown weight vector is just the direct product of the confidence interval over the edges:
\begin{align*}
	\cC = \cC_1 \times \cdots \times \cC_{e} \times \cdots \times \cC_{m}\,,
\end{align*}
where $\cC_{e} = [L(e), U(e)]$ is 1-dimensional confidence interval of weight $w(e)$. Thus if we take the upper bound $U(e)$ of $\cC_e$ for each edge $e$, the resulting vector $U=(U(e))_{e\in E}$ still lie in the confidence set $\cC$ and any weight vector $w' \in \cC$ satisfies $w' \le U$. Since the reward function $r(S,w)$ is monotone increasing in weight vector $w$ (Lemma \ref{lem:monotone and infty norm}), the influence spread of any seed set $S$ under $U$ will be larger than $w'$. Hence $U$ would be the optimal weight vector for the WCIM problem \eqref{eq:wcim app}. Then if we take the output $S_{U}$ from an usual ($\alpha,\beta$)-approximation $\oracle$ for the IM with weight vector $U$, the pair $(S_U, U)$ is an ($\alpha,\beta$)-approximation solution for the problem WCIM. In such derivations, we have described a design of an ($\alpha,\beta$)-approximation $\pairoracle$. This also explains why the designs in \cite{zhengwen2017nips,WeiChen2017} work.

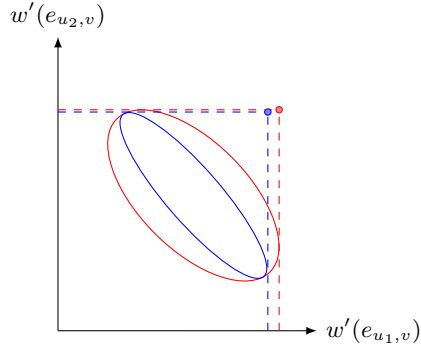
\begin{figure}[hbt!]
    \centering
    \begin{tikzpicture}[scale=0.6,font=\small]

   	\node (x) [] at (4,-3){$w'(e_{u_1,v})$};
	\node (y) [] at (-3,4){$w'(e_{u_2,v})$};

	\draw [-latex](-3,-3)--(x);
	\draw [-latex](-3,-3)--(y);

    \draw [color = red,rotate=135] (0,0) ellipse (2.4cm and 1.2cm);
    \draw [color = blue,rotate=131] (0,0) ellipse (2.38cm and 0.6cm);

    \draw [dashed,color = red](1.9,-3)--(1.9,1.9);
    \draw [dashed,color = red](-3,1.9)--(1.9,1.9);

	\draw[color = red,fill=red!50] (1.9,1.9) circle (2pt);

	\draw [dashed,color = blue](1.65,-3)--(1.65,1.85);
    \draw [dashed,color = blue](-3,1.85)--(1.65,1.85);

	\draw[color = blue,fill=blue!50] (1.65,1.85) circle (2pt);
  
    \end{tikzpicture}
    \caption{An example that upper bound vector fails to lie in the confidence set.}
    \label{fig:confidence set}

\end{figure}

But things are different in the node-level feedback of the LT model. In the node-level feedback, the learner can only observe group effects of edges instead of single edges, so the confidence set is high-dimensional ellipsoid instead of nice cuboid. If we take the upper bounds of each edge, which is equivalent to find the upper confidence bound of the vector $\chi(e)$ with respect to the confidence set $\cC$, the resulting vector might jump out of the confidence set $\cC$. Specifically, since $\ltlinucb$ updates the information of each single node if it has active in-neighbors, the confidence set $\cC$ of the unknown weight vector $w$ is actually the direct product of the confidence set over the nodes:
\begin{align*}
	\cC = \cC_1 \times \cdots \times \cC_{v} \times \cdots \times \cC_{n}\,,
\end{align*}
where $\cC_v$ is a $N(v)$-dimensional confidence set and is related to the edges with ending node $v$. Note that the confidence set $\cC_v$ is different from the above $\cC_e$ and we reuse the notation. For an example of $2$-dimensional case (see Figure \ref{fig:confidence set}), there are two in-neighbors $u_1, u_2$ of $v$ and suppose a confidence ellipse has such a shape (the red ellipse). The vector of largest $w(e'_{u_1}, v), w(e'_{u_2}, v)$ (the red point) is not in the confidence set, and actually is far away from the confidence set. When more observations are collected, the red ellipse may shrink to the blue ellipse, but the vector of largest $w(e'_{u_1}, v), w(e'_{u_2}, v)$ (the blue point) just moves a little and its relative distance to the confidence set is even farther.

\paragraph{Mixed integer optimization problem in bipartite graphs} Consider the special bipartite graphs. The node set $V$ can be divided into $V_1$ and $V_2$ and each edge is from $V_1$ to $V_2$. Without loss of generality, assume $\abs{V_1} \ge K$, then a good solution $S$ must satisfy $S \subset V_1$. So the WCIM problem can be reformulated as
\begin{align}
	&\max_{\alpha, w} \sum_{u \in V_1, v \in N^{\mathrm{out}}(u)} \alpha(u) \ w(u,v) \notag\\
	&s.t. \quad \alpha(u) \in \set{0,1} \text{ for any } u \in V_1 \notag\\
	&\qquad \sum_{u \in V_1} \alpha(u) \le K\\
	&\qquad w(u,v) \in [0,1] \text{ for any } u\in V_1 \text{ and } v \in N^{\mathrm{out}}(u) \notag\\
	&\qquad w(\cdot, v)^{\top} M_v \ w(\cdot, v) \le \rho_v^2 \text{ for any } v \in V_2 \notag
\end{align}
where $w(\cdot, v) \in [0,1]^{\abs{N(v)}}$, $M_v \in \RR^{\abs{N(v)} \times \abs{N(v)}}$ is some positive-definite matrix and $\rho_v$ is some constant. 

This is a mixed integer optimization problem. Even if we relax the constraint of $\alpha(u) \in \set{0,1}$ to $\alpha(u) \in [0,1]$ to make the constraints convex, the objective is bilinear but not convex (or concave), making the problem hard to solve. This mixed integer programming is known to be difficult in the optimization field \cite{so2008unified}. Some techniques of semidefinite programming (SDP) relaxations might be useful. We conjecture the approximation ratio, if solvable, is not constant and is $O(1/\ln(n))$ since there are roughly $n$ constraints for $w$, as also motivated by the greedy method for the problem of max vertex cover. We leave this as interesting future work.

If we write the vector $\alpha$ in a nice vector form, we can see the problem is a special maximum inner product \cite{shrivastava2014MIP1,ram2012MIP2,shen2015MIP3}. This is an interesting direction but there are still many cases unexplored.

\subsection{$\epsilon$-net Method}

The usual oracle for IM problem is to compute the seed set for a given weight vector. Now the confidence set $\cC$ is a continuous set. A method is to discretize it. We can first find an $\epsilon$-net cover, compute the seed set by any usual oracle for each representative, and select the best pair. The complete method is provided in Algorithm \ref{alg:epsilon_net}. Recall that an $\epsilon$-net $\cN$ for a set $\cC$ is for any $w' \in \cC$, there exists a $\pi(w') \in \cN$ such that $\norm{w'- \pi(w')}_2 \le \epsilon$. The minimal size of possible $\cN$ is denoted as $N_{\cC, \epsilon}$.

\begin{algorithm}[tbh!]
\caption{$\epsilon$-net $\pairoracle$}\label{alg:epsilon_net}
\begin{algorithmic}[1]
\STATE \textbf{Input:} Confidence ellipsoid $\cC$; offline IM $\oracle$; seed set cardinality $K$; parameter $\epsilon$
\STATE Find an optimal $\epsilon$-net $\cN$ for $\cC$ with size $N_{\cC, \epsilon}$
\FOR {$\pi \in \cN$}
\STATE Compute the seed set $S_{\pi}$ and $r(S_{\pi}, \pi)$ by $\oracle$ with $\pi$ and $K$
\ENDFOR
\STATE \textbf{Output:} $(S', w') = \argmax_{(S_{\pi}, \pi): \pi \in \cN} \ r(S_{\pi}, \pi)$
\end{algorithmic}
\end{algorithm}

Then we have the following approximation guarantee for the $\epsilon$-net method.

\begin{lemma}\label{lem:approximation}
%Suppose $\oracle$ is ($\alpha', \beta'$)-approximation. 
The Algorithm \ref{alg:epsilon_net} runs with confidence ellipsoid $\cC$, seed set cardinality $K$, parameter $\epsilon$ and an ($\alpha',\beta'$)-approximation $\oracle$. Then its output satisfies
\begin{align*}
	\PP{ r(S', w') \ge \alpha \cdot r(S^{\popt}_{\cC}, w^{\popt}_{\cC})} \ge \beta\,,
\end{align*}
where $\alpha = \alpha' \left(1-\frac{mn \cdot\epsilon}{K}\right)$ and $\beta=\beta'$. Thus the $\epsilon$-net $\pairoracle$ (Algorithm \ref{alg:epsilon_net}) is ($\alpha,\beta$)-approximation.
\end{lemma}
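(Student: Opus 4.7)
The plan is to bridge the algorithm's output $(S', w')$ with the true optimum $(S^*, w^*) := (S^{\popt}_\cC, w^{\popt}_\cC)$ through an $\epsilon$-net representative $\pi^* \in \cN$ of $w^*$, then absorb the weight-perturbation cost via a Lipschitz estimate for $r(S, \cdot)$ together with the lower bound $r(S^*, w^*) \ge K$. Fix $\pi^* \in \cN$ with $\norm{w^* - \pi^*}_2 \le \epsilon$, which exists since $w^* \in \cC$ and $\cN$ is an $\epsilon$-net of $\cC$. Because the algorithm outputs the best pair $(S_\pi, \pi)$ over $\pi \in \cN$, we have $r(S', w') \ge r(S_{\pi^*}, \pi^*)$ deterministically, so it suffices to lower bound $r(S_{\pi^*}, \pi^*)$.

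The first key step is a Lipschitz estimate: for any $S \in \cA$ and $w_1, w_2 \in [0,1]^m$ satisfying the LT summability condition,
\begin{align*}
|r(S, w_1) - r(S, w_2)| \le mn \cdot \norm{w_1 - w_2}_2.
\end{align*}
A clean route uses the live-edge formulation (whose equivalence to LT was invoked in the model setup): each node $v$ independently chooses at most one incoming edge $e_{u,v}$ with probability $w(e_{u,v})$ and no edge with the remaining probability. The total-variation distance between the per-node choice laws under $w_1$ and $w_2$ is at most $\norm{w_{1,v} - w_{2,v}}_1$, so a per-node coupling gives $\Pr[\text{live-edge graphs disagree}] \le \norm{w_1 - w_2}_1 \le \sqrt{m}\,\norm{w_1 - w_2}_2$. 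Since agreeing live-edge graphs produce identical spreads and spreads always lie in $[0, n]$, the bound follows with constant $n\sqrt{m} \le mn$.

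The second step combines this with the $(\alpha',\beta')$-approximation property of $\oracle$ and the lower bound $r(S^*, w^*) \ge K$ (by monotonicity one may assume $|S^*| = K$, and the seed nodes themselves are always activated). On the oracle-success event for input $\pi^*$, which has probability at least $\beta'$, $r(S_{\pi^*}, \pi^*) \ge \alpha' \max_{S \in \cA} r(S, \pi^*) \ge \alpha' r(S^*, \pi^*)$. The Lipschitz estimate then yields $r(S^*, \pi^*) \ge r(S^*, w^*) - mn\epsilon \ge (1 - mn\epsilon/K)\, r(S^*, w^*)$. Chaining, $r(S', w') \ge r(S_{\pi^*}, \pi^*) \ge \alpha\cdot r(S^{\popt}_\cC, w^{\popt}_\cC)$ with probability at least $\beta' = \beta$, as required.

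The main obstacle is the Lipschitz step. The GOM bounded smoothness of Theorem \ref{theorem:TPM} is a natural alternative, but it expresses the spread difference through $\ell_1$-type sums over random observed edge subsets, and translating this into the $\norm{w_1 - w_2}_2$ quantity supplied by the $\epsilon$-net guarantee requires several coarse relaxations. The coupling route sidesteps these issues by directly comparing the distributions of live-edge graphs, and it is enough for the loose $mn$ constant appearing in the lemma.
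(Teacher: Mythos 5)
Your chain of inequalities is exactly the paper's: pass from $(S^{\popt}_{\cC},w^{\popt}_{\cC})$ to the net representative $\pi^*$ via a Lipschitz bound, invoke the $(\alpha',\beta')$-oracle on input $\pi^*$, use the fact that the algorithm returns the best pair over $\cN$ to get $r(S',w')\ge r(S_{\pi^*},\pi^*)$, and normalize by $r(S^{\popt}_{\cC},w^{\popt}_{\cC})\ge K$. So the lemma itself is proved the same way and the probability bookkeeping ($\beta=\beta'$, success event of the single oracle call on $\pi^*$) is correct.

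The one place you diverge is the supporting Lipschitz estimate. The paper proves $\abs{r(S,w_1)-r(S,w_2)}\le mn\cdot\max_e\abs{w_1(e)-w_2(e)}\le mn\norm{w_1-w_2}_2$ by monotonicity of $r$ in $w$ plus an edge-at-a-time perturbation argument in the threshold formulation; this works for arbitrary $w_1,w_2\in[0,1]^m$. Your coupling of the per-node live-edge choices is slicker and even gives the sharper constant $n\sqrt{m}$, but it relies on the live-edge equivalence of LT, which requires $\sum_{u\in N(v)}w(e_{u,v})\le 1$ for \emph{both} weight vectors. Neither the points of the confidence ellipsoid $\cC$ nor the net points $\pi\in\cN$ are guaranteed to satisfy this summability condition (the ellipsoid is only intersected with $[0,1]^{\abs{N(v)}}$), so as written your Lipschitz step does not cover all the pairs it is applied to. Either restrict $\cC$ and $\cN$ to the summability simplex, or fall back on the paper's threshold-formulation argument, which needs no such restriction. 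With that repair the proof is complete.
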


\begin{proof}
For any $w' \in \cC$, let $\pi(w') \in \cN$ be its representative such that $\norm{w'-\pi(w')}_2 \le \epsilon$. Let $S_{w'}^*$ denote the output of $\oracle$ with input $w'$, then $\PP{r(S_{w'}^*, w') \ge \alpha' \cdot \opt_{w'}} \ge \beta'$. Thus
\begin{align*}
r\bracket{S^{\popt}_{\cC},w^{\popt}_{\cC}} &\leq r\bracket{S^{\popt}_{\cC},\pi\bracket{w^{\popt}_{\cC}}} + mn\cdot \epsilon \\
&\leq \opt_{\pi\bracket{w^{\popt}_{\cC}}} + mn\cdot \epsilon \\
&\leq \frac{1}{\alpha'} r\bracket{S^{*}_{\pi\bracket{w^{\popt}_{\cC}}},\pi\bracket{w^{\popt}_{\cC}}} + mn\cdot \epsilon \\
&\leq \frac{1}{\alpha'}r(S',w') +  mn\cdot \epsilon \,,
\end{align*}
where the first inequality is by Lipschitz continuity of $r$ (Lemma \ref{lem:lipschitz}) and $\norm{w^{\popt}_{\cC}-\pi(w^{\popt}_{\cC})}_2 \le \epsilon$, the third inequality is by the definition of $\oracle$ and holds with probability at least $\beta'$ and the last inequality is by the rule of Algorithm \ref{alg:epsilon_net}.

Hence with probability at least $\beta'$, 
\begin{align*}
	r(S',w') \geq \alpha' \cdot \bracket{r(S^{\popt}_{\cC},w^{\popt}_{\cC}) - mn \cdot\epsilon} \ge \alpha'\left(1-\frac{mn \cdot\epsilon}{K}\right)\cdot r(S^{\popt}_{\cC},w^{\popt}_{\cC})\,,
\end{align*}
where the second inequality is by $r(S^{\popt}_{\cC},w^{\popt}_{\cC}) \ge K$.
\end{proof}

The minimal size $N_{\cC,\epsilon}$ of the $\epsilon$-net for the $m$-dimensional ellipsoid $\cC$ has order $\Theta((1/\epsilon)^m)$, which is exponential in $\epsilon$. So this method, though accurate, is not very efficient.

\subsection{Graphs with In-degree at Most 1}
\label{sec:case indegree 1}

We discuss the method to solve the case of graphs that any node has at most one incoming edge. This includes examples in Figure \ref{fig:special graph 1}.
For such graphs, the node-level feedback is actually edge-level feedback. More specifically, our $\ltlinucb$ will update the information of each single edge if its start node is active. Thus the confidence set $\cC$ is the direct product of the confidence intervals of each edge, similar to IC model with edge-level feedback.

\begin{figure}[hbt!]
\centering
    \begin{tikzpicture}[scale=0.57,font=\small,every node/.style={node distance = 1.2cm}]

    \node (a1) [circle,thick,draw=black] at (-24,0) {} ;
    \node (a2) [circle,thick,draw=black,right of=a1] {} ;
    \node (a3) [circle,thick,draw=black,above of=a1,node distance = 1cm]  {} ;
	\node (a4) [circle,thick,draw=black,right of=a3]  {} ;
    \node (a5) [circle,thick,draw=black,above of=a3,node distance = 1cm]  {} ;
    \node (a6) [circle,thick,draw=black,right of=a5]  {} ;
    \node at (-23,-1) {(a)};

    \draw [thick](a1)--(a2);
    \draw [thick](a3)--(a4);
    \draw [thick](a5)--(a6);

 	\node (b1) [circle,thick,draw=black] at (-20.5,0) {} ;
    \node (b2) [circle,thick,draw=black,right of=b1] {} ;
    \node (b3) [circle,thick,draw=black,right of=b2] {} ;
	\node at (-18.3,-1)  {(b)};

    \draw [-latex,thick](b1)--(b2);
    \draw [-latex,thick](b2)--(b3);

    \node (c1) [circle,thick,draw=black] at (-14.6,0) {} ;
    \node (c2) [circle,thick,draw=black,right of=c1,node distance = 0.8cm] {} ;
    \node (c3) [circle,thick,draw=black,right of=c2,node distance = 0.8cm] {} ;
    \node (c4) [circle,thick,draw=black,right of=c3,node distance = 0.8cm] {} ;
    \node (c5) [circle,thick,draw=black,right of=c4,node distance = 0.8cm] {} ;
    \node (c6) [circle,thick,draw=black,above of=c2,node distance = 1cm] {} ;
    \node (c7) [circle,thick,draw=black,above of=c4,node distance = 1cm] {} ;
    \node (c8) [circle,thick,draw=black,above of=c3,node distance = 2cm] {} ;
	\node at (-12.1,-1)  {(c)};

    \draw [-latex,thick](c8)--(c6);
    \draw [-latex,thick](c8)--(c7);
    \draw [-latex,thick](c6)--(c1);
	\draw [-latex,thick](c6)--(c2);
	\draw [-latex,thick](c7)--(c3);
	\draw [-latex,thick](c7)--(c4);
	\draw [-latex,thick](c7)--(c5);

	\node (d1) [circle,thick,draw=black] at (-6,0) {} ;
    \node (d2) [circle,thick,draw=black,above of=d1,node distance = 1cm] {} ;
    \node (d3) [circle,thick,draw=black,right of=d2,node distance = 1cm] {} ;
    \node (d4) [circle,thick,draw=black,left of=d2,node distance = 1cm] {} ;
    \node (d5) [circle,thick,draw=black,above of=d2,node distance = 1cm] {} ;
    \node at (-6,-1)  {(d)};

    \draw [-latex,thick](d2)--(d1);
    \draw [-latex,thick](d2)--(d3);
    \draw [-latex,thick](d2)--(d4);
    \draw [-latex,thick](d2)--(d5);

    \node (e1) [circle,thick,draw=black] at (0,0) {} ;
    \node (e2) [circle,thick,draw=black,above of=e1,node distance = 0.68cm]  {} ;
    \node (e3) [circle,thick,draw=black,above of=e2,node distance = 0.68cm]  {} ;
    \node (e4) [circle,thick,draw=black,above of=e3,node distance = 0.68cm]  {} ;

    \node (e5) [circle,thick,draw=black] at (-2,0.04) {} ;
    \node (e6) [circle,thick,draw=black,above of=e5,node distance = 0.68cm]  {} ;
	\node (e7) [circle,thick,draw=black,above of=e6,node distance = 0.68cm]  {} ;

    \node at (-1,-1) {(e)};

    \draw [-latex,thick](e5)--(e1);
    \draw [-latex,thick](e6)--(e2);
    \draw [-latex,thick](e7)--(e3);
    \draw [-latex,thick](e7)--(e4);

    \end{tikzpicture}

    \caption{Examples of graphs with in-degree at most $1$. (a) bar graph. (b) chain graph. (c) out-arborescence graph. (d) out-star graph. (e) certain bipartite graph. Each undirected edge represents a pair of directed edges pointing to opposite directions.}
    \label{fig:special graph 1}

\end{figure}
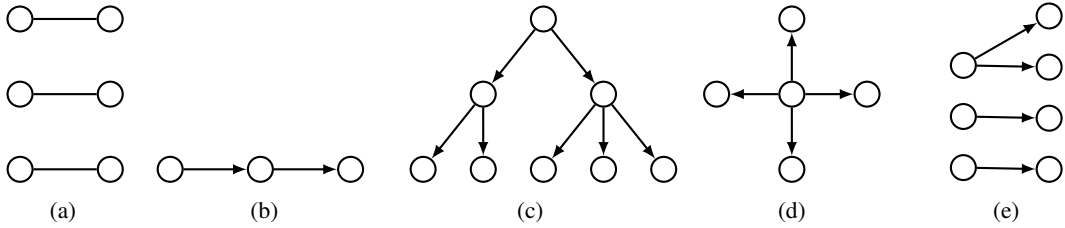

Hence we can just perform as \cite{WeiChen2016,zhengwen2017nips}. As mentioned above, we first take the upper bound for each edge and formulate $U$, then use an ($\alpha,\beta$)-approximation $\oracle$ to compute $S_{U}$ such that
\begin{align}
\PP{r(S_{U},U) \ge \alpha \cdot r(S^{\popt}_{\cC}, w^{\popt}_{\cC})} \ge \beta \,.
\end{align}
So we get an efficient $(\alpha,\beta)$-approximation $\pairoracle$ for these special graphs.

\subsection{Bipartite Graphs}

We consider the special case of bipartite graphs here where there are two node sets $V_1$ and $V_2$ and each edge is from $V_1$ to $V_2$ (see Figure \ref{fig: bipartite graph} for examples). This is a popular influence spread formulation for one step and is a generalization of vertex cover. 

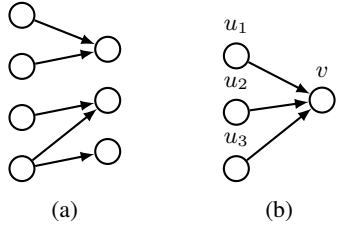
\begin{figure}[hbt!]
\centering
    \begin{tikzpicture}[scale=0.57,font=\small,every node/.style={node distance = 1.2cm}]
%  Indegree = 2 bipartite graphs
    \node (e1) [circle,thick,draw=black] at (-2,0) {} ;
    \node (e2) [circle,thick,draw=black,above of=e1,node distance = 0.68cm]  {} ;
    \node (e3) [circle,thick,draw=black,above of=e2,node distance = 0.68cm]  {} ;
    \node (e4) [circle,thick,draw=black,above of=e3,node distance = 0.68cm]  {} ;

    \node (e5) [circle,thick,draw=black] at (0,0.4) {} ;
    \node (e6) [circle,thick,draw=black,above of=e5,node distance = 0.68cm]  {} ;
	\node (e7) [circle,thick,draw=black,above of=e6,node distance = 0.68cm]  {} ;

    \node at (-1,-1) {(a)};

    \draw [-latex,thick](e1)--(e5);
    % \draw [-latex,thick](e2)--(e5);
    \draw [-latex,thick](e1)--(e6);
    
    \draw [-latex,thick](e2)--(e6);

    \draw [-latex,thick](e3)--(e7);
    \draw [-latex,thick](e4)--(e7);

%  Indegree = 3 bipartite graphs
    \node (b1) [circle,thick,draw=black,label=$u_3$] at (3,0) {} ;
    \node (b2) [circle,thick,draw=black,above of=b1,label=$u_2$,node distance = 0.75cm]  {} ;
    \node (b3) [circle,thick,draw=black,above of=b2,label=$u_1$,node distance = 0.75cm]  {} ;
    % \node (b4) [circle,thick,draw=black,above of=b3,node distance = 0.68cm]  {} ;

    % \node (b5) [circle,thick,draw=black] at (5,0.4) {} ;
    \node (b6) [circle,thick,draw=black,label=$v$] at (5,1.6)  {} ;
	% \node (b7) [circle,thick,draw=black,above of=b6,node distance = 0.68cm]  {} ;

    \node at (4,-1) {(b)};

    \draw [-latex,thick](b1)--(b6);
    \draw [-latex,thick](b2)--(b6);
    
    \draw [-latex,thick](b3)--(b6);

    \end{tikzpicture}

    \caption{Examples of special bipartite graphs. (a) bipartite graph with in-degree at most $2$. (b) bipartite graph with in-degree $3$.}
    \label{fig: bipartite graph}

\end{figure}

Recall that the objective is to solve
\begin{align*}
	\max_{S \in \cA, w' \in \cC} \ r(S, w') = \max_{S \in \cA} \max_{w' \in \cC} r(S, w')\,.
\end{align*}
Note that $r(S, w) = \sum_{u \in S, v \in N^{\mathrm{out}}(u)} w(e_{u,v})$ is linear in $w$ for the bipartite graphs. Let
\begin{align*}
	r(S) := \max_{w' \in \cC} r(S, w') = \max_{w' \in \cC} \sum_{u \in S, v \in N^{\mathrm{out}}(u)} w(e_{u,v})\,.
\end{align*}
Recall that $\cC = \set{w' \in \RR^m : \norm{w' - \hat{w}}_{M} \leq \rho}$ for some positive-definite matrix $M$ and a constant $\rho \ge 0$. So the computation of $r(S)$ is quadratic constrained linear programming and can be solved efficiently. We have the following properties for $r(S)$. The first one is about its monotonicity for any graph.

\begin{lemma}
\label{lem:r_monoton}
For any graph, given the confidence set $\cC$, the function $r(S) = \max_{w' \in \cC} r(S,w')$ is monotone increasing in $S$. That is, $r(S) \le r(S')$ if $S \subseteq S'$.
\end{lemma}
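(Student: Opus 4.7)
The plan is to reduce the monotonicity of $r(S) = \max_{w' \in \cC} r(S,w')$ to the well-known monotonicity of the LT influence spread $r(S,w')$ in the seed set $S$, for each fixed weight vector $w'$. This reduction is immediate: if $w^\star \in \cC$ attains $r(S) = r(S, w^\star)$ (which exists because $r(S, \cdot)$ is continuous and $\cC$ is compact), then for $S \subseteq S'$ we have
\begin{align*}
r(S) \;=\; r(S, w^\star) \;\le\; r(S', w^\star) \;\le\; \max_{w' \in \cC} r(S', w') \;=\; r(S'),
\end{align*}
where the middle inequality is exactly the monotonicity of $r(\cdot, w^\star)$ in the seed set.

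To justify the middle step, I would give a short coupling argument on the threshold vector $\theta \in [0,1]^n$, rather than appealing to the live-edge formulation. Fix any $\theta$ and let $(S_\tau)_\tau$ and $(S'_\tau)_\tau$ denote the LT diffusion sequences under weight vector $w^\star$ with the same thresholds, seeded from $S$ and $S'$ respectively. I claim $S_\tau \subseteq S'_\tau$ for every $\tau \ge 0$, which I would prove by induction on $\tau$. The base case $\tau = 0$ is just $S \subseteq S'$. For the inductive step, suppose $S_\tau \subseteq S'_\tau$. Any node $v \in S_{\tau+1}\setminus S_\tau$ satisfies $\sum_{u \in N(v)\cap S_\tau} w^\star(e_{u,v}) \ge \theta_v$, and the inductive hypothesis gives $N(v)\cap S_\tau \subseteq N(v)\cap S'_\tau$, so the same threshold condition forces $v \in S'_{\tau+1}$ (either $v$ was already in $S'_\tau \subseteq S'_{\tau+1}$, or it becomes newly activated). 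Hence $r(S, w^\star, \theta) = |S_D| \le |S'_D| = r(S', w^\star, \theta)$ pointwise in $\theta$, and taking expectation over $\theta \sim (\cU[0,1])^n$ yields $r(S, w^\star) \le r(S', w^\star)$.

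There is no real obstacle here; this lemma is essentially a routine consequence of the monotonicity of LT influence spread in the seed set, which is standard since Kempe, Kleinberg, and Tardos. The only thing to be careful about is that the constraint set $\cC$ does not depend on $S$, which is what allows the maximizer $w^\star$ for $S$ to also serve as a feasible point when bounding $r(S')$ from below.
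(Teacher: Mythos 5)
Your proposal is correct and follows essentially the same route as the paper: take a maximizer $w^\star \in \cC$ for $r(S)$ and chain $r(S)=r(S,w^\star)\le r(S',w^\star)\le r(S')$, using that $\cC$ does not depend on the seed set. The only difference is that you also spell out the standard seed-set monotonicity of the LT spread via a threshold coupling, which the paper simply takes for granted.
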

\begin{proof}
Let $r(S) = r(S, w_S), r(S') = r(S', w_{S'})$. Then
\begin{align*}
r(S) = r(S,w_S) \le r(S', w_S) \le r(S', w_{S'}) = r(S')\,.
\end{align*}
\end{proof}

The next one states the submodularity of $r(S)$ for bipartite graphs with in-degree at most $2$ (for example Figure  \ref{fig: bipartite graph}(a)).

\begin{lemma}
\label{lem:bipaetite_submodular}
In bipartite graphs with in-degree at most $2$, the function $r(S) = \max_{w' \in \cC} r(S,w')$ satisfies submodularity. That is, for arbitrary set $S \subseteq S'$ and node $u \notin S'$, there is 
\begin{align}
r(S \cup \{u\}) - r(S) \ge r(S' \cup \{u\}) - r(S)\,.
\end{align}
\end{lemma}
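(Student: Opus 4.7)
The plan is to exploit two structural features of the bipartite setting: the bilinearity of $r(S,w')$ in $(S,w')$, and the product structure $\cC = \prod_{v \in V} \cC_v$ of the confidence set maintained by $\ltlinucb$. Together these let the inner maximization over $w'$ decompose node-by-node, and the in-degree-$\le 2$ assumption reduces submodularity of each piece to a single two-variable inequality.

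First, I would regroup the sum $r(S,w') = \sum_{u \in S,\, v \in N^{\mathrm{out}}(u)} w'(e_{u,v})$ by the target node. Writing $E(S,v) := \set{e_{u,v} : u \in S \cap N(v)}$ for $v \in V_2$, we get
\[
r(S,w') = \sum_{v \in V_2} \chi(E(S,v))^\top w'_v.
\]
Since $\cC = \prod_{v \in V} \cC_v$, the constraint $w' \in \cC$ separates, so
\[
r(S) = \max_{w' \in \cC} r(S,w') = \sum_{v \in V_2} f_v(S), \qquad f_v(S) := \max_{w'_v \in \cC_v} \chi(E(S,v))^\top w'_v.
\]
Because a sum of submodular set functions is submodular, it suffices to prove that each $f_v$ is submodular in $S$.

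Next, I would fix $v \in V_2$ and use $|N(v)| \le 2$. Note $f_v(S)$ depends only on $S \cap N(v)$. The cases $|N(v)| \in \set{0,1}$ are modular, so the only nontrivial situation is $N(v) = \set{u_1,u_2}$. Define
\[
a_v := \max_{w'_v \in \cC_v} \chi(e_{u_1,v})^\top w'_v, \quad
b_v := \max_{w'_v \in \cC_v} \chi(e_{u_2,v})^\top w'_v, \quad
c_v := \max_{w'_v \in \cC_v} \chi(\set{e_{u_1,v},e_{u_2,v}})^\top w'_v,
\]
so that $f_v$ takes values $0,\,a_v,\,b_v,\,c_v$ on the four subsets of $\set{u_1,u_2}$. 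The elementary subadditivity of the maximum, $\max_{w'_v \in \cC_v}(\chi(e_{u_1,v})^\top w'_v + \chi(e_{u_2,v})^\top w'_v) \le a_v + b_v$, gives the key inequality $c_v \le a_v + b_v$.

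Finally, I would verify the submodular inequality $f_v(S \cup \set{u}) - f_v(S) \ge f_v(S' \cup \set{u}) - f_v(S')$ by a short case split: (i) if $u \notin N(v)$, both marginals vanish; (ii) if $u \in N(v)$ but the other neighbor is absent from $S'$, both marginals equal $a_v$ or $b_v$; (iii) if the other neighbor is already in $S$, both marginals equal $c_v - b_v$ or $c_v - a_v$; (iv) if $u$ and the other neighbor both lie in $S' \setminus S$, the inequality reduces to $a_v \ge c_v - b_v$, which is exactly $c_v \le a_v + b_v$. Summing over $v \in V_2$ then delivers the claim. There is no real obstacle in the argument; the only point worth emphasizing is that both the separability of the maximization (via the product structure of $\cC$) and the reduction to a two-variable subadditivity (via the in-degree bound) are essential, which explains why the statement is restricted to in-degree at most $2$ rather than being stated for arbitrary bipartite graphs.
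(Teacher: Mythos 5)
Your proposal is correct and follows essentially the same route as the paper's proof: decompose $r(S)$ node-by-node using the product structure $\cC = \prod_{v\in V_2}\cC_v$, use the in-degree-$\le 2$ bound to reduce each summand to a two-element ground set, and close with the subadditivity of the maximum, $\max_{w_v'\in\cC_v}\bigl(w_v'(e_{u,v}) + w_v'(e_{u',v})\bigr) \le \max_{w_v'\in\cC_v} w_v'(e_{u,v}) + \max_{w_v'\in\cC_v} w_v'(e_{u',v})$. Your write-up is only slightly more systematic in making the four-way case split and the "sum of submodular functions is submodular" reduction explicit.
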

\begin{proof}
As discussed in Section \ref{sec:app_pairoracle:general difficulty}, the confidence set $\cC$ for unknown weight vector $w$ is actually the direct product of confidence set over nodes, that is
\begin{align*}
	\cC = \Pi_{v \in V_2} \ \cC_{v} \,,
\end{align*}
where $\cC_v$ is the confidence set for in-coming edges of node $v$. The edges in each $\cC_v$ are disjoint with each other, so
\begin{align*}
	r(S) = \max_{w' \in \cC} \sum_{u \in S, v \in N^{\mathrm{out}}(u)} w'(e_{u,v}) &= \max_{w' \in \cC} \sum_{v \in V_2} \sum_{u \in S, u \in N^{\mathrm{in}}(v)} w'(e_{u,v}) \\
	&= \sum_{v \in V_2} \max_{w_v' \in \cC_v} \sum_{u \in S, u \in N^{\mathrm{in}}(v)} w_v'(e_{u,v})\,,
\end{align*}
where $w_v' = w'(e_{u,v})_{u \in N^{\mathrm{in}}(v)}$.
So to maximize over $\cC$, it suffices to maximize the weights of incoming edges for each $v \in V_2$. 

Since each node has at most two in-coming edges, if $e_{u, v} \in E$ for some $v$ then it must hold that there is at most one in-neighbor of $v$ from $S'$. For node $v$ such that $e_{u, v} \in E$ but there is no edge from $S'$ to $v$, the contribution of $v$'s part to $S, S'$ are the same.

For node $v$ such that $e_{u, v}, e_{u', v} \in E$ for some $u' \in S' \setminus S$, it suffices to prove that
\begin{align*}
	\max_{w_v' \in \cC_v} w_v'(e_{u, v}) \ge \max_{w_v' \in \cC_v} \set{w_v'(e_{u, v}) + w_v'(e_{u', v})} - \max_{w_v' \in \cC_v} w_v'(e_{u', v}) \,,
\end{align*}
which is obviously true.

For node $v$ such that $e_{u, v}, e_{u', v} \in E$ for some $u' \in S \subseteq S'$, the contribution of $v$'s part to $S, S'$ are the same.
\end{proof}

\begin{algorithm}[th]{}
\caption{Greedy $\pairoracle$}
\label{alg:oracl:greedy}
\begin{algorithmic}[1]
\STATE \textbf{Input:} Graph $G=(V,E)$, confidence set $\cC$, seed set cardinality $K$
\STATE \textbf{Initialize:} $S = \emptyset$
\FOR{$i \in [K]$}
    \STATE $v = \argmax_{u \in V\setminus S} \ r(S\cup\{u\}) - r(S)$
    \STATE $S = S \cup \{v\}$
\ENDFOR
\STATE Output $S$
\end{algorithmic}
\end{algorithm}

With the submodularity property, we can get the approximation result by designing a greedy policy (Algorithm \ref{alg:oracl:greedy}).

\begin{lemma}
\label{lem:greedyoracle_barpartite}
Recall that $S_{\cC}^{\popt} = \argmax_{S \in \cA}\max_{w' \in \cC} r(S,w') = \argmax_{S \in \cA} r(S)$ is the optimal seed set given confidence set $\cC$. Let $S'$ be the solution returned by Greedy $\pairoracle$ (Algorithm \ref{alg:oracl:greedy}). Then for bipartite graphs with in-degree at most $2$,
\begin{align}
	r(S') \ge \bracket{1-\frac{1}{e}} r(S_{\cC}^{\popt}) \,.
\end{align}
\end{lemma}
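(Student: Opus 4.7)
The plan is to combine the two structural properties already established for the bipartite in-degree-at-most-$2$ setting, namely monotonicity (Lemma \ref{lem:r_monoton}) and submodularity (Lemma \ref{lem:bipaetite_submodular}) of the set function $r(S) = \max_{w' \in \cC} r(S, w')$, and invoke the classical Nemhauser--Wolsey--Fisher guarantee that the greedy algorithm achieves a $(1-1/e)$-approximation for maximizing a monotone submodular set function under a cardinality constraint. Since Algorithm \ref{alg:oracl:greedy} is exactly the standard greedy procedure for the cardinality-constrained problem $\max_{S: |S| \le K} r(S)$, and $r$ is nonnegative, monotone, and submodular on $2^V$, the guarantee applies directly to give $r(S') \ge (1-1/e)\, r(S_{\cC}^{\popt})$.

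Concretely, I would first note that $r$ is nonnegative (since $w' \in [0,1]^m$), that monotonicity has been proven in Lemma \ref{lem:r_monoton} for arbitrary graphs, and that submodularity on bipartite graphs with in-degree at most $2$ has been proven in Lemma \ref{lem:bipaetite_submodular}. Then I would write the standard induction: let $S_i$ be the greedy set after $i$ iterations, so $S_0 = \emptyset$ and $S_K = S'$, and let $\opt := r(S_{\cC}^{\popt})$. Using submodularity and monotonicity, one shows that each greedy step satisfies
\begin{align*}
r(S_{i+1}) - r(S_i) \ge \tfrac{1}{K}\bigl(\opt - r(S_i)\bigr),
\end{align*}
because the best marginal gain at step $i+1$ is at least the average marginal gain over the $K$ elements of $S_{\cC}^{\popt}$, and by submodularity this average is at least $(\opt - r(S_i))/K$. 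Iterating this recursion yields $r(S') \ge (1 - (1-1/K)^K) \opt \ge (1-1/e)\, \opt$.

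There is no real obstacle here because the heavy lifting (monotonicity and submodularity of the bilevel function $r$) has already been done in the preceding lemmas. The only subtlety worth spelling out is that the inner maximization over $w'$ does not spoil the standard Nemhauser--Wolsey--Fisher hypotheses: $r$ is a well-defined deterministic set function on $2^V$, taking values in $[0,m]$, so the greedy argument applies without modification. I would therefore present the proof as a two-line citation of Nemhauser--Wolsey--Fisher together with a brief restatement of the standard induction for self-containedness.
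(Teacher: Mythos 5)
Your proposal is correct and matches the paper's proof, which likewise deduces the bound by combining the monotonicity of $r(\cdot)$ (Lemma \ref{lem:r_monoton}) and its submodularity on bipartite graphs with in-degree at most $2$ (Lemma \ref{lem:bipaetite_submodular}) and then invoking the classical greedy $(1-1/e)$ guarantee (cited via \cite[Theorem 2.1]{Kempe2003}, i.e.\ Nemhauser--Wolsey--Fisher). Your explicit restatement of the standard induction $r(S_{i+1}) - r(S_i) \ge \frac{1}{K}(\mathrm{Opt} - r(S_i))$ is a fine, self-contained way to present the same argument.
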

The proof is a direct application of \cite[Theorem 2.1]{Kempe2003} by noting that the function $r(\cdot)$ satisfies monotonicity (Lemma \ref{lem:r_monoton}) and submodularity (Lemma \ref{lem:bipaetite_submodular}) in such graphs.

\paragraph{A counterexample of in-degree $3$} Here we show an example of bipartite graphs with in-degree $3$ but the $r(\cdot)$ does not have the submodularity property.

Let $V_1 = \set{u_1, u_2, u_3}, \abs{V_2} = 1$ and there are only $3$ edges (see Figure \ref{fig: bipartite graph}(b) for example). The confidence set $\cC = \set{w' \in \RR^3 : \norm{w'}_M \le 1}$ with 
$$M = \left[
\begin{array}{ccc} 
2 & 1 & 0 \\
1 & 3 & 1\\
0 & 1 & 2
\end{array} 
\right]\,.$$
Note $M = I + (1, 1, 0)^\top (1, 1, 0) + (0, 1, 1)^\top (0, 1, 1) $ can happen for our algorithm $\ltlinucb$. We solve the optimization problem and get $r(\set{u_1, u_2}) \approx 0.791, r(\set{u_2, u_3}) \approx 0.791, r(\set{u_2}) \approx 0.707, r(\set{u_1, u_2, u_3}) \approx 1.000 $. Thus let $u = u_1, S = \set{u_2}, S'=\set{u_2,u_3}$, we have 
\begin{align*}
r(S\cup \set{u}) - r(S) < 0.09 < 0.2 < r(S'\cup \set{u}) - r(S')\,,
\end{align*}
which violates the definition of submodularity.

\subsection{Directed Acyclic Graphs}

Recall that $r(S) = \max_{w' \in \cC} r(S,w')$ and $S_{\cC}^{\popt} = \argmax_{S \in \cA} r(S)$ is the optimal solution. Let $S'$ be the output of Greedy $\pairoracle$ (Algorithm \ref{alg:oracl:greedy}). Then we have the following $1/K$-approximation result.

\begin{lemma}
\label{lem:greedyoracle_general}
For general graphs, suppose we can compute $r(S)$ for any $S$. Then
\begin{align}
r(S') \ge \frac{1}{K} \cdot r(S_{\cC}^{\popt}) \,.
\end{align} 
\end{lemma}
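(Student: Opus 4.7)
The plan is to reduce the $1/K$ approximation to three simple facts about $r(S) = \max_{w' \in \cC} r(S, w')$: monotonicity (already established in Lemma \ref{lem:r_monoton}), subadditivity across seeds, and the greedy rule applied at the first step. Let $S^{\popt}_{\cC} = \set{u_1^*, \ldots, u_K^*}$ and let $v_1$ be the node chosen by Algorithm \ref{alg:oracl:greedy} in its first iteration; let $S'$ be the final greedy output.

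First I would establish subadditivity of $r(\cdot)$, namely $r(A \cup B) \le r(A) + r(B)$ for any $A, B \subseteq V$. For every fixed $w' \in \cC$, the function $r(\cdot, w')$ is nonnegative submodular under the LT model (Kempe et al.~\cite{Kempe2003}), hence subadditive: $r(A \cup B, w') \le r(A, w') + r(B, w')$. Taking maxima, and using that picking one $w'$ to bound the sum is no better than picking a different maximizer in each term, gives
\begin{align*}
r(A \cup B) = \max_{w' \in \cC} r(A \cup B, w') \le \max_{w' \in \cC} r(A, w') + \max_{w' \in \cC} r(B, w') = r(A) + r(B).
\end{align*}
Iterating, $r(S^{\popt}_{\cC}) \le \sum_{i=1}^K r(\set{u_i^*}) \le K \cdot \max_{i} r(\set{u_i^*})$.

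Next I would use the first-step greedy choice. By definition of Algorithm \ref{alg:oracl:greedy}, $v_1 \in \argmax_{u \in V} r(\set{u})$, so $r(\set{v_1}) \ge r(\set{u_i^*})$ for every $i$, and therefore $r(\set{v_1}) \ge r(S^{\popt}_{\cC})/K$. Finally, monotonicity of $r(\cdot)$ (Lemma \ref{lem:r_monoton}) yields $r(S') \ge r(\set{v_1}) \ge r(S^{\popt}_{\cC})/K$, which is the claim.

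There is no serious obstacle here; the only subtlety is confirming that the DAG/general-graph assumption only enters through the submodularity of $r(\cdot, w')$, which holds for LT on arbitrary graphs (not just DAGs). The main payoff of the argument is that it needs nothing beyond monotonicity plus a one-shot application of subadditivity at the first greedy step, so the usual $(1-1/e)$ analysis (which would require submodularity of the outer $r(\cdot)$, and fails here as shown by the earlier in-degree-$3$ counterexample) is sidestepped at the price of a weaker $1/K$ factor.
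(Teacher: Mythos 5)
Your proof is correct and follows essentially the same route as the paper's: monotonicity of $r(\cdot)$ to reduce to the first greedy singleton, plus subadditivity of $r(\cdot)$ inherited from the nonnegative submodularity of $r(\cdot,w')$ for each fixed $w'$. The only cosmetic difference is that you derive subadditivity of the outer maximum abstractly, whereas the paper evaluates each singleton at the particular maximizer $w^{\popt}_{\cC}$ before invoking subadditivity of $r(\cdot,w^{\popt}_{\cC})$ — the two computations are interchangeable.
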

\begin{proof}
Denote $S_{\cC}^{\popt} = \set{s_1^*,s_2^*,...,s_K^*}$. 
Assume the Greedy $\pairoracle$ first chooses $s'$. Then $s' \in S'$ and $s' = \argmax_{u \in V} r\bracket{\{u\}}$ or equivalently $r(\{s'\}) \ge r\bracket{\{u\}}$ for any $u \in V$. By monotonicity of $r$ (Lemma \ref{lem:r_monoton}),
\begin{align*}
r(S') \ge r(\{s'\}) \ge \frac{1}{K}\cdot \bracket{r(\{s_1^*\})+r(\{s_2^*\})+...+r(\{s_K^*\})} \,.
\end{align*}

It suffices to prove that $r$ satisfies the subadditivity. It is well known that the reward function $r(\cdot, w')$ satisfies submodularity in LT model \cite{Kempe2003}. Then for any $S \subseteq S''$,
\begin{align*}
r(S, w') + r(S'', w') \ge r(S\cup S'', w') + r(S \cap S'', w') \ge r(S\cup S'', w') \,.
\end{align*}
Recall that $r(S_{\cC}^{\popt}) = r(S_{\cC}^{\popt}, w_{\cC}^{\popt})$. Then
\begin{align*}
&r(\{s_1^*\}) + r(\{s_2^*\}) + \ldots + r(\{s_K^*\}) \\
&\qquad \ge r(\{s_1^*\}, w_{\cC}^{\popt}) + r(\{s_2^*\}, w_{\cC}^{\popt}) + \ldots + r(\{s_K^*\},w_{\cC}^{\popt}) \\
&\qquad \ge r(S_{\cC}^{\popt}, w_{\cC}^{\popt}) = r(S_{\cC}^{\popt})
\end{align*}
and the result follows.
\end{proof}

Next we show that for directed acyclic graphs (DAGs), there is an efficient method to compute $r(S) = \max_{w' \in \cC} r(S, w')$. 

\begin{algorithm}[h]{}
\caption{Compute $r(S)$ in DAGs}
\label{alg:DAGs solve r}
\begin{algorithmic}[1]
\STATE \textbf{Input:} DAG $G=(V,E)$; seed set $S$; \\
\qquad \quad the set of confidence ellipsoids $(\cC_v)_{v \in V}$ with $\cC_v = \set{w_v' : (w_v')^\top M_v w_v' \le \rho_v^2}$
\STATE \textbf{Initialize:} Delete all in-edges to nodes in $S \subseteq V$
\STATE Use topological ranking to form `layers' of nodes $L_0, \ldots, L_\ell, \ldots, L_{n-1}$ satisfies any edge $e \in E$ points from $L_i$ to $L_j$ for some $i < j$
\STATE $r_S^u = 1$ for $u \in S$; $r_S^u = 0$ for $u \in L_0 \setminus S$
\FOR{$\ell=1,2,\ldots$}
    \FOR{$v \in L_{\ell}$}
    \STATE Solve $r_S^v = \max_{w' \in \cC_v} \sum_{u \in N(v)} r_{S}^{u} \cdot w'(e_{u, v})$ 
	\ENDFOR
\ENDFOR
\STATE \textbf{Output:} $r(S) = \sum_{v \in V} r_{S}^{v}$
\end{algorithmic}
\end{algorithm}

For seed set $S$, delete all in-coming edges to $S$. Take all nodes with in-degree $0$ and form a set $L_0 \supseteq S$. Then consider the reduced subgraph for remaining nodes $V \setminus L_0$, take all nodes in the subgraph with in-degree $0$ and form a set $L_1$. 
Note subgraphs of DAGs are still DAGs and in DAGs there are nodes with in-degree $0$, otherwise we could find a cycle by adaptively adding in-neighbors. Then the procedure can continue until no node is left. Such process is just topological ranking to form `layers' of nodes. For any node $u \in L_{\ell}$, its incoming edges are all from previous layers (except seed nodes), or equivalently nodes in $L_0 \cup L_1 \cup \cdots \cup L_{\ell-1}$. There are at most $n$ layers.

Let $E_{\ell}'$ to denote the edges that has end node in layer $\ell$ and $E_{\ell:\ell'}' = E_{\ell}' \cup E_{\ell+1}' \cup \cdots \cup E_{\ell'}'$. Then $E_{\ell}' \bigcap E_{\ell'}' = \emptyset$ if $\ell \neq \ell'$.

Let $r_{S}^{u}(w')$ be the probability that node $u$ will be influenced under the weight vector $w'$ when the seed set is $S$ and $r_{S}^{u} = \max_{w' \in \cC} r_{S}^{u}(w')$. 
For seed node $u \in S$, it is activated with probability $1$, or $r_{S}^{u}(w') \equiv 1$. For node $u \in L_0 \setminus S$, there is no directed path connecting from seed node $S$, so its activation probability is always $0$, or $r_{S}^{u}(w') \equiv 0$. So we have computed $r_{S}^{u}$ for $u \in L_0$.

Let $\ell = 1$.
For node $u \in L_{\ell}$, its incoming edges all come from former layers $< \ell$. Note that $r_S^u$ has been defined for any layer $<\ell$ and $r(S, w')$ can be decomposed as
\begin{align}
	\sum_{v \in L_\ell} \bracket{ \sum_{u \in N^{\mathrm{in}}(v)} r_{S}^{u} \cdot w'(e_{u, v}) } \cdot f_v(w', E_{\ell+1:n}')\,, \label{eq:decomp by layer}
\end{align}
where $f_v(w', E_{\ell+1:n}')$ is the expected influenced nodes by node $v$ for \textit{later layers} and it only relates with the edges ending in later layers.
Recall that the constraints are added to the edges with the same ending node. The edge $e_{u,v}$ for $v \in L_{\ell}$ ends in $L_{\ell}$, so it is independent with $E_{\ell'}'$ for $\ell' > \ell$. Note that $f_v(w', E_{\ell+1:n}') \ge 1 > 0$ since node $v$ at least influences itself. So to maximize $r(S, w')$ over $w' \in \cC$, the weights related with edges in $E_{\ell}'$ can be maximized separately. Specifically, we can solve the maximization problem for each $v \in L_{\ell}$:
\begin{align}
	&\max \sum_{u \in N^{\mathrm{in}}(v)} r_{S}^{u} \cdot w'(e_{u, v}) \label{eq:dag optimization} \\ 
	&s.t. \quad (w_v')^\top M_v w_v' \le \rho_v^2 \notag
\end{align}
where the $w_v' = (w'(e_{u, v}))_{u \in N(v)}$, $M_v$ is some positive-definite matrix and $\rho_v$ is some constant. This optimization problem is linear programming with quadratic convex constraints and can be solved efficiently. The resulting maximum value is actually $r_{S}^{v}$. So we can compute $r_S$ for layer $\ell$. Then we can compute $r(S) = \sum_{v \in V} r_S^v$ by repeating steps \eqref{eq:decomp by layer} \eqref{eq:dag optimization} with induction on $\ell$. The process is presented in Algorithm \ref{alg:DAGs solve r}.

The key point to make this through for DAGs is based on the linearity of LT. Then we can decompose the objective functions to \textit{isolated} parts and use common optimization methods to solve each part step by step.

\section{Analysis of $\oimetc$ Algorithm}
\label{sec:app etc}

We first provide the regret bound of $\oimetc$ Algorithm under both IC and LT models and then give discussions about it.

\subsection{Proof of Theorem \ref{thm:etc}}

Recall that $\hat{w}$ is the empirical estimate of weight vector $w$ (line \ref{algo:etc:compute hat w} of Algorithm \ref{alg:IMETC}) and $\hat{S}$ is the output of the $(\alpha,\beta)$-approximation $\oracle$ under estimated weight vector $\hat{w}$ (line \ref{algo:etc:compute hat S} of Algorithm \ref{alg:IMETC}).
Define event
\begin{align*}
	\cF &= \set{r(\hat{S}, \hat{w}) < \alpha\cdot \opt_{\hat{w}}}\,.
\end{align*}
Then $\PP{\cF} < 1-\beta$ since the $\oracle$ is $(\alpha,\beta)$-approximation.

We first decompose the regret
\begin{align}
R(T) &= \EE{\sum\limits_{t=1}^{T} \bracket{\alpha\beta\cdot \opt_{w} -  r(S_t,w) } } \notag\\
&= \EE{\sum\limits_{t=1}^{nk} \bracket{\alpha\beta\cdot \opt_{w} -  r(S_t,w) } } + \EE{\sum\limits_{t=nk+1}^{T} \bracket{\alpha\beta\cdot \opt_{w} -  r(S_t,w) } } \notag \\
&\le nk \Delta_{\max} + (T-nk)\EE{\alpha\beta\cdot \opt_{w} -  r(\hat{S},w)} \notag \\
&\le nk \Delta_{\max} + (T-nk) \beta \cdot \EE{\alpha \cdot \opt_{w} -  r(\hat{S},w) \middle| \cF^c} \label{eq:etc pf condition}
\end{align}
where the last inequality is by 
\begin{align*}
	\EE{r(\hat{S},w)} = \EE{r(\hat{S},w) \middle| \cF} \PP{\cF} +\EE{r(\hat{S},w) \middle| \cF^c} \PP{\cF^c} \ge \beta \cdot \EE{r(\hat{S},w) \middle\vert \cF^c}\,.
\end{align*}

Note under $\cF^c$,
\begin{align}
	\alpha \cdot \opt_{w} &= \alpha \cdot r(S_w^\opt, w) \notag\\
	&\le \alpha \cdot r(S_w^\opt, \hat{w}) + \alpha \cdot mn \cdot \max_{e \in E} \abs{\hat{w}(e) - w(e)} \notag \\
	&\le \alpha \cdot r(S_{\hat{w}}^\opt, \hat{w}) + \alpha \cdot mn \cdot \max_{e \in E} \abs{\hat{w}(e) - w(e)} \notag \\
	&\le r(\hat{S}, \hat{w}) + \alpha \cdot mn \cdot \max_{e \in E} \abs{\hat{w}(e) - w(e)}\notag \\
	&\le r(\hat{S}, w) + (1+\alpha) \cdot mn \cdot \max_{e \in E} \abs{\hat{w}(e) - w(e)}\,. \label{eq:etc gap}
\end{align}
Then when $\max_{e \in E} \abs{\hat{w}(e) - w(e)} < \frac{\Delta_{\min}}{(1+\alpha)mn} =: \epsilon_0$, $\hat{S} \notin \cS_B$. So the regret becomes
\begin{align}
R(T) &\leq nk\Delta_{\max} + (T-nk)\cdot 2m\exp(-2k\epsilon_0^2)\Delta_{\max} \notag\\
&\leq \bracket{nk + 2mT\exp(-2k\epsilon_0^2)}\Delta_{\max} \notag\\
&= \frac{n\Delta_{\max}}{2\epsilon_0^2} \ln^+\frac{4mT\epsilon_0^2}{n} + \frac{n\Delta_{\max}}{2\epsilon_0^2} \notag
\end{align}
where the first inequality is to bound the complement of the event $\max_{e \in E} \abs{\hat{w}(e) - w(e)} < \epsilon_0$ by the Chernorff-Hoeffding bound (Lemma \ref{lem:Ch-H}), the equality is optimized with $k$ satisfying $\exp(2k\epsilon_0^2) = 4mT\epsilon_0^2/n$ and $\ln^+(x) = \max\set{0, \ln(x)}$.

Therefore taking $k = \max\set{1, \frac{1}{2\epsilon_0^2}\ln\frac{4mT\epsilon_0^2}{n}} = \max\set{1, \frac{2m^2 n^2}{\Delta_{\min}^2}\ln\frac{T \Delta_{\min}^2}{m n^3}}$ together with $R(T) \le T \Delta_{\max}$, the regret satisfies
\begin{align}
R(T) &\leq \min\set{T\Delta_{\max}, n\Delta_{\max} + \frac{2m^2 n^3\Delta_{\max}}{\Delta_{\min}^2}\bracket{1 + \ln^+\frac{T\Delta_{\min}^2}{mn^3}}} \notag\\
&= O\bracket{\frac{m^2 n^3\Delta_{\max}}{\Delta_{\min}^2} \ln(T)}\,.
\end{align}

Next we prove the problem-independent bound. Following \eqref{eq:etc gap} under $\cF^c$, with a suitable $\epsilon$ to be decided later,
\begin{align*}
	\EE{\alpha \cdot \opt_{w} - r(\hat{S}, w)} &\le 2mn \cdot \EE{\max_{e \in E} \abs{\hat{w}(e) - w(e)}}\\
	&\le 2mn \epsilon + 2mn \sum_{s=0}^\infty 2^{s+1} \epsilon \cdot \PP{2^s \epsilon < \max_{e \in E} \abs{\hat{w}(e) - w(e)} \le 2^{s+1} \epsilon}\\
	&\le 2mn \epsilon + 2mn \sum_{s=0}^\infty 2^{s+1} \epsilon \cdot \PP{\exists e \in E, \abs{\hat{w}(e) - w(e)} > 2^s \epsilon}\\
	&\le 2mn \epsilon + 2mn \sum_{s=0}^\infty 2^{s+1} \epsilon \cdot 2m \exp(-2k 2^{2s} \epsilon^2)\\
	&= 2mn \epsilon + \frac{8m^2n}{\sqrt{2k}} \sum_{s=0}^\infty \sqrt{2k}2^s\epsilon \cdot \exp(-(\sqrt{2k}2^s\epsilon)^2)\,.
\end{align*}
Let $X_s:= \sqrt{2k}2^s\epsilon$. Note that the function $f(x) = xe^{-x^2}$ increases in $[0,1/\sqrt{2}]$ and decreases in $[1/\sqrt{2}, \infty)$. Let $s_0$ satisfy 
\begin{align*}
2^{s_0} < \frac{1}{2\sqrt{k}\epsilon} \le 2^{s_0+1}\,,
\end{align*}
or equivalently $X_s < 1/\sqrt{2}$ for $s \le s_0$ and $X_s \ge 1/\sqrt{2}$ for $s \ge s_0+1$. Then we can divide the sum into three parts
\begin{align*}
	\sum_{s=0}^{s_0-1} f(X_s) + \sum_{s=s_0}^{s_0+1} f(X_s) + \sum_{s_0+2}^\infty f(X_s)\,.
\end{align*}
By monotonicity, $\sum_{s=0}^{s_0-1} f(X_s) \le \int_0^{s_0} f(x) \ dx$ and $\sum_{s_0+2}^\infty f(X_s) \le \int_{s_0+1}^{\infty} f(x) \ dx$. 
Thus
\begin{align*}
	\EE{\alpha \cdot \opt_{w} - r(\hat{S}, w)} &\le 2mn \epsilon + \frac{8m^2n}{\sqrt{2k}} \bracket{\int_{0}^{\infty} f(x) \ dx + f(X_{s_0}) + f(X_{s_0+1})}\\
	&\le 2mn \epsilon + \frac{8m^2n}{\sqrt{2k}} \bracket{\frac{1}{2} + 2f(1/\sqrt{2})} \\
	&= 2mn \epsilon + \frac{8m^2n}{\sqrt{2k}} \bracket{\frac{1}{2} + \sqrt{2}\exp(-1/2)} \\
	&\le 2mn \epsilon + 7.69 m^2n / \sqrt{k}\,.
\end{align*}

By substituting it to \eqref{eq:etc pf condition}, the regret is bounded by 
\begin{align}
	R(T) &\le nk\Delta_{\max} + T \cdot \bracket{2mn \epsilon + 7.69 m^2n / \sqrt{k}} \notag\\
	&\le n^2k + T \cdot \bracket{2mn \epsilon + 7.69 m^2n / \sqrt{k}} \notag\\
	&\le 3.9 (mn)^{4/3} T^{2/3} + 1 \le 5(mn)^{4/3} T^{2/3} \notag \\
	&= O\bracket{(mn)^{4/3} T^{2/3}}\,,
\end{align}
where we take $k=3.9 m^{4/3} n^{-2/3} T^{2/3}$ and $\epsilon=1/(2mnT)$.

\subsection{Discussions}
\label{app:etc discussions}

As we mentioned, our $\oimetc$ algorithm is model independent and applies to both LT and IC model with node-level feedback.

Recall that for a typical influence spread under the IC model, each edge $e$ is \textit{live} with the associated probability $w(e) \in [0,1]$ and a node is activated if there is a (directed) path connecting from the seed set. For the IC model, there are three types of feedback: (1) bandit feedback, where the learner can only observe the number of influenced nodes; (2) edge-level feedback, where the learner can observe the liveness status of each outgoing edge from the activated nodes; (3) node-level feedback, where the learner can only observe the spread propagation but not individual edge liveness. The bandit feedback presents the least information and is most difficult considering the nonlinearity and complexity of the influence reward function. The edge-level feedback gives the most informative feedback and most previous work study this scheme \cite{WeiChen2016,WeiChen2017, zhengwen2017nips,IMFB2019}.

Since our $\oimetc$ only selects size-$1$ seed set in the exploration phase, so the node-level feedback of the first-step triggering is the same with the edge-level feedback. Thus $\oimetc$ can be applied to both IC and LT model. Though simple, $\oimetc$ is the first model-independent algorithm for OIM\footnote{Note that the work \cite{Model-Independent2017} presents a model-free algorithm for an approximated reward function without approximation ratio while we do not relax the spread objective.}. Furthermore, the computational complexity for $\oimetc$ is really low, as it only calls once of the offline oracle.

As mentioned in related work, the algorithm for the combinatorial partial monitoring \cite{lin2014combinatorialpm} can be applied in OIM for both LT and IC models with node-level feedback. However, the second best solution used in their algorithm could not be directly computed in the offline IM setting. Hence only their the second stop-exploration condition applies and a regret bound of $O(nm^{3/2}T^{2/3}\ln(T))$ is obtained. Our $\oimetc$ is better in $O(\ln(T))$ term and a bit worse in $O((n/\sqrt{m})^{1/3})$ term. Also our $\oimetc$ has a problem-dependent regret bound.

Comparing with $\ltlinucb$ we see that $\oimetc$ only requires the first-step node feedback, not the full diffusion process feedback of $S_{t,0}, S_{t,1}, \ldots, S_{t,\tau}, \ldots$.
Moreover, it only requires the offline oracle to solve the maximization problem using the empirical mean as the fixed weight vector. 
The objective function in this case is known to be monotone and submodular \cite{Kempe2003,Mossel2010local2global}, and thus a greedy algorithm \cite{Kempe2003} or IMM algorithm \cite{IMM2015} could achieve $1-1/e-\varepsilon$ approximation (for any small $\varepsilon > 0$) with probability at least $1-1/n$.
%as well as IMM algorithm \cite{IMM2015}.
That is, $(\alpha,\beta)$-approximation $\oracle$ with $\alpha= 1-1/e-\varepsilon$ and $\beta = 1-1/n$ has an efficient implementation.
This is also easier than the $\pairoracle$, which has the confidence ellipsoid as the constraint on weight vectors.

\section{Technical Lemmas}
\label{sec:tech lemmas}

\begin{lemma}
\label{lem:Ch-H}
(Chernorff-Hoeffding bound) Let $X_1,X_2,\ldots,X_n$ be independent random variables with common support $[0,1]$. Let $S_n = X_1+X_2+ \cdots +X_n$ and $\mu = \EE{S_n}$. Then for any $\epsilon \geq 0$,
\begin{align*}
	\PP{\abs{S_n - \mu} \geq n \epsilon} \leq 2\exp\bracket{-2n\epsilon^2}\,.
\end{align*}
\end{lemma}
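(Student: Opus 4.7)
The plan is to apply the standard Chernoff method: for a parameter $\lambda > 0$ to be chosen later, use Markov's inequality on the exponential moment of $S_n - \mu$, exploit independence to factorize the moment generating function, bound each factor using Hoeffding's lemma, and finally optimize over $\lambda$. A symmetric argument plus a union bound will then yield the two-sided tail with the factor of $2$.

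Concretely, I would first handle the upper tail. Fix $\lambda > 0$ and observe that, by Markov's inequality,
\[
\PP{S_n - \mu \ge n\epsilon} = \PP{e^{\lambda(S_n - \mu)} \ge e^{\lambda n \epsilon}} \le e^{-\lambda n \epsilon} \, \EE{e^{\lambda(S_n - \mu)}}.
\]
Because the $X_i$ are independent, so are the centered variables $Y_i := X_i - \EE{X_i}$, hence $\EE{e^{\lambda(S_n - \mu)}} = \prod_{i=1}^n \EE{e^{\lambda Y_i}}$. Since $X_i \in [0,1]$, each $Y_i$ lies in an interval of length exactly $1$ and has mean zero, so Hoeffding's lemma gives $\EE{e^{\lambda Y_i}} \le \exp(\lambda^2/8)$. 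Combining these yields $\PP{S_n - \mu \ge n \epsilon} \le \exp\bracket{-\lambda n \epsilon + n\lambda^2/8}$.

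Next, I would optimize the right-hand side over $\lambda > 0$. The exponent is minimized at $\lambda = 4\epsilon$, producing the one-sided bound $\PP{S_n - \mu \ge n\epsilon} \le \exp(-2n\epsilon^2)$. Applying the same argument to the variables $-X_i$ (which also lie in a unit-length interval) gives the matching lower-tail bound $\PP{S_n - \mu \le -n\epsilon} \le \exp(-2n\epsilon^2)$. A union bound over the two events yields the stated inequality $\PP{\abs{S_n - \mu} \ge n\epsilon} \le 2 \exp(-2n\epsilon^2)$.

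The only step with real content is Hoeffding's lemma, which bounds the moment generating function of a zero-mean, bounded random variable; I would simply cite it since it is a standard textbook result, whose proof uses convexity of $x \mapsto e^{\lambda x}$ to upper bound it by its linear interpolation on the support, followed by a Taylor expansion argument on the resulting log-MGF. Everything else is bookkeeping, so I do not anticipate a genuine obstacle; the lemma is invoked in the paper purely to control $\max_{e} \abs{\hat w(e) - w(e)}$ after the exploration phase of $\oimetc$.
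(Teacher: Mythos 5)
Your proof is correct and is the standard Chernoff--Hoeffding argument (exponential Markov bound, factorization by independence, Hoeffding's lemma, optimization at $\lambda = 4\epsilon$, and a union bound for the two-sided tail). The paper states Lemma \ref{lem:Ch-H} without proof as a classical result, so there is nothing to compare against; your derivation matches the textbook proof the authors are implicitly invoking.
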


Next is a property of the reward function on the weight vector under the LT model. Note that the similar property also holds for IC model \cite[Lemma 6]{WeiChen2016}.

\begin{lemma}
\label{lem:monotone and infty norm}
Under the LT model, the reward function $r(S,w)$ is monotone increasing in $w$.
And for any seed set $S$ and any two weight vectors $w,w' \in [0,1]^{m}$, there is 
\begin{align}
\label{eq:lipschitz continuity}
\abs{r(S,w) - r(S,w')} \leq mn \cdot \max_{e \in E} \abs{w(e)-w'(e)}\,.
\end{align}
\end{lemma}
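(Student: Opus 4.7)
The plan is to prove both statements via coupling arguments. For monotonicity I will work directly with the threshold formulation, and for the Lipschitz bound I will invoke the equivalent live-edge formulation of the LT model \cite{Kempe2003,Wei2013Information}, which is available because of the standing assumption $\sum_{u \in N(v)} w(e_{u,v}) \le 1$.

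For monotonicity, fix a shared threshold realization $\theta \in [0,1]^n$ and run the LT diffusion in parallel under $w$ and $w'$ with $w \le w'$ coordinatewise. I will argue by induction on $\tau$ that $S_\tau(w,\theta) \subseteq S_\tau(w',\theta)$. The base case $\tau=0$ is trivial since both processes start from the seed set $S$. For the inductive step, if some $v \notin S_\tau(w,\theta)$ becomes active at step $\tau+1$ under $w$, then $\sum_{u \in N(v) \cap S_\tau(w,\theta)} w(e_{u,v}) \ge \theta_v$; the inductive hypothesis enlarges the active in-neighbor set when passing from $w$ to $w'$, and the weights themselves only grow, so the same activation condition holds under $w'$. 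Integrating over $\theta \sim (\cU[0,1])^n$ yields $r(S,w) \le r(S,w')$.

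For the Lipschitz bound, I will use the live-edge formulation: independently for each node $v$, draw a selector $X_v$ taking value $u \in N(v)$ with probability $w(e_{u,v})$ and the symbol $\mathrm{null}$ with probability $1 - \sum_{u \in N(v)} w(e_{u,v})$, and declare $v$ activated iff there is a directed $S$-to-$v$ path in the random graph whose edge set is $\{(X_u,u): X_u \neq \mathrm{null}\}$. The crucial deterministic implication is that if the joint realizations $(X_u)_{u \in V}$ and $(X_u')_{u \in V}$ agree at every node, then the two live-edge graphs coincide and every activation indicator agrees under $w$ and $w'$. Independently at each $u$, I will couple $X_u$ with $X_u'$ optimally; the total variation distance between their marginal distributions is bounded by $\sum_{u' \in N(u)} |w(e_{u',u}) - w'(e_{u',u})|$, because $|p_{\mathrm{null}} - p'_{\mathrm{null}}| = |\sum_{u'}(p_{u'} - p'_{u'})| \le \sum_{u'}|p_{u'} - p'_{u'}|$ absorbs the discrepancy at the $\mathrm{null}$ atom into the sum over $N(u)$. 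Writing $p_v(w) := \PP{v\ \text{activated under}\ w}$ and applying a union bound over $u$,
\begin{align*}
|p_v(w) - p_v(w')| &\le \PP{\exists u \in V: X_u \neq X_u'} \\
&\le \sum_{u \in V}\sum_{u' \in N(u)} |w(e_{u',u}) - w'(e_{u',u})| \\
&= \sum_{e \in E}|w(e) - w'(e)| \le m \cdot \max_{e \in E} |w(e) - w'(e)|.
\end{align*}
Summing $|p_v(w) - p_v(w')|$ over the at most $n$ target nodes $v \in V \setminus S$ (seed nodes contribute zero difference) and applying the triangle inequality to $r(S,w) - r(S,w') = \sum_v (p_v(w) - p_v(w'))$ gives the stated $mn \cdot \max_e |w(e) - w'(e)|$ bound.

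There is no deep obstacle beyond identifying the live-edge equivalence and picking the right coupling: the factor $mn$ emerges cleanly as (number of edges) $\times$ (number of target nodes). The only subtlety worth flagging is the $\mathrm{null}$-atom absorption, which lets me bound the total variation at $v$ purely in terms of the weight discrepancies on $v$'s incoming edges rather than on all of $V$; without that step one would have to track an extra $\mathrm{null}$ discrepancy per node and the telescoping would look less natural. Note also that an alternative single-coordinate swap argument yields the same $mn$ constant by changing one edge weight at a time, each time affecting only one selector and hence at most $n$ downstream activations; I prefer the coupling version because it generalizes more cleanly and exposes why the bound cannot be improved without using finer structural information such as the GOM property of Theorem~\ref{theorem:TPM}.
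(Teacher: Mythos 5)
Your monotonicity argument is the same threshold-coupling induction the paper uses, and your Lipschitz argument is a genuinely different route: the paper reduces to the monotone case via $w \wedge w'$ and $w \vee w'$ and then interpolates one edge at a time, coupling the two diffusions on a shared threshold vector $\theta$ and arguing that a single-edge change of size $\delta$ flips the activation of that edge's endpoint with probability at most $\delta$ and thereby alters at most $n$ activations; you instead pass to the live-edge (selector) representation, build a maximal coupling of the selectors node by node, and take a union bound. Within its scope your coupling is correct — the total-variation bound with the null-atom absorption and the union bound over nodes do give $(n-K)\sum_e |w(e)-w'(e)| \le mn\max_e|w(e)-w'(e)|$.

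The genuine gap is the scope. The lemma quantifies over \emph{arbitrary} $w,w' \in [0,1]^m$, with no in-weight-sum constraint, and the paper applies it exactly in regimes where that constraint fails: in the proof of Theorem \ref{thm:etc} it is invoked with the empirical estimate $\hat{w}$ (whose incoming weights at a node can easily sum above $1$), and via Lemma \ref{lem:lipschitz} it is invoked in Lemma \ref{lem:approximation} with points of the confidence ellipsoid and of the $\epsilon$-net, which again need not satisfy $\sum_{u\in N(v)} w'(e_{u,v}) \le 1$. Your proof leans on the live-edge equivalence, which only exists when the incoming weights at every node sum to at most $1$ for \emph{both} vectors; when they do not, the selector distribution you define has total mass exceeding $1$ and the argument has no starting point, even though the LT diffusion and $r(S,w)$ remain perfectly well defined through the threshold formulation. (Your fallback "single-coordinate swap" remark does not rescue this, since it is phrased in terms of selectors as well.) The paper's proof is immune to this because it never leaves the threshold representation: fixing $\theta$ and perturbing one edge weight requires no normalization condition. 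So either restate the lemma with the sum constraint on both vectors — which would then be too weak for the paper's applications — or redo the Lipschitz step with a threshold-level coupling as the paper does.
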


\begin{proof}
We first prove the monotonicity. Suppose $w(e) \le w'(e)$ for all $e \in E$. For any fixed thresholds $\theta_v$'s, the instance of influence graph under weight vector $w$ is always a subgraph of $w'$ since any activated node $v$ under $w$ is always activated under $w'$. Thus $r(S, w) \le r(S, w')$.

For \eqref{eq:lipschitz continuity}, it is enough to prove the case $w \le w'$; otherwise we can prove it first for $w \wedge w'$ and $w \vee w'$ and then conclude the result since $r(S, w \wedge w') \le \set{r(S, w), r(S, w')} \le r(S, w \vee w')$.

Now assume $w, w'$ only differ on one edge $e$: $w'(e) > w(e)$ and $w'(e') = w(e')$ for any $e' \neq e$. For any fixed thresholds $\theta_v$'s, consider the two diffusion process under $w, w'$. If the spreads are different, then the starting node that the diffusion processes starts to become different must be the end node of edge $e$. Then this difference would cause at most $n$ nodes differences. Such an event happens when the difference of $w'(e) - w(e)$ contributes to the activation of end node of edge $e$, which has probability at most $w'(e) - w(e)$. Thus $r(S,w') - r(S,w) \le n \cdot \bracket{w'(e) - w(e)}$.

%Then the first-step activated probabilities of the nodes differ at most $w'(e) - w(e)$ under two weight vectors since the seed set is common. Then the second-step activated probabilities of nodes are the in-weight weighted activation difference, which is also at most $w'(e) - w(e)$ since the in-weight sum of any node is at most $1$ under $w$. By induction, we can prove the activation probability difference of any node is at most $w'(e) - w(e)$ and thus $r(S,w') - r(S,w) \le n \cdot \max_{e \in E} \abs{w(e)-w'(e)}$.

Then for vectors $w \le w'$, we can construct at most $\abs{E} = m$ vector pairs from $w$ to $w'$ with each pair only differing on one edge. By summing them up, we get $r(S,w') - r(S,w) \le mn \cdot \max_{e \in E} \abs{w(e)-w'(e)}$.
\end{proof}

\begin{lemma}\label{lem:lipschitz}
For any seed set $S$ and any two weight vectors $w, w' \in [0,1]^m$, there is
\begin{align*}
	\abs{r(S,w) - r(S,w')} \le mn \cdot \norm{w - w'}_2\,.
\end{align*}
\end{lemma}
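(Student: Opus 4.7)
The plan is to derive this Lipschitz bound as an immediate consequence of the infinity-norm bound already established in Lemma~\ref{lem:monotone and infty norm}. That lemma gives
\[
\abs{r(S,w) - r(S,w')} \le mn \cdot \max_{e \in E} \abs{w(e)-w'(e)} = mn \cdot \norm{w-w'}_\infty,
\]
so the only remaining step is to pass from the $\ell_\infty$-norm to the $\ell_2$-norm.

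The key observation is the elementary inequality $\norm{x}_\infty \le \norm{x}_2$ for every $x \in \RR^m$, which follows because for any coordinate $i$, $\abs{x_i}^2 \le \sum_j \abs{x_j}^2 = \norm{x}_2^2$. Applying this with $x = w - w'$ gives $\norm{w-w'}_\infty \le \norm{w-w'}_2$, and plugging this into the bound from Lemma~\ref{lem:monotone and infty norm} yields the claim. There is no real obstacle here; the statement is essentially a norm-comparison corollary of the earlier coupling argument, and it is presumably stated separately only because the proof of Lemma~\ref{lem:approximation} above invokes the bound in its $\ell_2$ form to control the perturbation coming from the $\epsilon$-net.
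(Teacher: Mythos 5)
Your proposal is correct and is exactly the paper's argument: the paper also derives the lemma directly from Lemma~\ref{lem:monotone and infty norm} via the observation that $\max_{e \in E}\abs{w(e)-w'(e)} \le \norm{w-w'}_2$. Nothing is missing.
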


\begin{proof}
Lemma \ref{lem:lipschitz} can be concluded directly from Lemma \ref{lem:monotone and infty norm} since it is obvious that $\max_{e \in E} \abs{w(e)-w'(e)} \le \norm{w-w'}_2$.
\end{proof}

\section{A Simplified Proof for the TPM Condition}
\label{app:simplified proof}

We give a simplified proof for the TPM condition under the IC model with edge-level feedback, which corresponds to \cite[Theorem 3]{zhengwen2017nips} and the key equation \cite[Lemma 2, (28)]{WeiChen2017}. For completeness, we also give the theorem statement here, which mainly follow the notations of \cite{zhengwen2017nips}.

$f(S,w,v)$ is the influence probability of seed set $S$ to node $v$ when the mean of the weights is vector $w$. $O(e)$ denotes the event that edge $e$ is observed. Recall that an edge $e$ is relevant with $S, v$ means there exists a path $\ell$ from a seed node $s \in S$ to $v$ such that (1) $e \in \ell$ and (2) $\ell$ does not contain another seed node other than $s$. In the following, we use boldface $\bw$ to represent a random realization of the weight vector.

\begin{theorem} (restated)
For any node $v \notin S$,
\begin{align}
f(S, U, v) - f(S, \bar{w}, v) \le \sum_{e \text{ is relevant with } S, v} \mathbb{E}_{\bar{w}}[\bOne{O(e)} \cdot (U(e) - \bar{w}(e)) \mid S]
\end{align}
\end{theorem}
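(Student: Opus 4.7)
The plan is a coupling argument that replaces the intricate pathwise accounting with a single witness edge and one independence calculation. I will draw $X_e \sim \cU[0,1]$ i.i.d.\ for each edge and declare $e$ live under weight vector $w$ iff $X_e \le w(e)$. Let $R_S(w)$ denote the set of nodes reachable from $S$ in the resulting live-edge graph, and write $Y_v(w) = \bOne{v \in R_S(w)}$, so that $f(S,w,v) = \EE{Y_v(w)}$. Because $\bar w \le U$ componentwise, the coupling forces $R_S(\bar w) \subseteq R_S(U)$, hence $Y_v(U) - Y_v(\bar w) \in \set{0,1}$ and $f(S,U,v) - f(S,\bar w,v) = \PP{Y_v(U) > Y_v(\bar w)}$.

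Next I would identify a witness edge on the event $\set{Y_v(U) > Y_v(\bar w)}$. There exists an $S$-to-$v$ path in the $U$-live graph; trimming it to a simple path whose only seed is its starting vertex, at least one edge of this path must cross from $R_S(\bar w)$ into its complement (since $v \notin R_S(\bar w)$). Let $e^*$ be the first such edge, with start $u^* \in R_S(\bar w)$. By construction $e^*$ is relevant to $(S,v)$; it satisfies $X_{e^*} \in (\bar w(e^*), U(e^*)]$ (live under $U$, not under $\bar w$); and $u^*$ is reachable from $S$ in the $\bar w$-live graph without using $e^*$, since $e^*$ is not live under $\bar w$ at all.

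A union bound over relevant edges then gives
\begin{align*}
f(S,U,v) - f(S,\bar w,v) \;\le\; \sum_{e \text{ relevant with } S,v} \PP{X_e \in (\bar w(e), U(e)],\ \tilde O(e)},
\end{align*}
where $\tilde O(e)$ is the event that the start of $e$ is reachable from $S$ in the $\bar w$-live graph restricted to edges $e' \ne e$. Since $\tilde O(e)$ is measurable with respect to $\set{X_{e'}: e' \ne e}$, it is independent of $X_e$, and the joint probability factorizes as $(U(e)-\bar w(e)) \cdot \PP{\tilde O(e)}$. Adding an edge only enlarges reachability, so $\tilde O(e) \subseteq \set{O(e) \text{ under } \bar w}$, and summing yields exactly $\sum_e \EE_{\bar w}[\bOne{O(e)}(U(e)-\bar w(e)) \mid S]$.

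The most delicate step will be the boundary-edge identification, in particular trimming the $U$-live path so that the resulting $e^*$ is genuinely relevant (simple, single-seed), rather than a byproduct of a cycle or of passing through multiple seeds. A secondary subtlety, which the extra definition of $\tilde O(e)$ addresses, is that in a cyclic graph $O(e)$ under $\bar w$ may depend on $X_e$ itself through a cycle containing $e$; replacing $O(e)$ with $\tilde O(e)$ loses nothing by monotonicity and restores the independence needed to extract the clean factor $U(e)-\bar w(e)$.
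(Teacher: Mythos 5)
Your proposal is correct and is essentially the paper's own argument: the same monotone coupling via i.i.d.\ uniforms $X_e$, the same identification of a first witness edge that is live under $U$ but not under $\bar{w}$ with its source already reached under $\bar{w}$ without that edge, and the same union bound over relevant edges. Your explicit introduction of $\tilde O(e)$ to justify the independence factorization is a careful spelling-out of a step the paper leaves implicit, not a different route.
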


\begin{proof}
Note that
\begin{align*}
	f(S, U, v) &= \mathbb{E}_{\bw_1 \sim U} \bOne{v \text{ is influenced under } \bw_1} \,,\\
	f(S, \bar{w}, v) &= \mathbb{E}_{\bw_2 \sim \bar{w}} \bOne{v \text{ is influenced under } \bw_2}\,.
\end{align*}
When we compute the difference of these two terms, we do not need to make these two $\bw$ independent. Specifically, for each edge $e$, we can design $\bw_1, \bw_2$ in the following way. Suppose for each edge $e$ we independently draw a uniform random variable $A(e)$ over $[0,1]$, let
\begin{align*}
	\bw_1(e) &= \bw_2(e) = 1, &\text{ if } A(e) \le \bar{w}(e)\,; \\
	\bw_1(e) &= 1, \bw_2(e) = 0, &\text{ if } A(e) \in (\bar{w}(e), U(e)]\,;\\
	\bw_1(e) &= \bw_2(e) = 0, &\text{ if } A(e) > U(e)\,.
\end{align*}
Such a design of $\bw_2$ would introduce a subgraph of $\bw_1$ and the marginal expected means of $\bw_1, \bw_2$ are $U, \bar{w}$ respectively. Then the difference would become much simpler
\begin{equation*}
	f(S, U, v) - f(S, \bar{w}, v) = \mathbb{E}_{\bw_1,\bw_2 \sim A} [f(S, \bw_1, v) - f(S, \bw_2, v)]
\end{equation*}
and $f(S, \bw_1, v) - f(S, \bw_2, v) = 0$ or $1$. 

$f(S, \bw_1, v) - f(S, \bw_2, v) = 1$ means $f(S, \bw_1, v) = 1$ and $f(S, \bw_2, v) = 0$. Thus for any path $\ell$ from $S$ to $v$ in $\bw_1$, there is an edge $e \in \ell$ such that $e \notin \bw_2$. We take first such $e = (u_1, u_2)$, thus the edges on $\ell$ before $e$ are live in $\bw_2$ and the starting node $u_1$ of $e$ is activated under $\bw_2$ without edge $e$. Therefore there is an edge $e = (u_1, u_2)$ on the path from $S$ to $v$ such that
\begin{enumerate}
	\item $u_1$ is activated by $\bw_2$ on the graph without edge $e$\,;
	\item $\bw_1(e) = 1, \bw_2(e) = 0$\,.
\end{enumerate}
Such an edge $e$ is relevant with $S$ and $v$. 
Thus
\begin{align*}
	&\mathbb{E}_{\bw_1,\bw_2 \sim A}[f(S, \bw_1, v) - f(S, \bw_2, v)] \\
	&\qquad \qquad \qquad \le \sum_{e\mbox{ is relevant with } S,v } \mathbb{E}_{\bw_2}[\bOne{ e \text{ is observed under } \bw_2} \cdot (U(e) - \bar{w}(e))]\,.
	\end{align*}
\end{proof}

With the help of this theorem, we can get the same result of TPM conditions in the work \cite{zhengwen2017nips,WeiChen2017}.

\fi

\end{document}